\documentclass{article}
\usepackage{iclr2027_conference,times}
\usepackage{amsmath,amsfonts,bm}

\def\eqref#1{(\ref{#1})}
\def\1{\bm{1}}

\DeclareMathAlphabet{\mathsfit}{\encodingdefault}{\sfdefault}{m}{sl}
\SetMathAlphabet{\mathsfit}{bold}{\encodingdefault}{\sfdefault}{bx}{n}

\usepackage{amsmath,amssymb,mathtools}
\usepackage{amsthm}
\usepackage{graphicx}
\usepackage{booktabs}
\usepackage{multirow}
\usepackage{array}
\usepackage{float}
\usepackage{microtype}
\usepackage{hyperref}
\usepackage{url}
\usepackage{xcolor}

\newtheorem{theorem}{Theorem}[section]
\newtheorem{lemma}[theorem]{Lemma}

\definecolor{mydarkblue}{rgb}{0,0.08,0.45}
\definecolor{mydarkgreen}{RGB}{0, 139, 69}

\hypersetup{
	colorlinks=true,
	urlcolor=magenta,
	citecolor=mydarkblue,
}

\title{Large Diffusion Models Need More Than EMA}
\author{Chuyu Hu$^{1,3}$, Huanran Chen$^{1}$, Shitong Shao$^{2}$, Yinpeng Dong$^{1}$\\[3pt]
\normalfont $^{1}$Tsinghua University\\
\normalfont $^{2}$The Hong Kong University of Science and Technology\\
\normalfont $^{3}$Shanghai Qi Zhi Institute}

\iclrfinalcopy
\begin{document}
\maketitle
\fancyhead[L]{}
\begingroup
\renewcommand{\thefootnote}{}
\footnotetext{Code: \href{https://github.com/Mosy-hubhub/REPA/tree/REPA-schedulefree}{SiT+REPA pretraining}; \href{https://github.com/Mosy-hubhub/DiffusionOPSD/tree/ReFL-Schedulefree}{ReFL (DiffusionOPSD)}; \href{https://github.com/Mosy-hubhub/dmdr/tree/schedule-dmdr}{DMD (DMDR)}.}
\endgroup

\begin{abstract}
Large-scale diffusion models are commonly trained with a constant learning rate (LR) and exponential moving average (EMA), despite the strong performance of LR decay and Schedule-Free optimization in training large language models. We study whether EMA is uniquely suited to diffusion training, or whether stronger alternatives have been underexplored. Through the lens of variance reduction, we theoretically show that averaging a constant-step trajectory has an inherent optimization limitation, whereas LR decay and Schedule-Free optimization admit vanishing stationarity guarantees under standard nonconvex assumptions. Empirically, LR decay and EMA provide complementary rather than substitutable benefits: decay alone does not outperform EMA with a constant LR, while combining decay with EMA further improves this standard baseline at most tested starting points, although the gain remains sensitive to when decay begins. Schedule-Free avoids this timing requirement and consistently outperforms EMA with a constant LR across training stages, achieving both lower fixed-evaluation objectives and better generation quality. Our analysis further shows that all three strategies substantially reduce parameter variability, yet stronger stabilization is not always better: EMA with a constant LR retains comparatively higher parameter variability, while aggressive decay combined with EMA achieves the lowest checkpoint variability but also weakens continued optimization. Schedule-Free substantially reduces parameter variability while preserving more late-stage progress, providing a better balance between parameter stability and continued optimization. Finally, the benefits of Schedule-Free extend beyond pretraining to ReFL and generator-side DMD, demonstrating its transferability across distinct diffusion optimization settings. Overall, our results suggest that large diffusion models need more than parameter averaging alone: effective optimization requires balancing variance reduction with continued optimization.
\end{abstract}

\section{Introduction}
\label{sec:introduction}

Diffusion architectures have advanced rapidly from the convolutional U-Net backbones used in DDPMs \citep{ronneberger2015unet,ho2020denoising} to Diffusion Transformers such as DiT and SiT \citep{peebles2023scalable,ma2024sit}, alongside improvements in training objectives and representation learning~\citep{karras2022elucidating,yu2025repa}. In contrast, their optimization recipes have changed much less. Many large-scale diffusion models are still trained with a constant learning rate (LR) and evaluated using exponential moving average (EMA) weights, even though LR decay is standard in large-scale language-model pretraining~\citep{hu2024minicpm,wen2025wsd} and Schedule-Free optimization has recently shown strong performance in the same regime~\citep{defazio2024road}.

This discrepancy matters because improving the training recipe can reduce the compute needed to obtain a strong model without changing the architecture, objective, or data. Much recent work on accelerating diffusion training has instead focused on these other components. For example, REPA substantially accelerates Diffusion Transformer training by improving representation learning through alignment with pretrained visual features~\citep{yu2025repa}. By comparison, the optimization recipe itself has received less systematic attention, especially for modern large-scale diffusion-transformer pretraining. As a result, it remains unclear whether the widespread use of constant LR with EMA reflects a genuine optimization advantage or simply a largely inherited training convention.

This motivates our central question: \textbf{does EMA have an optimization advantage specific to large-scale diffusion training, or have stronger alternatives such as LR decay and Schedule-Free optimization simply been underexplored?}

We study these strategies from a unified variance-reduction perspective.
We argue that EMA, LR decay, and Schedule-Free can be understood as
three different approaches to the same underlying problem: persistent variability
in stochastic optimization. Their distinction is not simply whether they stabilize
training, but how they reduce variability and how this interacts with continued
optimization.
EMA reduces variability by averaging a constant-step trajectory while leaving the
underlying optimization dynamics unchanged. LR decay stabilizes the
trajectory more directly by shrinking the updates, but this simultaneously reduces
the magnitude of further optimization steps. Schedule-Free instead combines
continued large-step optimization with averaging over an increasingly long
trajectory, reducing variability without suppressing the underlying optimization
dynamics as aggressively.

Our theoretical analysis places these three strategies within a common framework,
clarifying how each reduces variability and exposing their different trade-offs
between trajectory stability and continued optimization.
In particular, averaging a constant-step trajectory has an inherent optimization
limitation, whereas LR decay and Schedule-Free admit vanishing
stationarity guarantees under standard nonconvex assumptions. This unified view
then motivates our empirical question: which trade-off is most effective for
large-scale diffusion training?

We therefore turn to the empirical consequences of this trade-off in large-scale
diffusion training. Using 675M-parameter SiT-XL/2 + REPA pretraining, we compare
\textsc{Constant LR + EMA}, \textsc{LR Decay + EMA}, and \textsc{Schedule-Free} under controlled
training branches, asking how each strategy affects generation quality,
optimization progress, and parameter variability.

Three findings emerge. First, LR decay and EMA are complementary
rather than interchangeable: decay alone does not surpass \textsc{Constant LR + EMA}, but
\textsc{Decay + EMA} can further improve the standard recipe, with gains that depend on
when decay begins. Second, Schedule-Free provides more consistent improvements,
outperforming \textsc{Constant LR + EMA} across training stages without requiring a
prespecified decay point. Third, the strongest stabilization is not the best
optimization outcome: \textsc{Decay + EMA} achieves the lowest parameter variability
but weakens late-stage progress, whereas Schedule-Free remains stable at a
larger LR scale and continues to make optimization progress. This allows
it to strike a better balance between stability and continued optimization.

Taken together, these results suggest that EMA is highly effective but not uniquely suited to large-scale diffusion optimization. The broader lesson is that reducing trajectory variability is important but insufficient: effective optimization must also preserve continued progress. This perspective helps explain why Schedule-Free performs well in our main pretraining setting and why its benefits extend to ReFL~\citep{xu2023imagereward} and generator-side DMD~\citep{yin2024dmd,jiang2025dmdr}.

\section{Preliminaries}
\label{sec:theory}

\subsection{Why Constant-Step SGD Has a Noise Floor?}
\label{sec:constant_sgd}

To see why variance reduction is necessary, consider the simplest one-dimensional quadratic $L(x)=\frac{1}{2}dx^2$ with $d>0$, and noisy gradients $g_t=dx_t+\xi_t$, where $\xi_t\sim\mathcal{N}(0,\sigma^2)$ are independent. We use this simple quadratic example to make the mechanism explicit; Lemma~\ref{lem:constant_lr_nonconvex}
below gives the corresponding result for general smooth nonconvex objectives.

Constant-step SGD follows
\begin{equation}
x_{t+1}:=x_t-\eta g_t
=\underbrace{(1-\eta d)x_t}_{\text{contraction toward }0}
-\underbrace{\eta\xi_t}_{\text{fresh noise}}.
\label{eq:constant_sgd}
\end{equation}
The two terms have opposite effects. Without noise, $0<\eta d<2$ makes the iterates converge to the optimum $x^\star=0$. With noise, however, even at the optimum we have $x_{t+1}=-\eta\xi_t\neq0$ in general. Thus each step contracts the previous error while injecting fresh noise. \textbf{Constant-step SGD therefore does not settle at the optimum; it fluctuates around it.}

Because old perturbations are geometrically contracted while fresh noise is injected at every step,
\begin{equation}
x_t=(1-\eta d)^t x_0-\eta\sum_{k=0}^{t-1}(1-\eta d)^{t-1-k}\xi_k,
\qquad
x_t\xrightarrow{d}\mathcal{N}\!\left(0,\frac{\eta\sigma^2}{d(2-\eta d)}\right),
\label{eq:stationary_1d}
\end{equation}
for $0<\eta d<2$. Thus, the initialization vanishes, but the accumulated gradient noise does not: stable constant-step SGD converges to a distribution around the optimum rather than a deterministic point.

On the quadratic objective, this stationary variance directly produces excess loss, as
\begin{equation}
\mathbb{E}[L(x_\infty)]
=\frac{d}{2}\operatorname{Var}(x_\infty)
=\frac{\eta\sigma^2}{2(2-\eta d)}.
\label{eq:noise_floor}
\end{equation}
Both the parameter variance and excess loss increase with the learning rate and grow rapidly as $\eta d$ approaches the stability boundary $2$. \textbf{Variance reduction therefore aims to lower this late-stage noise floor without unnecessarily suppressing further optimization progress.}

The following standard smooth nonconvex SGD bound makes the same noise-floor
phenomenon explicit beyond the quadratic example~\citep[cf.][]{ghadimi2013stochastic,bottou2018optimization}.
\begin{lemma}[Noise floor of constant-step SGD]
\label{lem:constant_lr_nonconvex}
Under the standard smooth nonconvex assumptions in Appendix~\ref{app:nonconvex}, constant-step SGD with learning rate $\eta$ satisfies
\begin{equation}
\frac{1}{T}\sum_{t=0}^{T-1}\mathbb{E}\!\left[\|\nabla f(x_t)\|^2\right]
\leq
\underbrace{\frac{2(f(x_0)-f_\star)}{\eta T}}_{\text{optimization error}}
+\underbrace{L\eta\sigma^2}_{\text{noise floor}}.
\label{eq:nonconvex_bound}
\end{equation}
\end{lemma}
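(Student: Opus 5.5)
The proof follows the standard descent-lemma argument for $L$-smooth nonconvex SGD. I assume the standard conditions in Appendix~\ref{app:nonconvex}: $f$ is $L$-smooth and bounded below by $f_\star$; the stochastic gradient $g_t$ is unbiased, $\mathbb{E}[g_t\mid x_t]=\nabla f(x_t)$; and its variance is bounded, $\mathbb{E}[\|g_t-\nabla f(x_t)\|^2\mid x_t]\le\sigma^2$. I also assume the step-size restriction $\eta\le 1/L$, which I expect the appendix to impose. The update is $x_{t+1}=x_t-\eta g_t$.

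\textbf{Step 1 (one-step descent).} $L$-smoothness gives
\[
f(x_{t+1})\le f(x_t)-\eta\langle\nabla f(x_t),g_t\rangle+\tfrac{L\eta^2}{2}\|g_t\|^2 .
\]
Take the expectation conditional on $x_t$. Unbiasedness turns the inner product into $\|\nabla f(x_t)\|^2$. The bias--variance split gives
\[
\mathbb{E}\|g_t\|^2\le\|\nabla f(x_t)\|^2+\sigma^2 .
\]
Together these yield
\[
\mathbb{E}[f(x_{t+1})\mid x_t]\le f(x_t)-\eta\bigl(1-\tfrac{L\eta}{2}\bigr)\|\nabla f(x_t)\|^2+\tfrac{L\eta^2\sigma^2}{2}.
\]

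\textbf{Step 2 (absorb the curvature term).} With $\eta\le1/L$ we have $1-\tfrac{L\eta}{2}\ge\tfrac12$, so the coefficient of the gradient term is at least $\eta/2$. Taking total expectations gives
\[
\tfrac{\eta}{2}\mathbb{E}\|\nabla f(x_t)\|^2\le\mathbb{E}f(x_t)-\mathbb{E}f(x_{t+1})+\tfrac{L\eta^2\sigma^2}{2}.
\]

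\textbf{Step 3 (telescope).} Sum over $t=0,\dots,T-1$. The function values telescope, and $f(x_T)\ge f_\star$ bounds the remaining endpoint term by $f(x_0)-f_\star$. Multiplying by $2/(\eta T)$ produces exactly
\[
\frac{2(f(x_0)-f_\star)}{\eta T}+L\eta\sigma^2 ,
\]
which is the bound in \eqref{eq:nonconvex_bound}.

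\textbf{Main obstacle.} The only delicate point is matching the constants to the appendix's assumptions. If the appendix instead bounds the second moment, $\mathbb{E}\|g_t\|^2\le G^2$, or uses no explicit step-size cap, the factor $(1-L\eta/2)$ must be handled differently. My plan is to state $\eta\le 1/L$ explicitly, since that is what yields the clean factor of $2$ and the noise floor $L\eta\sigma^2$. The conditional-expectation (tower property) step also needs $x_t$ to be measurable with respect to the past noise, which holds by construction.
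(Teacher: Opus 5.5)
Your proposal is correct and follows the paper's proof essentially step for step. Both arguments use the $L$-smoothness descent inequality, condition on $x_t$ using unbiasedness and $\mathbb{E}[\|g_t\|^2\mid x_t]\le\|\nabla f(x_t)\|^2+\sigma^2$, invoke the cap $\eta\le 1/L$ (which the appendix does impose) to get the coefficient $\eta/2$, and telescope with $f(x_T)\ge f_\star$ before dividing by $\eta T/2$.
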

The optimization error vanishes as $T$ grows, but the noise floor remains under a fixed learning rate. Thus, longer training alone cannot eliminate the residual error induced by stochastic gradients.

\subsection{EMA Averages a Noisy Trajectory}
\label{sec:ema_theory}

EMA does not change the online optimization dynamics. Instead, it forms a smoother evaluation point by averaging past iterates
\begin{equation}
x_t^{\mathrm{EMA}}:=\beta x_{t-1}^{\mathrm{EMA}}+(1-\beta)x_t,
\qquad 0<\beta<1.
\end{equation}
For Eq.~\eqref{eq:constant_sgd}, let $b:=1-\eta d$ and define
$m_t:=\beta^t x_0^{\mathrm{EMA}}+(1-\beta)\sum_{j=1}^t
\beta^{t-j}b^j x_0$ and
$w_{k,t}:=\sum_{j=k+1}^t\beta^{t-j}b^{j-1-k}$. Expanding the two
recursions gives the exact EMA iterate
\begin{equation}
x_t^{\mathrm{EMA}}
=\underbrace{m_t}_{\text{mean trajectory / optimization process}}
-\underbrace{\eta(1-\beta)\sum_{k=0}^{t-1}w_{k,t}\xi_k}_{\text{stochastic fluctuation / variance}}.
\label{eq:main_ema_closed_form}
\end{equation}
Thus, the online trajectory continues to fluctuate as before, while EMA averages out part of this fluctuation. For the quadratic model in Section~\ref{sec:constant_sgd}, the stationary variance is reduced by
\begin{equation}
\frac{\operatorname{Var}(x_\infty^{\mathrm{EMA}})}{\operatorname{Var}(x_\infty)}
=\frac{(1-\beta)(1+\beta b)}{(1+\beta)(1-\beta b)}<1,
\qquad b=1-\eta d,
\end{equation}
with the derivation deferred to Appendix~\ref{app:ema}.

Variance reduction, however, does not imply that EMA corrects the optimization bias of the underlying constant-step trajectory.

\begin{lemma}[EMA does not in general remove constant-step optimization bias]
\label{lem:ema_bias}
Consider the smooth strongly convex objective
\[
f(x)=\frac{x^2}{2}+a(\sin x-x),\qquad 0<a<1.
\]
Its unique minimizer is $x^\star=0$. For constant-step SGD
$X_{t+1}=X_t-h\{f'(X_t)+\xi_t\}$ with
$\xi_t\overset{\mathrm{i.i.d.}}{\sim}\mathcal{N}(0,\sigma^2)$,
$\sigma^2>0$, and $0<h\leq1/(1+a)$, let $U$ follow its invariant
distribution. Its mean satisfies
\[
m_h:=\mathbb{E}[U]
=a\bigl(1-\mathbb{E}[\cos U]\bigr)>0=x^\star.
\]
Moreover, if $Y_t=\beta Y_{t-1}+(1-\beta)X_t$ is the \emph{EMA} of a stationary
trajectory, then $\mathbb{E}[Y_t]\to m_h$. Since $f$ is
$(1-a)$-strongly convex,
\[
\mathbb{E}[f(Y_t)]-f(x^\star)
\geq
f(\mathbb{E}[Y_t])-f(x^\star)
\;\longrightarrow\;
f(m_h)-f(x^\star)
\geq
\frac{1-a}{2}m_h^2
>0.
\]
Thus, EMA can reduce fluctuations around the stationary mean, but averaging alone
does not in general eliminate the residual optimization error of the underlying
constant-step trajectory.
\end{lemma}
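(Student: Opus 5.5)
The plan is to verify the objective's basic properties, then obtain the mean of the invariant law by taking expectations in the stationary recursion, and finally transfer the bias to the EMA using linearity of the average and strong convexity.

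\textbf{Step 1: basic properties of $f$.} We have
\[
f'(x)=x+a(\cos x-1),\qquad f''(x)=1-a\sin x\in[1-a,1+a].
\]
Hence $f$ is $(1-a)$-strongly convex and $(1+a)$-smooth. Since $f'(0)=0$, the point $x^\star=0$ is the unique minimizer.

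\textbf{Step 2: existence and integrability of the invariant law.} The chain is $X_{t+1}=\phi(X_t)-h\xi_t$ with $\phi(x)=x-hf'(x)$. Because $\phi'(x)=1-hf''(x)\in[1-h(1+a),\,1-h(1-a)]\subset[0,\,1-h(1-a)]$ when $h\le 1/(1+a)$, the map $\phi$ is a contraction with modulus $\rho=1-h(1-a)<1$. The noise is Gaussian, so a standard synchronous coupling argument applies: two copies driven by the same $\xi_t$ satisfy $|X_t-X'_t|\le\rho^t|X_0-X'_0|$. This gives a unique invariant distribution with finite second moment (Wasserstein contraction). In particular, $U$ is integrable.

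\textbf{Step 3: the mean of $U$.} Take $X_t\sim U$ and independent of $\xi_t$, so that $X_{t+1}\overset{d}{=}U$. Taking expectations in $X_{t+1}=X_t-h\{X_t+a(\cos X_t-1)+\xi_t\}$ gives $\mathbb{E}[f'(U)]=0$, that is,
\[
m_h=a\bigl(1-\mathbb{E}[\cos U]\bigr).
\]
For strict positivity, $1-\cos u\ge0$ with equality only on $2\pi\mathbb{Z}$, a Lebesgue-null set. So it suffices that $U$ is not concentrated on $2\pi\mathbb{Z}$. This holds because $U\overset{d}{=}\phi(U)-h\xi$ with $\xi$ independent and $h^2\sigma^2>0$; hence $U$ is a Gaussian convolution and has a density. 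Therefore $\mathbb{E}[1-\cos U]>0$ and $m_h>0$.

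\textbf{Step 4: mean of the EMA.} With $X_t$ stationary,
\[
\mathbb{E}[Y_t]=\beta^{t}\,\mathbb{E}[Y_0]+(1-\beta)\sum_{j=1}^{t}\beta^{t-j}m_h=\beta^t\,\mathbb{E}[Y_0]+(1-\beta^t)m_h\to m_h.
\]
If the chain is only asymptotically stationary, the same conclusion follows from $\mathbb{E}[X_t]\to m_h$, which holds by the Wasserstein convergence of Step 2 together with a Cesàro/Toeplitz argument for the geometric weights.

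\textbf{Step 5: conclusion.} Jensen's inequality, using convexity of $f$, gives $\mathbb{E}[f(Y_t)]\ge f(\mathbb{E}[Y_t])$. By continuity of $f$, $f(\mathbb{E}[Y_t])\to f(m_h)$. Strong convexity with $f'(0)=0$ gives
\[
f(m_h)-f(0)\ge \tfrac{1-a}{2}m_h^2>0.
\]

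The main obstacle is Step 2 together with the strict positivity in Step 3. One must justify the existence and integrability of the invariant distribution, and show that it is not degenerate on the zero set of $1-\cos$. I would handle both through the contraction of $\phi$ under $h\le 1/(1+a)$ and the absolute continuity coming from the additive Gaussian noise.
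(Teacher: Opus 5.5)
Your proposal is correct and follows essentially the same route as the paper: a synchronous-coupling $W_2$ contraction (from $0\le\phi'\le 1-h(1-a)$) gives existence and uniqueness of the invariant law, taking expectations in the stationary recursion gives $\mathbb{E}[f'(U)]=0$, additive Gaussian noise rules out concentration on $2\pi\mathbb{Z}$, and a linear recursion handles the EMA mean. Your explicit Jensen and strong-convexity step, and your remark on the asymptotically stationary case, fill in parts the paper's appendix proof leaves implicit. You omit the paper's explicit second-moment bound, but the lemma as stated does not require it.
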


The proof is given in Appendix~\ref{app:ema_bias}. Hence, EMA improves the stability of the \emph{evaluation weights} without changing the optimization dynamics that generate them.

\subsection{Learning-Rate Decay Shrinks the Noisy Trajectory}
\label{sec:lr_decay_theory}

Unlike EMA, LR decay changes the online optimization dynamics themselves. With a time-varying learning rate $\eta_t$, the quadratic model in Section~\ref{sec:constant_sgd} becomes
\begin{equation}
x_{t+1}:=(1-\eta_t d)x_t-\eta_t\xi_t.
\label{eq:lr_decay_sgd}
\end{equation}
Both the deterministic update and the injected noise are now scaled by $\eta_t$. In particular, the fresh noise has variance $\eta_t^2\sigma^2$, which vanishes as $\eta_t\to0$. Thus, LR decay reduces variance by shrinking the trajectory itself rather than averaging a noisy trajectory after the fact.

For example, consider the analytically tractable schedule
$\eta_t=\eta/(2+t\eta d)$ with $0<\eta d<2$. For every $t\geq1$,
the iterate has the exact form
\begin{equation}
x_t
=\underbrace{\frac{2-\eta d}{2+(t-1)\eta d}x_0}_{\text{mean trajectory / optimization process}}
-\underbrace{\frac{\eta}{2+(t-1)\eta d}
\sum_{k=0}^{t-1}\xi_k}_{\text{stochastic fluctuation / variance}}.
\label{eq:main_decay_iterate}
\end{equation}
Consequently,
\begin{equation}
\mathbb{E}[x_t]=\frac{2-\eta d}{2+(t-1)\eta d}x_0=O(t^{-1}),
\qquad
\operatorname{Var}(x_t)=\frac{t\eta^2\sigma^2}{\bigl(2+(t-1)\eta d\bigr)^2}=O(t^{-1}).
\label{eq:main_decay_closed_form}
\end{equation}
Hence both the optimization error and parameter variance vanish asymptotically. The same mechanism also reveals the central trade-off: reducing $\eta_t$ suppresses stochastic fluctuations, but simultaneously reduces the magnitude of further optimization steps.

The following standard LR-decay SGD guarantee formalizes this behavior;
cf.~\citet{ghadimi2013stochastic,bottou2018optimization}.
\begin{lemma}[Nonconvex convergence under LR decay]
\label{lem:lr_decay_nonconvex}
Under the standard smooth nonconvex assumptions in Appendix~\ref{app:decay_nonconvex}, SGD with learning rates $\{\eta_t\}$ satisfies
\begin{equation}
\frac{\sum_{t=0}^{T-1}\eta_t\,\mathbb{E}\!\left[\|\nabla f(x_t)\|^2\right]}{\sum_{t=0}^{T-1}\eta_t}
\leq
\frac{2\bigl(f(x_0)-f_\star\bigr)}{\sum_{t=0}^{T-1}\eta_t}
+L\sigma^2\frac{\sum_{t=0}^{T-1}\eta_t^2}{\sum_{t=0}^{T-1}\eta_t}.
\label{eq:main_decay_nonconvex}
\end{equation}
If $\sum_t\eta_t=\infty$ and $\sum_t\eta_t^2<\infty$, both terms vanish, yielding convergence to stationarity.
\end{lemma}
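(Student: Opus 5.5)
The plan is the standard descent-lemma argument, with the constant step of Lemma~\ref{lem:constant_lr_nonconvex} replaced by $\eta_t$. The assumptions in Appendix~\ref{app:decay_nonconvex} are presumably the same as in Appendix~\ref{app:nonconvex}. These are: $f$ is $L$-smooth and bounded below by $f_\star$; the stochastic gradient $g_t$ is unbiased given the past, $\mathbb{E}[g_t\mid\mathcal{F}_t]=\nabla f(x_t)$; its conditional variance is at most $\sigma^2$; and the step sizes satisfy $\eta_t\le 1/L$. The update is $x_{t+1}=x_t-\eta_t g_t$.

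First apply $L$-smoothness:
\[
f(x_{t+1})\le f(x_t)-\eta_t\langle\nabla f(x_t),g_t\rangle+\frac{L\eta_t^2}{2}\|g_t\|^2 .
\]
Take the conditional expectation given $\mathcal{F}_t$. Unbiasedness gives $\mathbb{E}[\langle\nabla f(x_t),g_t\rangle\mid\mathcal{F}_t]=\|\nabla f(x_t)\|^2$, and the bias--variance split gives $\mathbb{E}[\|g_t\|^2\mid\mathcal{F}_t]\le\|\nabla f(x_t)\|^2+\sigma^2$. Together,
\[
\mathbb{E}[f(x_{t+1})\mid\mathcal{F}_t]\le f(x_t)-\eta_t\Bigl(1-\frac{L\eta_t}{2}\Bigr)\|\nabla f(x_t)\|^2+\frac{L\eta_t^2\sigma^2}{2}.
\]
With $\eta_t\le1/L$ we have $1-L\eta_t/2\ge1/2$, so the gradient term is at least $\frac{\eta_t}{2}\|\nabla f(x_t)\|^2$. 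This step-size restriction is the one point to check: the constant in front of $\sigma^2$ must come out as exactly $L$ after rescaling, and this requires $\eta_t\le1/L$ for every $t$. I would state that assumption explicitly if the appendix does not already.

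Next, take total expectations and sum from $t=0$ to $T-1$. The sum telescopes:
\[
\frac12\sum_{t=0}^{T-1}\eta_t\,\mathbb{E}\|\nabla f(x_t)\|^2\le f(x_0)-\mathbb{E}[f(x_T)]+\frac{L\sigma^2}{2}\sum_{t=0}^{T-1}\eta_t^2 .
\]
Bound $\mathbb{E}[f(x_T)]\ge f_\star$, multiply by $2$, and divide by $\sum_{t<T}\eta_t$. This yields exactly \eqref{eq:main_decay_nonconvex}.

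For the final claim, suppose $\sum_t\eta_t=\infty$ and $\sum_t\eta_t^2<\infty$. The first term is a constant over a divergent denominator, so it vanishes. The second term has a bounded numerator over the same divergent denominator, so it also vanishes. The left side is a weighted average with weights $\eta_t/\sum_s\eta_s$, so it upper-bounds $\min_{t<T}\mathbb{E}\|\nabla f(x_t)\|^2$ and upper-bounds $\mathbb{E}\|\nabla f(x_\tau)\|^2$ for an index $\tau$ sampled with probability proportional to $\eta_t$. Both quantities therefore tend to zero, which is the claimed convergence to stationarity.

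I do not expect any step to be a genuine obstacle. The only care needed is the conditioning argument that makes the cross term unbiased, and the uniform step-size bound $\eta_t\le1/L$ that fixes the constants.
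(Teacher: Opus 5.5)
Your proposal is correct and follows the paper's proof of Theorem~\ref{thm:decay_nonconvex} essentially step for step. The shared steps are the $L$-smoothness descent inequality, conditional unbiasedness with $\mathbb{E}[\|g_t\|^2\mid\mathcal{F}_t]\le\|\nabla f(x_t)\|^2+\sigma^2$, the restriction $0<\eta_t\le 1/L$ (which the appendix states explicitly), telescoping with $\mathbb{E}f(x_T)\ge f_\star$, and the weighted-average/random-iterate argument for the limiting claim.
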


The inverse-time schedule above is used only to make the mechanism explicit. In practice, diffusion training often applies decay only near the end of training; our experiments therefore study finite late-stage cosine decays and their sensitivity to the decay starting point.

\subsection{Schedule-Free Combines Trajectory Averaging with Large-Step Optimization}

LR decay reduces variance by shrinking the optimization steps themselves. Schedule-Free instead keeps a constant base learning rate while averaging the optimization trajectory over an increasingly long window~\citep{defazio2024road}. In its SGD form, it maintains
\begin{align}
y_t&:=(1-\beta)z_t+\beta x_t,\\
z_{t+1}&:=z_t-\gamma g(y_t,\xi_t),\\
x_{t+1}&:=\left(1-\frac{1}{t+1}\right)x_t+\frac{1}{t+1}z_{t+1},
\label{eq:schedule_free}
\end{align}
where $z_t$ is the optimization sequence and $x_t$ is the averaged sequence used for evaluation. The key distinction is immediate: $z_t$ continues to take constant-scale optimization steps, while $x_t$ averages over a growing portion of the trajectory.

For the quadratic model in Section~\ref{sec:constant_sgd}, if $a:=1-\gamma d(1-\beta)\in(0,1)$, then
\begin{equation}
\mathbb{E}[x_t]=O(t^{-1}),
\qquad
\operatorname{Var}(x_t)=O(t^{-1}),
\qquad
\mathbb{E}[L(x_t)]=O(t^{-1}),
\end{equation}
with the derivation given in Appendix~\ref{app:schedule_free}. Thus, the averaged sequence becomes increasingly stable even though the optimization sequence continues to use a constant base learning rate.

The following result is restated from \citet[Theorem~4.4]{chae2026understanding},
with minor notation changes for consistency with our presentation.
\begin{lemma}[Nonconvex convergence of Schedule-Free SGD]
\label{lem:schedule_free_nonconvex}
Under the standard smooth nonconvex assumptions stated in Appendix~\ref{app:sf_nonconvex}, \emph{Schedule-Free} SGD admits a vanishing stationarity guarantee for its gradient-evaluation sequence.
\end{lemma}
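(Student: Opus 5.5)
The plan is to follow the Lyapunov/potential argument behind \citet[Theorem~4.4]{chae2026understanding}: control $f$ along the gradient-evaluation sequence $y_t$, which is where the stochastic gradients are queried.

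\textbf{Step 1: rewrite the recursion.} Since $x_{t+1}=(1-\frac{1}{t+1})x_t+\frac{1}{t+1}z_{t+1}$, we have $x_{t+1}-x_t=\frac{1}{t+1}(z_{t+1}-x_t)$. Hence
\[
y_{t+1}-y_t=(1-\beta)(z_{t+1}-z_t)+\beta(x_{t+1}-x_t)=-(1-\beta)\gamma g_t+\frac{\beta}{t+1}(z_{t+1}-x_t),
\]
where $g_t:=g(y_t,\xi_t)$. Define $\Delta_t:=z_t-x_t$. Then $y_t=x_t+(1-\beta)\Delta_t$, and $\Delta_{t+1}=\frac{t}{t+1}(\Delta_t-\gamma g_t)$.

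\textbf{Step 2: one-step descent.} Apply $L$-smoothness at $y_t$ and take conditional expectation, using unbiasedness and $\mathbb{E}\|g_t-\nabla f(y_t)\|^2\le\sigma^2$. This gives
\[
\mathbb{E}f(y_{t+1})\le \mathbb{E}f(y_t)-(1-\beta)\gamma\,\mathbb{E}\|\nabla f(y_t)\|^2+\frac{\beta}{t+1}\mathbb{E}\langle\nabla f(y_t),\Delta_{t+1}\rangle+O(L\gamma^2\sigma^2)+O\!\left(\frac{L}{(t+1)^2}\mathbb{E}\|\Delta_{t+1}\|^2\right).
\]

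\textbf{Step 3: control the drift term.} Use Young's inequality,
\[
\frac{\beta}{t+1}\langle\nabla f,\Delta\rangle\le\frac{(1-\beta)\gamma}{2}\|\nabla f\|^2+\frac{\beta^2}{2(1-\beta)\gamma(t+1)^2}\|\Delta\|^2.
\]
Then bound $\mathbb{E}\|\Delta_t\|^2$ from the recursion in Step 1. Unrolling gives $\Delta_{t}=-\frac{\gamma}{t}\sum_{k<t}(k+1)g_k$ (up to indexing), so
\[
\mathbb{E}\|\Delta_t\|^2\lesssim \gamma^2\Bigl(t\sigma^2+\frac{1}{t^2}\mathbb{E}\Bigl\|\sum_k (k+1)\nabla f(y_k)\Bigr\|^2\Bigr),
\]
assuming either bounded gradients or a bound that can be absorbed. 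Under the bounded-gradient assumption typical for this result ($\|\nabla f\|\le G$), we get $\mathbb{E}\|\Delta_t\|^2=O(\gamma^2 t^2(G^2+\sigma^2))$ at worst. This makes the weighted term $\frac{1}{(t+1)^2}\mathbb{E}\|\Delta_{t+1}\|^2=O(\gamma^2(G^2+\sigma^2))$, i.e.\ of the same order as the noise term.

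\textbf{Step 4: telescope and tune.} Summing over $t<T$ and dividing by $(1-\beta)\gamma T/2$ yields
\[
\frac1T\sum_{t<T}\mathbb{E}\|\nabla f(y_t)\|^2\lesssim\frac{f(y_0)-f_\star}{(1-\beta)\gamma T}+C_\beta\,\gamma\,(L\sigma^2+G^2+L\sigma^2).
\]
Choosing $\gamma\propto 1/\sqrt{T}$, or equivalently running with a fixed horizon, gives a rate of order $1/\sqrt T$, which vanishes. This is the vanishing stationarity guarantee for $y_t$. Note that with a truly constant $\gamma$ independent of $T$ the bound still has a floor; the cited theorem presumably allows $\gamma$ to depend on $T$, and I would state it that way.

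\textbf{Main obstacle.} The hardest step is Step 3. The crude bound $\|\Delta_t\|=O(\gamma t)$ is only barely enough: it makes the $\frac{1}{(t+1)^2}\|\Delta\|^2$ term constant order in $\gamma^2$ rather than decaying. To do better I would first try the sharper route. Since $\Delta_t$ is a weighted average of past gradients scaled by $\gamma$, its martingale noise part has variance $O(\gamma^2 t\sigma^2)$. Only the drift part grows linearly, and that drift can be charged back against $\sum\|\nabla f(y_k)\|^2$ via Cauchy--Schwarz with weights $(k+1)/t$. This lets it be absorbed into the left side for $\gamma\le c(1-\beta)/(L\beta)$, avoiding the bounded-gradient assumption if the appendix does not grant it.
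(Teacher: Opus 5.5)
There is a genuine gap in Steps 3--4. Your Young split puts the factor $\frac{\beta^2}{2(1-\beta)\gamma(t+1)^2}$ in front of $\|\Delta_t\|^2$, and that factor carries a $1/\gamma$. With $x_0=z_0$ your recursion gives exactly $\Delta_t=-\frac{\gamma}{t}\sum_{k<t}k\,g_k$. When past gradients are aligned, the drift part really is of size $\gamma(t-1)G/2$. The Young remainder is then of order $\frac{\beta^2}{1-\beta}\gamma G^2$ per step. That is the order of the descent term $(1-\beta)\gamma\,\mathbb{E}\|\nabla f(y_t)\|^2$, not of the noise term $L\gamma^2\sigma^2$. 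Your ``same order as the noise term'' claim holds for the $\frac{L}{(t+1)^2}\|\Delta\|^2$ term coming from the quadratic part of smoothness, but not for the Young remainder. After dividing by $(1-\beta)\gamma T/2$ you get an additive constant of about $\beta^2G^2/(4(1-\beta)^2)$, independent of $\gamma$ and $T$. So Step 4 is not $C_\beta\gamma(\dots+G^2)$, and the bound does not vanish. It is even vacuous, since $\|\nabla f\|^2\le G^2$ holds trivially. Bounded gradients are also not among the assumptions of Theorem~\ref{thm:sf_nonconvex_main}.

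The sharper route does not repair this. The martingale part is harmless. Since $\mathbb{E}\|\Delta_t^{\mathrm{noise}}\|^2\le\gamma^2t\sigma^2/3$, after the Young factor and summation it contributes a term of order $\sigma^2\log T/T$, the same shape as the last term of \eqref{eq:sf_nonconvex_chae}. For the drift part, Cauchy--Schwarz gives $\|\Delta_t^{\mathrm{drift}}\|^2\le\frac{\gamma^2}{2}\sum_{k<t}k\|\nabla f(y_k)\|^2$. Exchanging sums and using $\sum_{t>k}(t+1)^{-2}\le(k+1)^{-1}$, its total contribution is at most $\frac{\beta^2}{4\epsilon}\gamma\sum_k\mathbb{E}\|\nabla f(y_k)\|^2$. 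This must be absorbed by $(1-\beta-\frac{\epsilon}{2})\gamma\sum_k\mathbb{E}\|\nabla f(y_k)\|^2$. Both sides are linear in $\gamma$ and $L$ does not appear, so the condition $\gamma\le c(1-\beta)/(L\beta)$ buys nothing. Absorption succeeds only if $\beta$ is below an absolute constant: at best $\beta<2-\sqrt2\approx0.59$ along this chain. That excludes the default $\beta=0.9$ and the full range $0\le\beta<1$.

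The root cause is that $\frac{\beta}{t+1}\langle\nabla f(y_t),z_t-x_t\rangle$ is genuinely first order in $\gamma$, because $\|z_t-x_t\|$ grows like $\gamma t$. So it cannot be handled as an absolute-value perturbation of $f(y_t)$-descent. The cited Theorem~4.4 must control it by a different, structural argument, since \eqref{eq:sf_nonconvex_chae} has no gradient-norm term and no restriction on $\beta$. The paper gives no argument beyond that citation. You therefore need either such a handle on the drift term or a direct appeal to the theorem.

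Your remark that the $\gamma L(1-\beta)\sigma^2/(1-p_{\mathrm{w}})$ floor forces $\gamma$ to shrink with $T$ (e.g.\ $\gamma\propto T^{-1/2}$) is correct and matches the displayed bound. Also note that your $1/(t+1)$ weights cover only the special case $T_{\mathrm{w}}=1$, $C=1/(1-\beta)$ of the appendix's schedule.
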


The precise bound and assumptions are given in Appendix~\ref{app:sf_nonconvex}. Conceptually, Schedule-Free modifies this trade-off differently from LR decay: it reduces variability through trajectory averaging while retaining large-step optimization dynamics.

\section{Variance Reduction in Diffusion Models}
\label{sec:experiments}

\subsection{Experimental Setup: Modern Large-Scale Diffusion Pretraining}

We evaluate whether the distinctions in Section~\ref{sec:theory} persist in a modern large-scale diffusion-training regime. Our main setting uses a class-conditional SiT-XL/2 model with approximately 675M parameters \citep{ma2024sit}, trained on ImageNet-1K at $256\times256$ resolution~\citep{deng2009imagenet} with flow matching~\citep{lipman2023flow} and REPA~\citep{yu2025repa}. This is a strong modern diffusion-transformer recipe rather than a deliberately weak baseline. Our \textsc{Constant LR + EMA} baseline follows the standard REPA optimization recipe, with the same optimizer, learning rate, and EMA configuration; our main deviation is the smaller global batch size used because of computational constraints. Across all main comparisons, the model, data, objective, batch size, and evaluation protocol are fixed; only the optimization recipe is changed.

We compare three strategies:
\begin{itemize}\setlength{\itemsep}{1pt}\setlength{\parskip}{0pt}
\item \textbf{\textsc{Constant LR + EMA}:} AdamW~\citep{loshchilov2019decoupled} with a constant learning rate of $1\times10^{-4}$ and EMA weights for evaluation.
\item \textbf{\textsc{LR Decay + EMA}:} independent continuations from matched late-stage checkpoints of the \textsc{Constant LR + EMA} trajectory, each using the same 40k-step cosine decay while retaining EMA. This isolates the effect of introducing late-stage decay and allows us to test sensitivity to its starting point.
\item \textbf{\textsc{Schedule-Free} AdamW:} Schedule-Free AdamW~\citep{defazio2024road} with a base learning rate of $4\times10^{-4}$.
\end{itemize}

The learning rates are optimizer-specific rather than numerically matched, since Schedule-Free operates in a different effective LR regime from standard AdamW; the corresponding LR controls are reported in Appendix~\ref{app:lr_stability}.

The main trajectories are trained for up to 800k optimization steps with global batch size 128. We evaluate each method at three levels: parameter variability, optimization quality, and generation quality. Parameter variability measures how strongly the trajectory fluctuates; optimization quality is measured by the REPA objective and fixed-evaluation losses; and generation quality is measured by FID~\citep{heusel2017gans}, sFID~\citep{nash2021generating}, and Inception Score~\citep{salimans2016improved}. Together, these measurements distinguish \emph{reducing parameter fluctuations} from \emph{actually obtaining a better optimized and better generating model}. Full training and evaluation details are provided in Appendix~\ref{app:main_config}.

\subsection{LR Decay Can Further Improve \textsc{Constant LR + EMA}, but the Gain Is Timing-Sensitive}

To test whether late-stage LR decay can improve \textsc{Constant LR + EMA}, we independently branch from several late-stage checkpoints of the same trajectory and apply the same 40k-step cosine schedule to each branch. The learning rate decays from $10^{-4}$ to $10^{-6}$, while all other training settings remain unchanged. We evaluate each endpoint using FID-5K with three independent sampling seeds and report the mean.

\textbf{The benefit of decay is timing-sensitive.}
Figure~\ref{fig:lr_decay_fid} shows that adding the same 40k-step cosine decay at different late-stage checkpoints leads to noticeably different outcomes. \textsc{Decay + EMA} improves FID-5K at four of the five tested starting points, but slightly hurts at one, indicating that both the magnitude and even the sign of the gain depend on when decay begins. Full FID-5K, IS, and sFID results are reported in Appendix~\ref{app:main_fid_results} (Table~\ref{tab:main_fid_results}).
\par\vspace{0.5ex}
\textbf{LR decay and EMA provide complementary gains.}
The new comparison also shows that LR decay and EMA are not competing alternatives. Across all five decay branches, applying EMA to the decayed trajectory consistently improves FID-5K over using the corresponding online model alone. For example, at the 360k$\rightarrow$400k branch, FID-5K improves from 9.4895 without EMA to 9.1248 with EMA. This suggests that explicit LR decay and parameter averaging provide complementary benefits and can be combined effectively.
\vspace{-1mm}
\begin{figure}[t]
\vspace{-2mm}
\centering
\includegraphics[width=\linewidth]{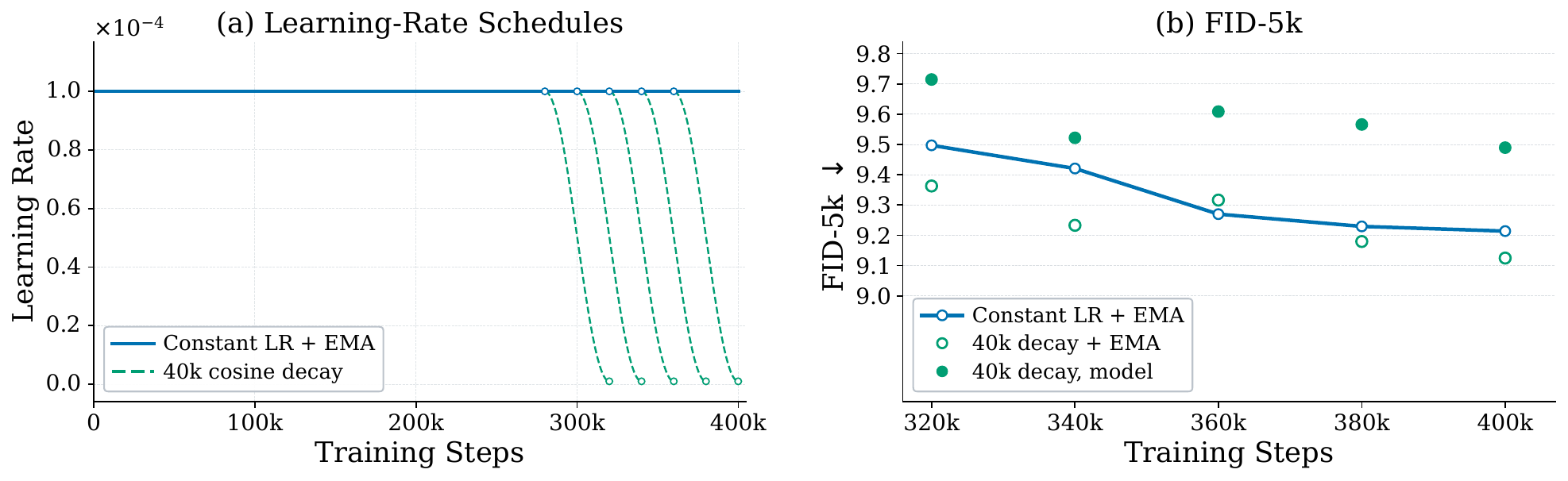}
\caption{Late-stage cosine decay is beneficial but timing-sensitive. (a) Five independent 40k-step cosine-decay branches from the same \textsc{Constant LR + EMA} trajectory. (b) Three-seed mean FID-5K for the matched baseline, decayed online model, and \textsc{Decay + EMA}. EMA further improves all five decay endpoints, while \textsc{Decay + EMA} outperforms the matched baseline at four of five starting points.}
\label{fig:lr_decay_fid}
\vspace{-5mm}
\end{figure}

Thus, late-stage LR decay can improve the standard recipe, but its benefit depends on when annealing begins. We next ask whether Schedule-Free can provide more consistent gains without this timing requirement.

\subsection{Schedule-Free Consistently Improves over \textsc{Constant LR + EMA}}

\textbf{Schedule-Free consistently improves generation quality.}
As shown in Figure~\ref{fig:fid_fixed_eval_composite}(a), \textsc{Schedule-Free AdamW} achieves lower mean FID-5K than \textsc{Constant LR + EMA} at every evaluated checkpoint from 320k to 400k steps, improving FID-5K from 9.2139 to 9.0504 at 400k. While the baseline begins to plateau late in training, Schedule-Free continues to improve, indicating that its advantage is consistent across training stages rather than tied to a single checkpoint.

\begin{figure}[t]
\vspace{-2mm}
\centering
\includegraphics[width=\linewidth]{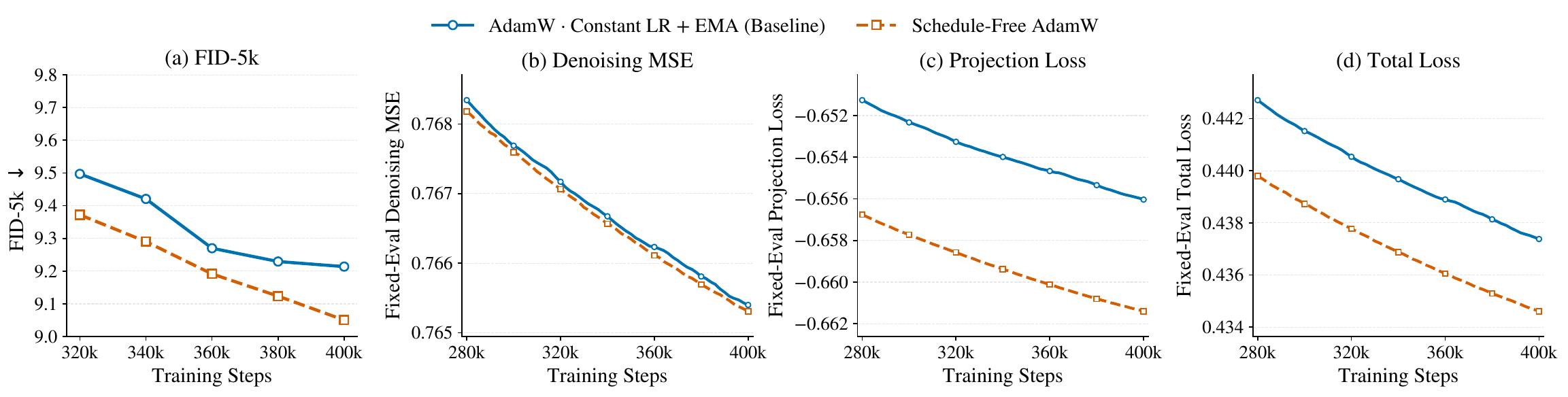}
\caption{Generation quality and fixed-evaluation objectives for \textsc{Constant LR + EMA} and \textsc{Schedule-Free AdamW}. (a) Three-seed mean FID-5K: Schedule-Free is better at every checkpoint from 320k to 400k. (b--d) Fixed-evaluation denoising MSE, projection loss, and total loss: denoising MSE is similar, while Schedule-Free has lower projection and total losses.}
\label{fig:fid_fixed_eval_composite}
\vspace{-3mm}
\end{figure}

\textbf{Schedule-Free also reaches a lower optimization objective.}
Better FID alone does not establish that Schedule-Free reaches a better optimization objective, since parameter averaging could improve sampling quality without reducing the training loss. We therefore evaluate all methods using the same fixed data, noise, class conditions, and timesteps. Under this controlled protocol, Schedule-Free achieves a lower total fixed-evaluation loss throughout the evaluated interval (Figure~\ref{fig:fid_fixed_eval_composite}(b--d)).

Decomposing the objective shows that the denoising MSE remains close between the two methods, while Schedule-Free has a consistently lower projection loss, which accounts for most of the total-loss advantage. To check that the aggregate denoising result does not hide regressions at particular noise levels, we further evaluate fixed timesteps in Appendix~\ref{app:timestep}. Schedule-Free achieves lower denoising MSE at all eight tested timesteps, as well as under random-timestep evaluation.

\textbf{A larger learning rate alone does not explain the gain.}
To control for Schedule-Free's larger base learning rate, we train standard AdamW with the same 5k-step warmup and a constant learning rate of $4\times10^{-4}$. This run becomes unstable and produces sustained NaNs after approximately 218.7k steps, whereas Schedule-Free remains stable at the same learning-rate scale. Thus, the improvement cannot be explained by simply using a larger learning rate with standard AdamW (Appendix~\ref{app:lr_stability}).

\textbf{The advantage persists under longer training.}
At 800k steps, \textsc{Schedule-Free} improves FID-50K from 2.0770 to 1.8735, sFID from 4.539 to 4.479, and Inception Score from 265.6 to 272.7 relative to \textsc{Constant LR + EMA}. Selected samples are shown in Appendix~\ref{app:sf_samples} (Figure~\ref{fig:sf_800k_samples}).

\subsection{Schedule-Free Offers a Better Trade-off than \textsc{Constant LR + EMA} and \textsc{Decay + EMA}}
\label{sec:variance_optimization_tradeoff}

\textbf{Reducing parameter variability helps, but minimizing it is not enough.}
Removing EMA from the \textsc{Constant LR + EMA} baseline substantially degrades FID-5K in our setting. As shown in Appendix~\ref{app:main_fid_results} (Table~\ref{tab:main_fid_results}), the degradation is approximately $1.8$--$4.4$ points across the evaluated checkpoints. In contrast, \textsc{Constant LR + EMA}, \textsc{Decay + EMA}, and \textsc{Schedule-Free AdamW} all substantially reduce parameter variability and achieve much better generation quality. Thus, the relevant question is how to reduce parameter variability without unnecessarily suppressing further optimization progress.

\begin{figure}[H]
\vspace{-3mm}
\begin{minipage}[t]{0.49\linewidth}
\vspace{0pt}
\paragraph{Lower parameter variability does not monotonically improve generation quality.}
Figure~\ref{fig:variability_fid} makes this mismatch explicit. At 400k steps, \textsc{Decay + EMA} has the lowest adjacent-checkpoint variability among the three main strategies, followed by \textsc{Schedule-Free AdamW} and then \textsc{Constant LR + EMA}. Yet the FID-5K ordering is different: \textsc{Schedule-Free AdamW} performs best at 9.0504, compared with 9.1248 for \textsc{Decay + EMA} and 9.2139 for \textsc{Constant LR + EMA}. The same broader pattern holds when the online decay trajectory is included (Appendix~\ref{app:variance}). Thus, stronger trajectory stabilization is beneficial up to a point, but minimizing parameter variability does not by itself produce the best generative model.
\end{minipage}\hfill
\begin{minipage}[t]{0.48\linewidth}
\vspace{0pt}
\centering
\includegraphics[width=\linewidth]{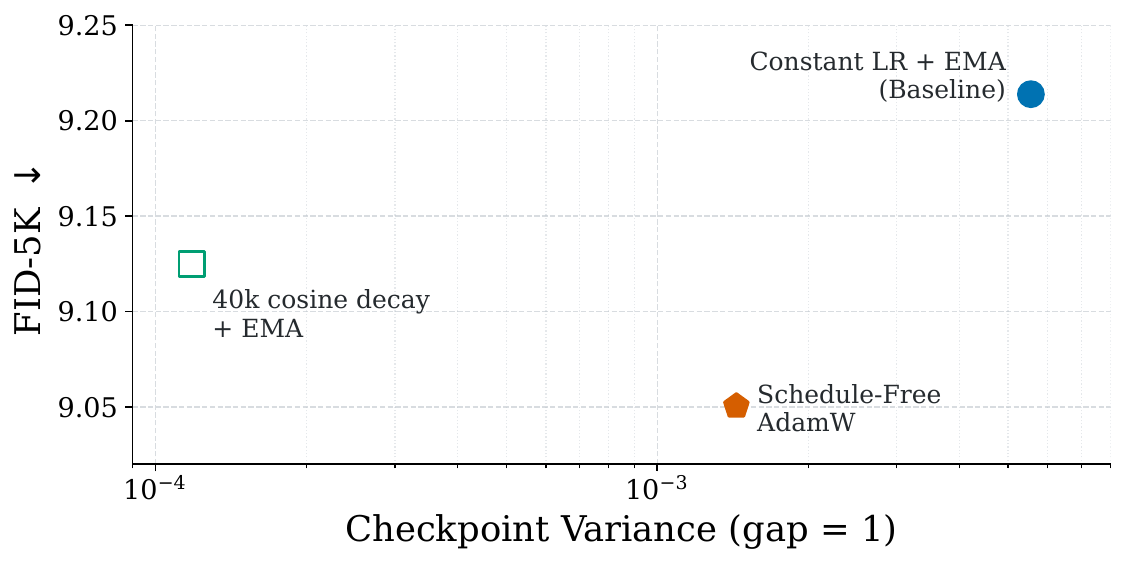}
\caption{Late-stage checkpoint variability versus FID-5K at 400k. Variability is the trace variance between consecutive checkpoints ($g=1$). \textsc{Decay + EMA} is most stable, whereas \textsc{Schedule-Free AdamW} achieves the best FID-5K.}
\label{fig:variability_fid}
\end{minipage}
\vspace{-4mm}
\end{figure}

\textbf{Schedule-Free remains highly stable even at a larger learning-rate scale.}
Notably, the low checkpoint variability of \textsc{Schedule-Free AdamW} in Figure~\ref{fig:variability_fid} is achieved while using a $4\times$ larger base learning rate than the \textsc{Constant LR + EMA} baseline. Despite this more aggressive learning-rate scale, \textsc{Schedule-Free AdamW} still exhibits substantially lower parameter variability than the baseline. Together with the stability control in Appendix~\ref{app:lr_stability}, where standard AdamW becomes unstable at the same learning-rate scale, this suggests that the strong stabilization of \textsc{Schedule-Free} may permit a wider range of stable learning rates. One possible interpretation is that this additional stability margin gives \textsc{Schedule-Free} more room to operate at a larger learning-rate scale, helping preserve continued optimization progress without relying on progressively smaller updates for stability.

\textbf{EMA remains beneficial across training schedules.}
EMA improves generation quality under both constant and decaying learning rates. Even after cosine decay stabilizes the online trajectory, EMA further reduces checkpoint variability and improves FID-5K across all tested branches (Appendix~\ref{app:variance}; Table~\ref{tab:main_fid_results}), confirming that parameter averaging and LR scheduling are complementary rather than interchangeable.

\begin{figure}[t]
\vspace{-4mm}
\centering
\includegraphics[width=\linewidth]{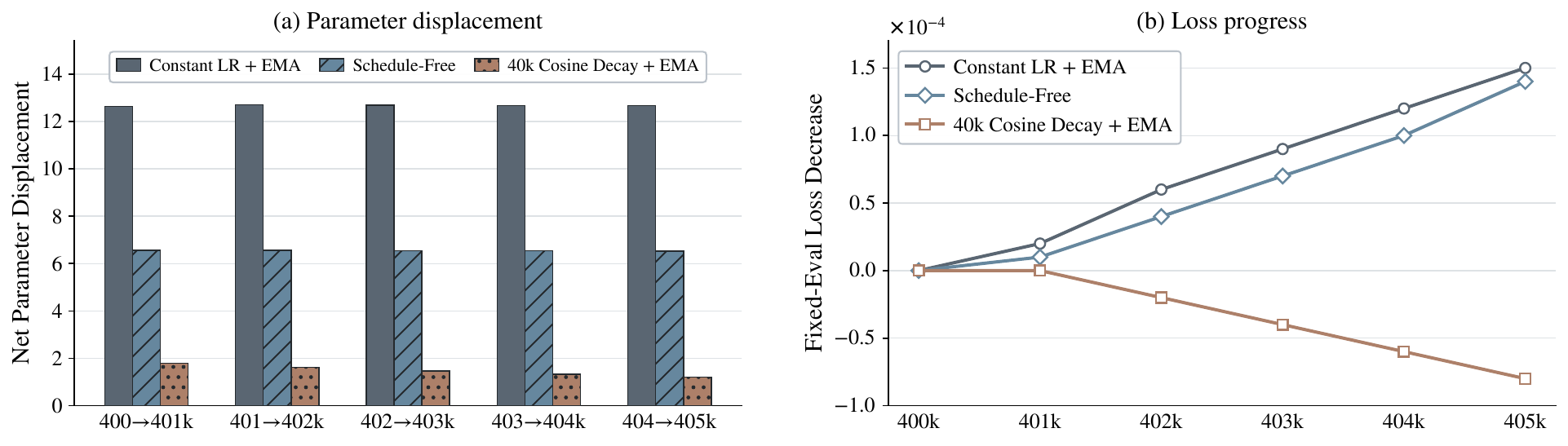}
\caption{Late-stage parameter movement and optimization progress. (a) Net $\ell_2$ displacement over five adjacent 1k-step intervals. (b) Fixed-evaluation loss decrease relative to 400k. \textsc{Decay + EMA} minimizes displacement but does not improve the objective, whereas \textsc{Schedule-Free AdamW} retains continued progress. The comparison uses EMA sequences for constant LR and decay and the averaged $x$ sequence for Schedule-Free; full results are in Appendix~\ref{app:variance}.}
\label{fig:mobility_progress}
\vspace{-7mm}
\end{figure}

\textbf{Aggressive reduction in parameter variability also suppresses optimization mobility.}
The mechanism in Section~\ref{sec:theory} suggests why stronger trajectory stabilization can incur a cost. LR decay scales both the stochastic noise and the deterministic optimization update by $\eta_t$; as $\eta_t$ becomes small, the trajectory becomes more stable but also moves less. Figure~\ref{fig:mobility_progress}(a) shows the corresponding late-stage behavior: across five consecutive 1k-step windows from 400k to 405k, the post-decay EMA trajectory has by far the smallest net parameter displacement, while \textsc{Schedule-Free} retains substantially more movement and \textsc{Constant LR + EMA} moves the most. Thus, the strongest stabilization is also accompanied by the weakest late-stage parameter mobility. Because displacement alone cannot distinguish useful optimization progress from random motion, we next ask whether this movement actually lowers the objective.

\textbf{Schedule-Free better preserves useful late-stage optimization.}
Figure~\ref{fig:mobility_progress}(b) distinguishes useful optimization progress from parameter movement alone. From 400k to 405k, fixed-evaluation loss continues to decrease for both \textsc{Constant LR + EMA} and \textsc{Schedule-Free}, whereas the post-decay EMA run no longer improves and instead increases slightly. Notably, \textsc{Constant LR + EMA} also maintains a higher fixed-evaluation objective than \textsc{Schedule-Free AdamW}, echoing the limitation in Section~\ref{sec:ema_theory} that averaging a constant-step trajectory can reduce parameter variability without in general eliminating residual optimization error. Taken together, Figures~\ref{fig:variability_fid} and~\ref{fig:mobility_progress} show the full trade-off: \textsc{Decay + EMA} produces the strongest stabilization and the least parameter movement, but also the weakest subsequent objective progress, whereas \textsc{Schedule-Free AdamW} substantially reduces variability while continuing to optimize. In our setting, Schedule-Free provides the best empirical balance between these effects.

\subsection{Broader Validation Across Diffusion Training Regimes}

We next test whether the Schedule-Free gains transfer beyond SiT+REPA
pretraining to two distinct diffusion training regimes.

\textbf{ReFL.}
Following the DiffusionOPSD ReFL setup~\citep{xu2023imagereward,zhou2026diffusionopsd}, we replace AdamW with \textsc{Schedule-Free AdamW} while keeping the remaining protocol fixed. AdamW uses the reference LR $1\times10^{-5}$, and \textsc{Schedule-Free AdamW} is evaluated at $4\times$ and $20\times$ this baseline rate. On the directly optimized HPSv3 reward, \textsc{Schedule-Free AdamW} improves the AdamW score from 6.3500 to 6.8626 at $4\times$ LR and to 9.6134 at $20\times$ LR. Cross-reward evaluation is mixed: several orthogonal metrics improve, while others change only slightly or degrade. Complete results are reported in Appendix~\ref{app:refl} (Table~\ref{tab:refl}).

\begin{figure}[t]
\vspace{-4mm}
\centering
\includegraphics[width=0.88\linewidth]{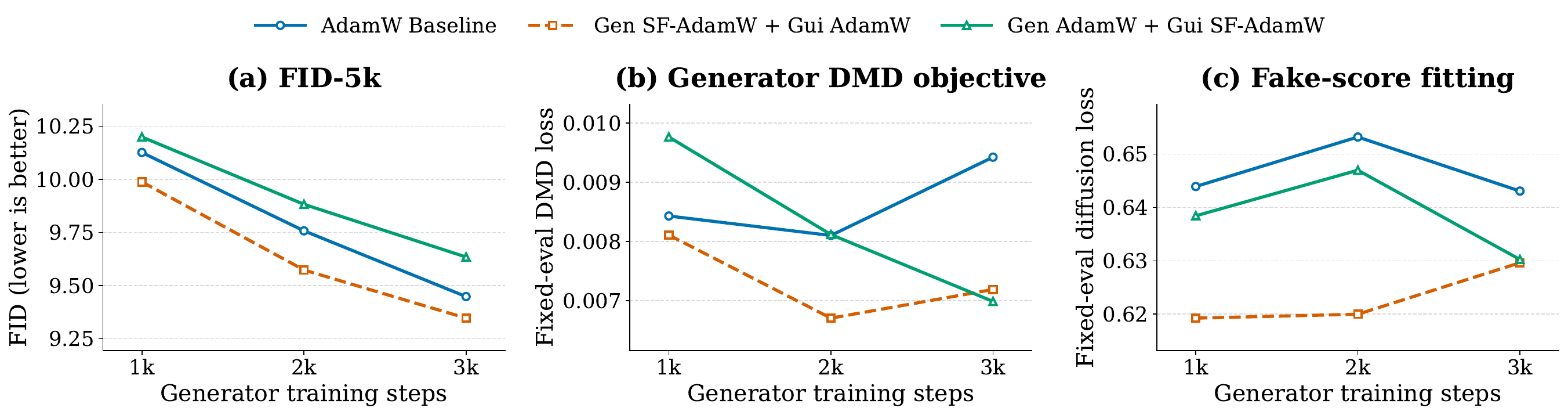}
\caption{DMD optimizer ablation comparing all-AdamW with Schedule-Free applied only to the generator (Gen) or guidance model (Gui). (a) FID-5K: generator-side Schedule-Free improves all checkpoints, whereas guidance-side Schedule-Free does not. (b) Generator DMD objective $\mathcal{L}_{\mathrm{DMD}}$. (c) Fake-score diffusion loss $\mathcal{L}_{\mathrm{diff}}$; definitions are in Appendix~\ref{app:dmd}.}
\label{fig:dmd_combined}
\vspace{-7mm}
\end{figure}

\textbf{DMD.}
DMD~\citep{yin2024dmd} jointly optimizes a generator and a guidance model, with guidance continually tracking the evolving generator distribution. Because the appropriate stopping region is unknown in advance, explicit decay must choose its starting point beforehand, while a fixed EMA timescale requires tuning to the evolving dynamics. Schedule-Free requires no prespecified decay point and continuously provides averaged evaluation parameters, making it natural for this coupled setting.

We compare the all-AdamW one-step DMDR baseline~\citep{jiang2025dmdr} with two variants that apply \textsc{Schedule-Free AdamW} only to the generator or guidance. The replaced component uses $2\times$ its baseline learning rate with a 200-step warmup; all other settings are unchanged (Appendix~\ref{app:dmd}).

Generator-side Schedule-Free improves FID-5K at every evaluated checkpoint and lowers both $\mathcal{L}_{\mathrm{DMD}}$ and $\mathcal{L}_{\mathrm{diff}}$. We hypothesize that Schedule-Free stabilizes the generator optimization trajectory, causing its distribution to evolve more smoothly and thereby presenting the fake-score model with an easier moving target to match. The lower generator objective and fake-score fitting loss in Figure~\ref{fig:dmd_combined}(b--c) are consistent with, but do not establish, this explanation.

The guidance model plays a tracking role, adapting to the evolving fake
distribution rather than directly determining the final evaluation weights.
Improving guidance optimization does not necessarily translate into better
generator samples. Consistent with this distinction, applying \textsc{Schedule-Free} only
to guidance yields worse FID-5K than the all-AdamW baseline at all three
checkpoints. Complete generation metrics are reported in Appendix~\ref{app:dmd}.

\section{Conclusion and Practical Recommendations}

We study \textsc{Constant LR + EMA}, LR decay, and \textsc{Schedule-Free} through a unified
variance-reduction perspective, revealing different trade-offs between parameter
stability and continued optimization. EMA stabilizes through averaging, LR decay
through smaller updates, and \textsc{Schedule-Free} combines strong stabilization with
continued optimization without requiring a prespecified decay point. These differences
lead to the following practical recommendations.

\textbf{When the decay point is uncertain, prefer \textsc{Schedule-Free AdamW}.}
\textsc{Schedule-Free} avoids the need to choose an annealing point in advance. Its strong
stability also makes it worthwhile to explore larger base learning rates than those
typically used with \textsc{Constant LR + EMA}, provided training remains stable. In our main
setting, $4\times10^{-4}$ with 5k-step warmup works well, although other regimes
may require retuning.

\textbf{When the training endpoint is fixed, use decay as a finishing strategy.}
Retaining EMA while annealing the learning rate near the end of training can further
improve the standard \textsc{Constant LR + EMA} recipe. Because the benefit depends on when
annealing begins, decay is most useful when the remaining training budget is known
in advance.

\textbf{Choose for both stability and continued optimization.}
Do not select an optimization strategy solely for the lowest parameter variability.
Prefer a recipe that reduces trajectory fluctuations while still preserving sufficient
late-stage optimization progress.

\subsection*{AI Use Statement}
Generative AI was used for language polishing and stylistic refinement of the manuscript.

\bibliography{references}
\bibliographystyle{iclr2027_conference}

\clearpage

\appendix
\section{Experimental Details}

\subsection{Main Pretraining Configuration}
\label{app:main_config}
All main experiments use the approximately 675M-parameter SiT-XL/2~\citep{ma2024sit} for class-conditional $256\times256$ ImageNet-1K generation~\citep{deng2009imagenet} with a flow-matching + REPA objective~\citep{lipman2023flow,yu2025repa}. The main training trajectories run for up to 800k optimization steps. REPA uses DINOv2 ViT-B features~\citep{oquab2024dinov2} and projection-loss coefficient 0.5. We use no weight decay, global batch size 128, bf16 precision, and gradient clipping with maximum norm 1.0. The dataset, architecture, objective, batch size, precision, and regularization settings are shared across the main comparisons; only the optimizer, learning rate and schedule, and EMA usage differ.

\paragraph{Training objective.}
For a clean image $x$, noise $\epsilon$, diffusion timestep $t$, and class condition $y$, the interpolated input and its flow-matching velocity target are
\begin{equation}
x_t=\alpha_t x+\sigma_t\epsilon,
\qquad
v_t^*=\dot{\alpha}_t x+\dot{\sigma}_t\epsilon,
\end{equation}
where $\alpha_t$ and $\sigma_t$ are the signal and noise coefficients of the interpolation path, and the dots denote derivatives with respect to $t$. The model $v_\theta(x_t,t,y)$ predicts this target velocity. The REPA objective contains two components:
\begin{align}
\mathcal{L}_{\mathrm{MSE}}
&=\operatorname{mean}\!\left[\left\|v_\theta(x_t,t,y)-v_t^*\right\|_2^2\right],
\label{eq:repa_mse}\\
\mathcal{L}_{\mathrm{proj}}
&=-\frac{1}{K}\sum_{k=1}^{K}\operatorname{mean}\!\left[
\left\langle
\frac{z_k}{\|z_k\|_2},
\frac{\tilde z_k}{\|\tilde z_k\|_2}
\right\rangle\right].
\label{eq:repa_proj}
\end{align}
Here, the mean in $\mathcal{L}_{\mathrm{MSE}}$ averages the squared velocity-prediction error over the evaluated samples and tensor elements. In $\mathcal{L}_{\mathrm{proj}}$, $K$ is the number of selected frozen-encoder feature layers, $z_k$ is the target feature from the $k$th selected DINOv2 layer, and $\tilde z_k$ is the corresponding model-projector output. The inner product is computed after $\ell_2$ normalization, so this term is the negative mean cosine similarity. Consequently, lower $\mathcal{L}_{\mathrm{MSE}}$ indicates more accurate flow prediction, while lower $\mathcal{L}_{\mathrm{proj}}$ indicates better representation alignment.

With $\lambda_{\mathrm{proj}}=0.5$, the total objective reported in our loss comparisons is
\begin{equation}
\mathcal{L}_{\mathrm{fixed}}
=\mathcal{L}_{\mathrm{MSE}}+\lambda_{\mathrm{proj}}\mathcal{L}_{\mathrm{proj}}
=\mathcal{L}_{\mathrm{MSE}}+0.5\mathcal{L}_{\mathrm{proj}}.
\label{eq:fixed_eval_objective}
\end{equation}
For fixed-evaluation comparisons, all methods are evaluated using the same data examples, sampled noise, class conditions, and timesteps. Holding these quantities fixed reduces evaluation randomness, so differences in $\mathcal{L}_{\mathrm{MSE}}$, $\mathcal{L}_{\mathrm{proj}}$, and $\mathcal{L}_{\mathrm{fixed}}$ reflect differences between the evaluated model parameters rather than different evaluation draws.

\begin{table}[H]
\caption{Main pretraining optimizer configurations.}
\label{tab:main_config}
\centering
\small
\begin{tabular}{llllll}
\toprule
Method & Optimizer & Base LR & Warmup & LR schedule & EMA\\
\midrule
\textsc{Constant LR + EMA} & AdamW & 1e-4 & None & Constant & 0.9999\\
\textsc{Decay + EMA} & AdamW & 1e-4 & None & cosine to 1e-6 & 0.9999\\
\textsc{Schedule-Free} & SF-AdamW & 4e-4 & 5k linear & Schedule-Free & None\\
\bottomrule
\end{tabular}
\end{table}

\subsection{Learning-Rate Schedules}
\begin{figure}[t]
\centering
\includegraphics[width=\linewidth]{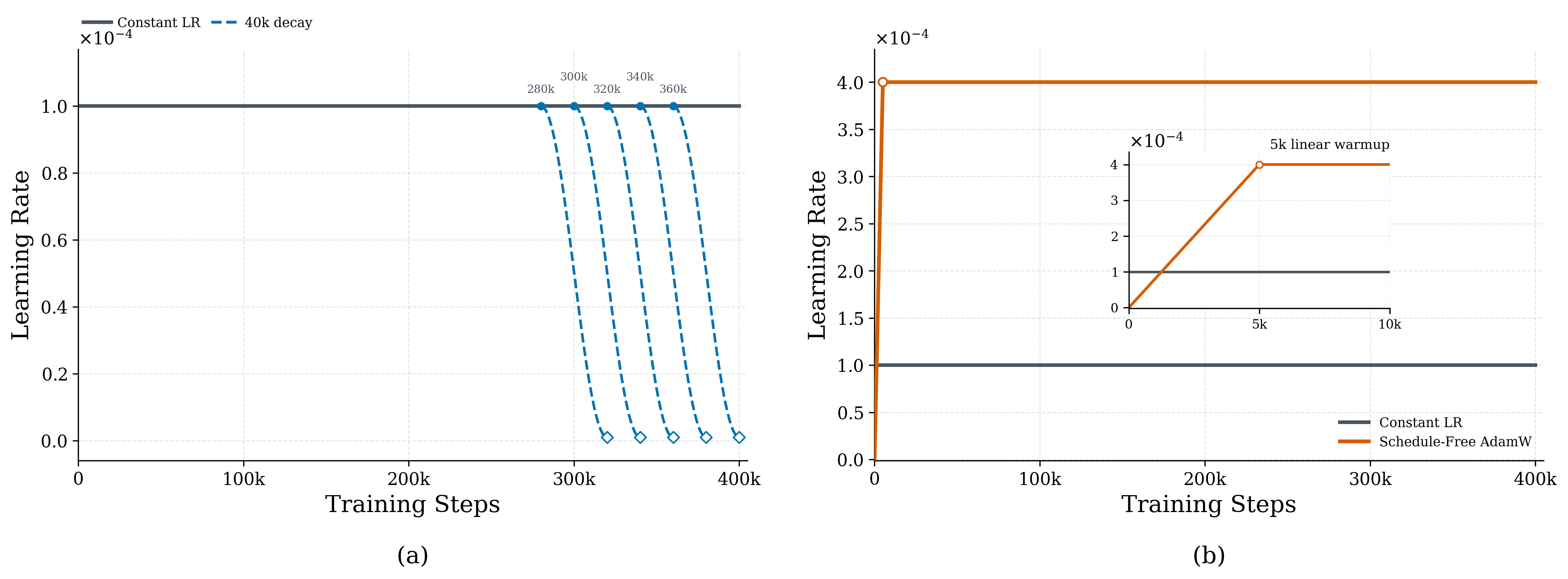}
\caption{Learning-rate schedules used in the main experiment. (a) \textsc{Constant LR + EMA} and independent 40k-step cosine-decay branches from different late-stage checkpoints of the same Constant-LR AdamW trajectory. Each branch decays the learning rate from $10^{-4}$ to $10^{-6}$; different dashed curves indicate different decay starting points. (b) Constant-LR AdamW uses $10^{-4}$ throughout, whereas \textsc{Schedule-Free AdamW} uses a base learning rate of $4\times10^{-4}$ with a 5k-step linear warmup.}
\label{fig:lr_schedules}
\end{figure}

All decay branches share the same trajectory before branch step $t_s$. For $t_s\leq t\leq t_s+T$,
\begin{equation}
\eta_t=\eta_{\min}+\frac12(\eta_0-\eta_{\min})\left[1+\cos\!\left(\pi\frac{t-t_s}{T}\right)\right],
\end{equation}
where $\eta_0=10^{-4}$, $\eta_{\min}=10^{-6}$, and $T=40$k. Main generation results evaluate the endpoint $\theta_{t_s+T}$ directly. Only the checkpoint-variability and late-stage progress diagnostics continue beyond the endpoint at a fixed $10^{-6}$. Schedule-Free warmup is $\eta_t^{\mathrm{SF}}=4\times10^{-4}(t/5000)$ for $0\leq t\leq5000$.

\subsection{Late-Stage Optimization Metrics}
\label{app:variance}
This section gives the complete protocols and numerical results for the
late-stage diagnostics used in Section~\ref{sec:variance_optimization_tradeoff}.
Together, checkpoint variability, net parameter displacement, and
fixed-evaluation loss progress characterize parameter variability, retained
parameter movement, and continued objective improvement.

\subsubsection{Checkpoint Variability}
From each late-stage regime we sample $N=20$ checkpoints $\theta^{(i)}\in\mathbb{R}^D$. For dimension $j$,
\begin{equation}
\bar\theta_j=\frac1N\sum_{i=1}^N\theta_j^{(i)},\qquad
\operatorname{Var}_j=\frac1{N-1}\sum_{i=1}^N(\theta_j^{(i)}-\bar\theta_j)^2,
\end{equation}
and report $V_{\mathrm{total}}=\sum_{j=1}^D\operatorname{Var}_j$. We refer to this trace-variance statistic as \emph{checkpoint variability}. Because it measures finite-window empirical spread, it may include both stochastic fluctuations and systematic movement and is not identified with pure theoretical stochastic variance.

The checkpoint gap $g$ denotes the index spacing between samples in the saved-checkpoint sequence; $g=1$ therefore selects consecutive saved checkpoints. We use 20 such checkpoints, $\theta_t,\ldots,\theta_{t+19}$. This short adjacent-checkpoint window emphasizes local parameter variability without introducing the larger systematic movement accumulated over longer gaps. Constant-LR and Schedule-Free checkpoints come directly from their late-stage trajectories. For cosine decay only, this diagnostic continues from the decay endpoint at a fixed $10^{-6}$ terminal rate.

\begin{table}[H]
\caption{Checkpoint variability over 20 consecutive saved checkpoints (checkpoint gap $g=1$), measured by total trace variance. Parentheses give the ratio to the Constant-LR online trajectory. Both model and EMA sequences are reported for cosine decay.}
\label{tab:variance_gaps}
\centering
\scriptsize
\begin{tabular}{lr}
\toprule
Method / sequence & Checkpoint variability (trace variance, $g=1$)\\
\midrule
Constant-LR online & 7.1611709595e+00 (100\%)\\
Schedule-Free averaged $x$ & 1.4373749495e-03 (0.0201\%)\\
Constant-LR EMA & 5.5516868830e-03 (0.0775\%)\\
Cosine-decay model & 7.3927e-04 (0.0103\%)\\
Cosine-decay EMA & 1.1807e-04 (0.0016\%)\\
\bottomrule
\end{tabular}
\end{table}
At checkpoint gap $g=1$, cosine-decay EMA has the lowest local checkpoint variability, followed by the cosine-decay model, Schedule-Free averaged $x$, Constant-LR EMA, and the Constant-LR online trajectory.

\subsubsection{Late-Stage Net Parameter Displacement}
\label{app:diagnostics}
To directly measure how much parameter change each strategy retains late in training, we compute the $\ell_2$ distance between checkpoints separated by $\tau=1000$ optimization steps:
\begin{equation}
D_\tau(t)=\|\theta_{t+\tau}-\theta_t\|_2.
\end{equation}
We use five consecutive 1k-step windows from 400k to 405k: $400\mathrm{k}\!\to\!401\mathrm{k}$, $401\mathrm{k}\!\to\!402\mathrm{k}$, $402\mathrm{k}\!\to\!403\mathrm{k}$, $403\mathrm{k}\!\to\!404\mathrm{k}$, and $404\mathrm{k}\!\to\!405\mathrm{k}$.

Importantly, $D_\tau(t)$ measures the net change retained between the endpoints of a window, not the path length of the full optimization trajectory within that window. A larger displacement therefore does not by itself imply more effective optimization: it may reflect directed progress, stochastic motion, or both. We pair this measurement with fixed-evaluation loss progress below.

\begin{table}[H]
\caption{Net parameter displacement over five adjacent 1k-step late-stage windows. Each entry is the $\ell_2$ distance $\|\theta_{t+1000}-\theta_t\|_2$, not its square. For the Constant-LR baseline and cosine decay, we report both model and EMA weights; Schedule-Free uses its evaluation-time averaged $x$ sequence.}
\centering
\scriptsize
\resizebox{\linewidth}{!}{\begin{tabular}{llrrrrrr}
\toprule
Run & Weights & 400--401k & 401--402k & 402--403k & 403--404k & 404--405k & Mean\\
\midrule
AdamW baseline & model & 69.953 & 68.033 & 68.007 & 68.192 & 68.026 & 68.442\\
AdamW baseline & EMA & 12.636 & 12.700 & 12.683 & 12.676 & 12.669 & 12.673\\
Schedule-Free & $x$ & 6.5643 & 6.5572 & 6.5492 & 6.5408 & 6.5326 & 6.5488\\
40k cosine decay & model & 0.909307 & 0.843436 & 0.831407 & 0.827545 & 0.820779 & 0.846495\\
40k cosine decay & EMA & 1.794585 & 1.624687 & 1.471350 & 1.333021 & 1.208285 & 1.486386\\
\bottomrule
\end{tabular}}
\end{table}

The Constant-LR online trajectory retains the largest late-stage parameter movement, while EMA substantially reduces it. The Schedule-Free averaged $x$ sequence moves less than Constant LR + EMA but more than both 40k cosine-decay sequences; within cosine decay, the model sequence moves less than EMA. Thus, explicit decay suppresses late-stage parameter mobility most strongly, whereas Schedule-Free retains more movement. Whether this movement corresponds to useful progress is assessed next.

\subsubsection{Fixed-Evaluation Loss Progress}
Net parameter displacement alone cannot determine whether parameter changes represent directed optimization progress. We therefore evaluate every checkpoint with identical data, noise, and timestep samples, reducing evaluation randomness when comparing the 400k--405k trajectories. Define the cumulative loss decrease relative to the 400k checkpoint as
\begin{equation}
\Delta L(t)=L_{400\mathrm{k}}-L_t.
\end{equation}
Here $\Delta L(t)>0$ indicates improvement over the 400k checkpoint, and a larger value indicates more cumulative progress within the late-stage window.
\begin{table}[H]
\caption{Fixed-evaluation total loss from 400k to 405k. All methods use identical fixed evaluation data, noise, and timestep sampling. Constant LR + EMA and Schedule-Free decrease throughout this interval, whereas neither the 40k Cosine Decay model nor its EMA improves in the post-decay regime.}
\centering
\small
\begin{tabular}{lrrrrrr}
\toprule
Method & 400k & 401k & 402k & 403k & 404k & 405k\\
\midrule
Constant LR + EMA & .43738 & .43736 & .43732 & .43729 & .43726 & .43723\\
Schedule-Free & .43461 & .43460 & .43457 & .43454 & .43451 & .43447\\
40k Cosine Decay model & .43696 & .43735 & .43745 & .43754 & .43752 & .43760\\
40k Cosine Decay + EMA & .43691 & .43691 & .43693 & .43695 & .43697 & .43699\\
\bottomrule
\end{tabular}
\end{table}

Constant LR + EMA decreases from $0.43738$ to $0.43723$, and Schedule-Free decreases from $0.43461$ to $0.43447$. In contrast, the 40k Cosine Decay model increases from $0.43696$ to $0.43760$, while its EMA increases slightly from $0.43691$ to $0.43699$. This complements the displacement diagnostic: in the measured post-decay window, explicit decay exhibits both much smaller net movement and substantially weaker subsequent objective progress, whereas Schedule-Free retains more movement while continuing to reduce the objective.

Net displacement and fixed-evaluation progress are therefore complementary diagnostics. The former measures the parameter change retained over a fixed window; the latter tests whether continued training yields objective improvement. Together they support the empirical parameter-variability--optimization trade-off discussed in Section~\ref{sec:variance_optimization_tradeoff}.

\subsection{Learning-Rate Stability Ablation}
\label{app:lr_stability}
We test whether Schedule-Free performs better merely because it uses a larger
base learning rate. The official AdamW configuration trains stably at
$10^{-4}$; to match Schedule-Free's learning-rate scale, we also train AdamW at
$4\times10^{-4}$ with the same 5k-step warmup and otherwise identical settings.
Figure~\ref{fig:lr_stability_ablation} shows that this AdamW run develops a loss
spike after roughly 200k steps and sustained NaNs after approximately 218.7k,
whereas Schedule-Free remains stable. Thus, Schedule-Free can stably use a
larger learning rate that standard AdamW cannot sustain in this setting.

\begin{figure}[t]
\centering
\includegraphics[width=0.72\linewidth]{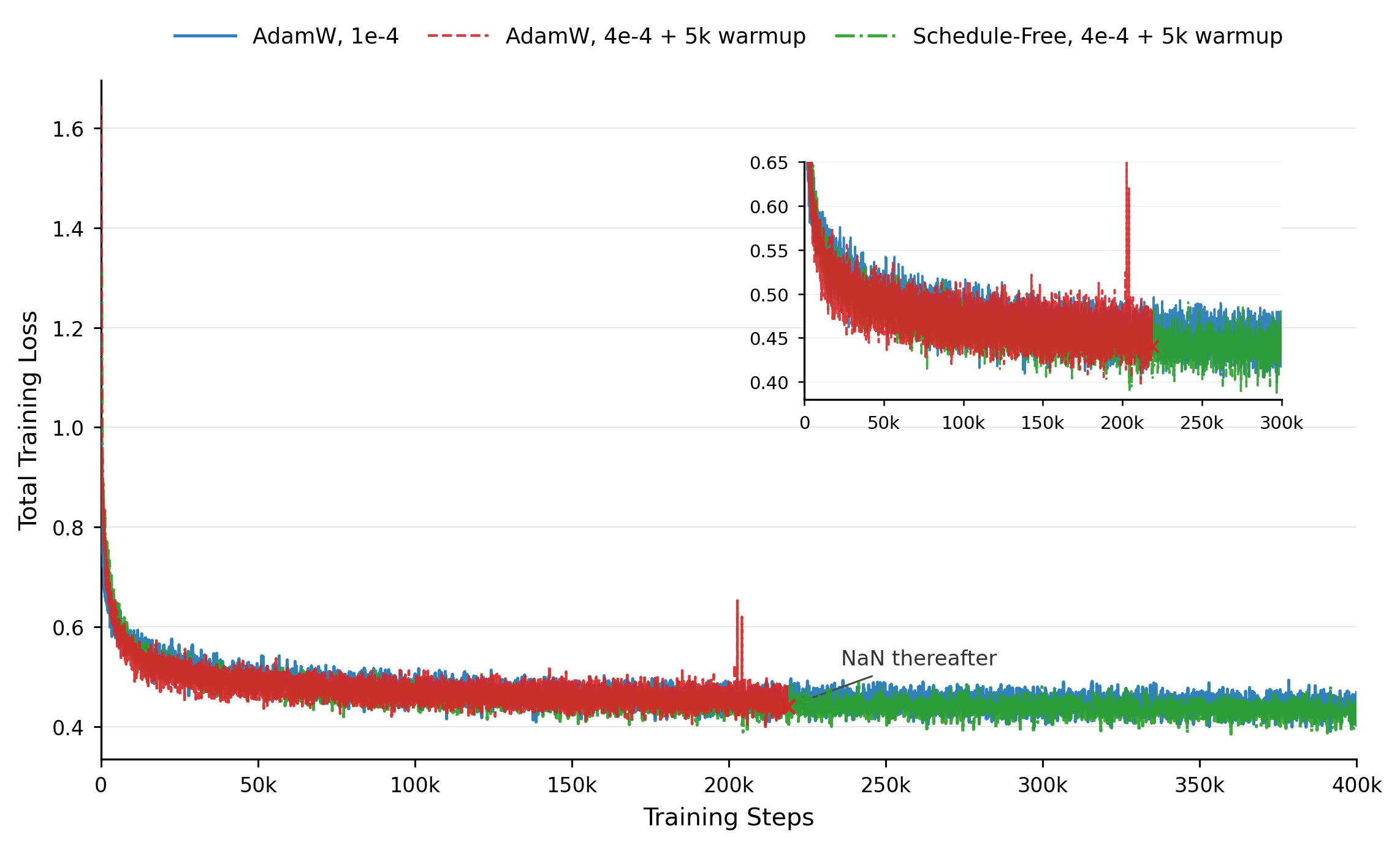}
\caption{Stability at an aggressive learning-rate scale. Standard AdamW with learning rate $4\times10^{-4}$ and 5k-step linear warmup develops a loss spike and sustained NaNs after approximately 218.7k steps. Schedule-Free remains stable with the same rate and warmup; standard AdamW is stable at $10^{-4}$. The inset enlarges the normal-loss range.}
\label{fig:lr_stability_ablation}
\end{figure}

\subsection{ReFL Experimental Details}
\label{app:refl}
Following the ReFL protocol~\citep{xu2023imagereward} in DiffusionOPSD~\citep{zhou2026diffusionopsd}, we run ReFL on Z-Image-Turbo~\citep{zimage2025} and keep the official repository configuration fixed except for optimizer, learning rate, and warmup. HPSv3~\citep{ma2025hpsv3} is the sole training reward. Training uses six policy GPUs and one reward GPU. Each update uses 48 prompt groups and 12 trajectories per prompt ($48\times12=576$ trajectories). Each policy GPU uses micro-batch size 1, synchronizing after accumulating 96 trajectories. Training lasts 100 updates. AdamW uses $\eta_{\mathrm{ref}}=10^{-5}$ without warmup. Schedule-Free uses a five-step warmup and evaluates optimizer $x$ weights; AdamW evaluates EMA LoRA weights~\citep{hu2022lora}. At checkpoint 100, all runs generate from the same 1,000 fixed prompts and seeds at $1024\times1024$, with nine sampling steps, CFG scale 0, bfloat16, and seed 42. HPSv3 is reported as the on-reward metric; PickScore~\citep{kirstain2023pick}, CLIPScore~\citep{hessel2021clipscore}, HPSv2.1~\citep{wu2023human}, Aesthetic Score~\citep{schuhmann2022laion}, ImageReward~\citep{xu2023imagereward}, HPSv3++~\citep{liu2026hpsv3pp}, and the UniPercept IAA, IQA, and ISTA scores~\citep{cao2025unipercept} are used only as orthogonal reward models for cross-reward evaluation.
Qualitative image results for ReFL are provided in Appendix~\ref{app:refl_additional_qualitative}.

\begin{table}[H]
\caption{ReFL results with HPSv3 as the training reward. HPSv3 is the directly optimized on-reward metric; all remaining columns are orthogonal reward models used only for cross-reward evaluation and provide no training signal. Fine-tuned runs are evaluated at checkpoint 100, and the Z-Image base model without reinforcement learning is included as a reference. ``wu0'' and ``wu5'' denote zero and five warmup steps, respectively. Bold indicates the best value in each column among the displayed rows.}
\label{tab:refl}
\centering
\scriptsize
\resizebox{\linewidth}{!}{\begin{tabular}{lrrrrrrrrrr}
\toprule
& \multicolumn{1}{c}{Training reward} & \multicolumn{9}{c}{Orthogonal evaluation rewards}\\
\cmidrule(lr){2-2}\cmidrule(lr){3-11}
Run & HPSv3 & Pick & CLIP & HPSv2.1 & Aes & ImgR & HPSv3++ & Uni IAA & Uni IQA & Uni ISTA\\
\midrule
Z-Image base (no RL) & 6.2544 & 22.8690 & 0.2754 & 0.2962 & 5.4118 & 0.9947 & 12.0641 & \textbf{60.9511} & 75.0656 & 37.7925\\
AdamW $1\times$, wu0 & 6.3500 & 22.8766 & 0.2754 & 0.2971 & 5.4177 & 0.9959 & 12.1271 & 60.7050 & 74.8078 & 38.3209\\
SF-AdamW $4\times$, wu5 & 6.8626 & \textbf{22.8921} & 0.2748 & 0.2995 & 5.4456 & 1.0150 & \textbf{12.3485} & 60.6995 & \textbf{75.2258} & 38.3189\\
SF-AdamW $20\times$, wu5 & \textbf{9.6134} & 22.8341 & \textbf{0.2807} & \textbf{0.3123} & \textbf{5.7066} & \textbf{1.1197} & 12.0347 & 51.5663 & 67.4061 & \textbf{41.7878}\\
\bottomrule
\end{tabular}}
\end{table}

\subsection{DMD Experimental Details}
\label{app:dmd}
We study class-conditional $256\times256$ ImageNet generation~\citep{deng2009imagenet} with SiT-XL/2 initialized from REPA-pretrained weights~\citep{ma2024sit,yu2025repa} in the one-step DMDR distribution-matching setup~\citep{yin2024dmd,jiang2025dmdr}. Guidance is updated once per loop and the generator once per five guidance updates. Reported steps count generator updates. All three configurations train for 3,000 generator steps and are evaluated at 1k, 2k, and 3k. No DINO or reinforcement-learning auxiliary loss is enabled.

The generator is trained with
\begin{equation}
    \mathcal{L}_{\mathrm{DMD}}
    =\frac12\operatorname{mean}\!\left[
      \left(x_0-\operatorname{sg}
      \left(x_0-g_{\mathrm{DMD}}\right)\right)^2\right],
\end{equation}
where $\operatorname{sg}$ denotes stop-gradient, and the DMD direction is
defined as the normalized difference
\begin{equation}
    g_{\mathrm{DMD}}
    =\frac{p_{\mathrm{real}}-p_{\mathrm{fake}}}
      {\operatorname{mean}_{c,h,w}(|p_{\mathrm{real}}|)+\epsilon},
    \qquad
    p_{\mathrm{real}}=x_0-\hat{x}_{0,r},
    \quad
    p_{\mathrm{fake}}=x_0-\hat{x}_{0,f}.
    \label{eq:dmd_guidance}
\end{equation}
Here, $x_0$ is the generator-predicted clean latent, while
$\hat{x}_{0,r}$ and $\hat{x}_{0,f}$ are the clean-latent predictions obtained
from the real and fake estimator modes of the same guidance model, respectively.
The mean in the denominator is computed independently for each sample over
the latent dimensions $(c,h,w)$, rather than over the batch.
% MANUAL CHECK: hats/subscripts in this Word OMML equation were not preserved
% unambiguously by the DOCX converter; compare against the authors' source.
The fake-score model is trained with the diffusion fitting objective
\begin{equation}
    \mathcal{L}_{\mathrm{diff}}
    =\mathbb{E}\left[
      \left\|v_{\mathrm{fake}}(x_t,t,y)-(\epsilon-\tilde{x}_0)\right\|_2^2
      \right].
\end{equation}
Here $\tilde{x}_0\sim G_\theta(z,y)$ denotes a clean latent sampled from the
current generator distribution. It is conceptually the same type of clean
sample as $x_0$ above, but is generated through the separate fake-score
diffusion-fitting computation.
These are also the fixed-evaluation objectives reported in
Figure~\ref{fig:dmd_combined}b--c.

Training uses four GPUs with per-GPU batch size 128 (global 512). In the all-AdamW baseline, both the generator and guidance model use AdamW at $2\times10^{-5}$ without warmup. In the generator-side ablation, the generator instead uses Schedule-Free AdamW at $4\times10^{-5}$ ($2\times$ its baseline learning rate) with a 200-step warmup and averaged $x$ weights, while guidance retains baseline AdamW. In the guidance-side ablation, guidance instead uses Schedule-Free AdamW at $4\times10^{-5}$ ($2\times$ its baseline learning rate) with a 200-step warmup, while the generator retains baseline AdamW. All remaining settings are held fixed. The baseline evaluates online weights. Sampling uses one step, random ImageNet class labels, CFG scale 0, and 5,000 images per checkpoint.

\begin{table}[H]
\caption{DMD generation quality under the all-AdamW baseline and the two component-wise Schedule-Free ablations. Every metric is evaluated on 5,000 generated images. Within each checkpoint, bold marks the best of the three configurations. Lower FID-5K and sFID and higher IS, precision, and recall are better.}
\label{tab:dmd}
\centering
\resizebox{\linewidth}{!}{%
\begin{tabular}{llrrrrr}
\toprule
Method & Ckpt & FID-5K $\downarrow$ & sFID $\downarrow$ & IS $\uparrow$ & Precision $\uparrow$ & Recall $\uparrow$\\
\midrule
AdamW baseline & 1k & 10.126 & \textbf{35.310} & 248.559 & 0.7886 & 0.7395\\
Gen SF-AdamW + Gui AdamW & 1k & \textbf{9.988} & 35.733 & 236.856 & 0.7850 & \textbf{0.7401}\\
Gen AdamW + Gui SF-AdamW & 1k & 10.199 & 35.584 & \textbf{256.577} & \textbf{0.7970} & 0.7210\\
\midrule
AdamW baseline & 2k & 9.757 & \textbf{34.865} & 242.844 & 0.7814 & \textbf{0.7497}\\
Gen SF-AdamW + Gui AdamW & 2k & \textbf{9.573} & 35.261 & \textbf{256.355} & \textbf{0.7950} & 0.7451\\
Gen AdamW + Gui SF-AdamW & 2k & 9.882 & 35.529 & 245.570 & 0.7848 & 0.7395\\
\midrule
AdamW baseline & 3k & 9.448 & \textbf{34.889} & \textbf{258.754} & 0.7932 & 0.7435\\
Gen SF-AdamW + Gui AdamW & 3k & \textbf{9.346} & 35.192 & 251.219 & 0.7890 & \textbf{0.7505}\\
Gen AdamW + Gui SF-AdamW & 3k & 9.634 & 35.477 & 257.485 & \textbf{0.7960} & 0.7449\\
\bottomrule
\end{tabular}
}
\end{table}

\section{Quadratic Analyses and the Classical Nonconvex SGD Bound}
\label{app:quadratic_theory}

\subsection{Constant-LR SGD}
\label{app:constant_sgd}
\subsubsection{Stationary Distribution on Quadratic Objectives}
\begin{theorem}[Stationary distribution of constant-learning-rate SGD]
\label{thm:constant_sgd_stationary}
Consider the one-dimensional quadratic $L(x)=\tfrac12 d x^2$, $d>0$, with stochastic gradient $d x_t+\xi_t$, where the $\xi_t\overset{\mathrm{i.i.d.}}{\sim}\mathcal{N}(0,\sigma^2)$, and deterministic $x_0$. The constant-step recursion is
\begin{equation}
x_{t+1}=(1-\eta d)x_t-\eta\xi_t.
\end{equation}
If $0<\eta d<2$, then
\begin{equation}
x_t\ \Rightarrow\ x_\infty\sim
\mathcal{N}\!\left(0,\frac{\eta\sigma^2}{d(2-\eta d)}\right).
\label{eq:constant_sgd_stationary_variance}
\end{equation}
\end{theorem}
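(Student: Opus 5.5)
The plan is to unroll the linear recursion explicitly, show that each iterate is exactly Gaussian, and then pass to the limit in the mean and variance.

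\textbf{Unrolling.} Set $b:=1-\eta d$. The hypothesis $0<\eta d<2$ gives $|b|<1$. Induction on $t$ gives the closed form already displayed in Eq.~\eqref{eq:stationary_1d}:
\[
x_t=b^t x_0-\eta\sum_{k=0}^{t-1}b^{t-1-k}\xi_k .
\]
Since $x_0$ is deterministic and the $\xi_k$ are i.i.d.\ $\mathcal{N}(0,\sigma^2)$, $x_t$ is a deterministic constant plus a finite linear combination of independent Gaussians. Hence $x_t$ is exactly Gaussian for every $t$.

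\textbf{Mean and variance.} Independence gives
\[
\mathbb{E}[x_t]=b^t x_0,\qquad
\operatorname{Var}(x_t)=\eta^2\sigma^2\sum_{j=0}^{t-1}b^{2j}=\eta^2\sigma^2\,\frac{1-b^{2t}}{1-b^2}.
\]
Because $|b|<1$, we have $b^t x_0\to0$ and $b^{2t}\to0$. So the variance converges to $\eta^2\sigma^2/(1-b^2)$. The only algebra is simplifying
\[
1-b^2=(1-b)(1+b)=\eta d\,(2-\eta d),
\]
which yields the limit $\eta\sigma^2/\bigl(d(2-\eta d)\bigr)$. This matches Eq.~\eqref{eq:constant_sgd_stationary_variance}.

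\textbf{Weak convergence.} The remaining step, which is the only one needing a limit argument, is to conclude that $x_t\Rightarrow\mathcal{N}(0,v_\infty)$, where $v_\infty$ denotes this limiting variance. I will use characteristic functions. For each $t$,
\[
\varphi_{x_t}(s)=\exp\!\bigl(is\,\mathbb{E}[x_t]-\tfrac12 s^2\operatorname{Var}(x_t)\bigr),
\]
and this converges pointwise to $\exp(-\tfrac12 s^2 v_\infty)$, the characteristic function of $\mathcal{N}(0,v_\infty)$. Lévy's continuity theorem then gives the weak convergence. (Equivalently, Gaussians converge weakly whenever their means and variances converge.)

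\textbf{Stationarity check.} As a consistency check, $\mathcal{N}(0,v_\infty)$ is invariant under the recursion: if $x_t\sim\mathcal{N}(0,v)$, then $x_{t+1}\sim\mathcal{N}(0,b^2v+\eta^2\sigma^2)$, and the fixed-point equation $v=b^2v+\eta^2\sigma^2$ has the same solution $v_\infty$. I expect no real obstacle here. The only points needing care are that $|b|<1$ covers the oscillatory regime $1<\eta d<2$ as well (where $b<0$), and that nondegeneracy of the limit requires $\sigma^2>0$.
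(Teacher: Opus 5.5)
Your proposal is correct and follows essentially the same route as the paper. You unroll with $b=1-\eta d$, note that the iterate is Gaussian, compute $\operatorname{Var}(x_t)=\eta^2\sigma^2(1-b^{2t})/(1-b^2)$, and simplify $1-b^2=\eta d(2-\eta d)$; your explicit characteristic-function argument via L\'evy's continuity theorem makes rigorous the passage from converging means and variances to weak convergence, which the paper leaves implicit.
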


\begin{proof}
Writing $b=1-\eta d$, the first two iterates are
\begin{align*}
x_1&=b x_0-\eta\xi_0,\\
x_2&=b x_1-\eta\xi_1
    =b^2x_0-\eta\bigl(b\xi_0+\xi_1\bigr).
\end{align*}
More generally, if the claimed expression holds at time $t$, then one further
application of the recursion gives
\begin{align*}
x_{t+1}
&=b\left(b^t x_0-\eta\sum_{k=0}^{t-1}b^{t-1-k}\xi_k\right)-\eta\xi_t\\
&=b^{t+1}x_0-\eta\sum_{k=0}^{t-1}b^{t-k}\xi_k-\eta\xi_t\\
&=b^{t+1}x_0-\eta\sum_{k=0}^{t}b^{t-k}\xi_k.
\end{align*}
Thus induction yields
\begin{equation}
x_t=b^t x_0-\eta\sum_{k=0}^{t-1}b^{t-1-k}\xi_k.
\end{equation}
Because $|b|<1$, $b^t x_0\to0$. The stochastic term is a linear
combination of independent centered Gaussian noises and is therefore centered
Gaussian. Independence eliminates all cross-covariances, so its variance is
\begin{align*}
\operatorname{Var}\!\left(
  \eta\sum_{k=0}^{t-1}b^{t-1-k}\xi_k
\right)
&=\eta^2\sum_{k=0}^{t-1}b^{2(t-1-k)}\operatorname{Var}(\xi_k)\\
&=\eta^2\sigma^2\sum_{j=0}^{t-1}b^{2j}
 =\eta^2\sigma^2\frac{1-b^{2t}}{1-b^2}.
\end{align*}
Consequently,
\begin{equation}
\eta^2\sigma^2\sum_{j=0}^{t-1}b^{2j}
\longrightarrow \frac{\eta^2\sigma^2}{1-b^2}
=\frac{\eta\sigma^2}{d(2-\eta d)},
\end{equation}
where the last equality uses
$1-b^2=1-(1-\eta d)^2=\eta d(2-\eta d)$. This proves
Eq.~\eqref{eq:constant_sgd_stationary_variance}.
\end{proof}

\subsubsection{Smooth Nonconvex Stationarity}
\label{app:nonconvex}
\begin{theorem}[Convergence of SGD for smooth nonconvex optimization]
\label{thm:nonconvex_sgd}
Let $f:\mathbb{R}^d\to\mathbb{R}$ be $L$-smooth and bounded below by $f_\star$. Suppose $x_{t+1}=x_t-\eta g_t$, $\mathbb{E}[g_t\mid x_t]=\nabla f(x_t)$, and
\begin{equation}
\mathbb{E}[\|g_t-\nabla f(x_t)\|^2\mid x_t]\leq\sigma^2.
\end{equation}
For $0<\eta\leq1/L$,
\begin{equation}
\frac{1}{T}\sum_{t=0}^{T-1}\mathbb{E}\|\nabla f(x_t)\|^2
\leq \frac{2(f(x_0)-f_\star)}{\eta T}+L\eta\sigma^2.
\label{eq:appendix_nonconvex_bound}
\end{equation}
Choosing $\eta=\Theta(T^{-1/2})$ gives an average squared-gradient norm of $O(T^{-1/2})$, and the same order holds for the best iterate among $t<T$.
\end{theorem}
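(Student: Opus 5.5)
The argument is the standard descent-lemma telescoping argument. First I will use $L$-smoothness of $f$ to write, for the update $x_{t+1}=x_t-\eta g_t$,
\[
f(x_{t+1})\le f(x_t)-\eta\langle\nabla f(x_t),g_t\rangle+\frac{L\eta^2}{2}\|g_t\|^2 .
\]
Next I take conditional expectation given $x_t$ and use unbiasedness, $\mathbb{E}[g_t\mid x_t]=\nabla f(x_t)$. I then decompose the second moment as $\mathbb{E}[\|g_t\|^2\mid x_t]=\|\nabla f(x_t)\|^2+\mathbb{E}[\|g_t-\nabla f(x_t)\|^2\mid x_t]\le\|\nabla f(x_t)\|^2+\sigma^2$. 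This gives
\[
\mathbb{E}[f(x_{t+1})\mid x_t]\le f(x_t)-\eta\Bigl(1-\frac{L\eta}{2}\Bigr)\|\nabla f(x_t)\|^2+\frac{L\eta^2\sigma^2}{2}.
\]

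The step-size condition $\eta\le 1/L$ gives $1-L\eta/2\ge 1/2$, so the gradient coefficient is at least $\eta/2$. Taking total expectation and summing from $t=0$ to $T-1$, the left side telescopes. Since $f(x_T)\ge f_\star$, I obtain
\[
\frac{\eta}{2}\sum_{t=0}^{T-1}\mathbb{E}\|\nabla f(x_t)\|^2\le f(x_0)-f_\star+\frac{TL\eta^2\sigma^2}{2}.
\]
Dividing by $\eta T/2$ gives exactly Eq.~\eqref{eq:appendix_nonconvex_bound}.

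For the rate statement, I choose $\eta=\min\{1/L,\,c/\sqrt{T}\}$. Then for $T\ge c^2L^2$ both terms are of order $T^{-1/2}$: the first is $2(f(x_0)-f_\star)/(c\sqrt T)$ and the second is $Lc\sigma^2/\sqrt T$. A balancing choice is $c=\sqrt{2(f(x_0)-f_\star)/(L\sigma^2)}$. For the best-iterate claim, I use the fact that the minimum over $t<T$ of $\mathbb{E}\|\nabla f(x_t)\|^2$ is at most the average. If the claim is intended for a uniformly random iterate, that iterate's expected squared gradient equals the average.

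There is no real obstacle here. The only points requiring care are the conditioning and integrability: the variance bound must be applied conditionally on $x_t$, with $x_t$ measurable with respect to past noise, before applying the tower property. One must also take care to obtain the factor $1/2$ from $\eta\le1/L$ and not a looser constant.
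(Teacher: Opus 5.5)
Your proposal is correct and follows essentially the same route as the paper's proof: the $L$-smoothness descent inequality, conditioning on $x_t$ with the bound $\mathbb{E}[\|g_t\|^2\mid x_t]\le\|\nabla f(x_t)\|^2+\sigma^2$, using $\eta\le1/L$ to obtain the coefficient $\eta/2$, telescoping with $f(x_T)\ge f_\star$, and using $\min\le$ average for the best iterate. Your explicit choice $\eta=\min\{1/L,\,c/\sqrt{T}\}$ treats the rate claim slightly more carefully than the paper, since it keeps the step-size condition $\eta\le1/L$ satisfied.
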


\begin{proof}
Smoothness gives
\begin{equation}
f(x_{t+1})\leq f(x_t)-\eta\langle\nabla f(x_t),g_t\rangle+\frac{L\eta^2}{2}\|g_t\|^2.
\label{eq:nonconvex_smoothness_step}
\end{equation}
To evaluate the second moment of the stochastic gradient, write
$\zeta_t:=g_t-\nabla f(x_t)$. Conditional on $x_t$, the assumptions give
$\mathbb{E}[\zeta_t\mid x_t]=0$ and
$\mathbb{E}[\|\zeta_t\|^2\mid x_t]\leq\sigma^2$. Hence
\begin{align*}
\mathbb{E}[\|g_t\|^2\mid x_t]
&=\mathbb{E}[\|\nabla f(x_t)+\zeta_t\|^2\mid x_t]\\
&=\|\nabla f(x_t)\|^2
  +2\left\langle\nabla f(x_t),
  \mathbb{E}[\zeta_t\mid x_t]\right\rangle
  +\mathbb{E}[\|\zeta_t\|^2\mid x_t],
\end{align*}
and therefore
\begin{equation}
\mathbb{E}[\|g_t\|^2\mid x_t]
\leq\|\nabla f(x_t)\|^2+\sigma^2.
\end{equation}
Taking conditional expectation in Eq.~\eqref{eq:nonconvex_smoothness_step}, using
$\mathbb{E}[g_t\mid x_t]=\nabla f(x_t)$, and then applying the preceding
second-moment bound gives
\begin{align*}
\mathbb{E}[f(x_{t+1})\mid x_t]
&\leq f(x_t)
-\eta\left\langle\nabla f(x_t),
\mathbb{E}[g_t\mid x_t]\right\rangle
+\frac{L\eta^2}{2}\mathbb{E}[\|g_t\|^2\mid x_t]\\
&\leq f(x_t)-\eta\|\nabla f(x_t)\|^2
+\frac{L\eta^2}{2}
 \bigl(\|\nabla f(x_t)\|^2+\sigma^2\bigr),
\end{align*}
which, after collecting the squared-gradient terms, becomes
\begin{equation}
\mathbb{E}[f(x_{t+1})\mid x_t]
\leq f(x_t)-\left(\eta-\frac{L\eta^2}{2}\right)\|\nabla f(x_t)\|^2+\frac{L\eta^2\sigma^2}{2}.
\end{equation}
For $\eta\leq1/L$,
$\eta-L\eta^2/2\geq\eta/2$. Taking total expectations and rearranging
therefore yields
\begin{equation*}
\frac{\eta}{2}\mathbb{E}\|\nabla f(x_t)\|^2
\leq
\mathbb{E}f(x_t)-\mathbb{E}f(x_{t+1})
+\frac{L\eta^2\sigma^2}{2}.
\end{equation*}
Summing this inequality from $t=0$ to $T-1$ telescopes the function values:
\begin{align*}
\frac{\eta}{2}\sum_{t=0}^{T-1}
\mathbb{E}\|\nabla f(x_t)\|^2
&\leq f(x_0)-\mathbb{E}f(x_T)
+\frac{TL\eta^2\sigma^2}{2}\\
&\leq f(x_0)-f_\star
+\frac{TL\eta^2\sigma^2}{2}.
\end{align*}
Dividing by $\eta T/2$ proves
Eq.~\eqref{eq:appendix_nonconvex_bound}, which is the bound stated in
Eq.~\eqref{eq:nonconvex_bound}. The best-iterate statement follows from
$\min_{0\leq t<T}a_t\leq T^{-1}\sum_{t=0}^{T-1}a_t$ applied to
$a_t=\mathbb{E}\|\nabla f(x_t)\|^2$.
\end{proof}

\subsection{Constant-LR SGD + EMA}
\label{app:ema}
\subsubsection{Stationary Variance Reduction}
\begin{theorem}[EMA reduces stationary variance]
\label{thm:ema_stationary_variance}
Under the one-dimensional quadratic setting of Appendix~\ref{app:constant_sgd}, let $b=1-\eta d$, $|b|<1$, and let $y_t=\beta y_{t-1}+(1-\beta)x_t$ with $0<\beta<1$. The stationary variances satisfy
\begin{equation}
\frac{\operatorname{Var}(y_\infty)}
{\operatorname{Var}(x_\infty)}
=\frac{1-\beta}{1+\beta}
\frac{1+\beta b}{1-\beta b}<1.
\label{eq:ema_stationary_variance_ratio}
\end{equation}
Thus EMA reduces the stationary variance of the evaluation iterate.
\end{theorem}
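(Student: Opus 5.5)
Following the proof of Theorem~\ref{thm:constant_sgd_stationary}, we unroll both recursions and then compute the stationary second moments. Since the initial conditions contribute only terms of order $b^t$ and $\beta^t$, which vanish, we may take $x_0=y_0=0$ for the variance computation. Writing $b=1-\eta d$ and unrolling $y_t=(1-\beta)\sum_{j\le t}\beta^{t-j}x_j$ with $x_j=-\eta\sum_{k<j}b^{j-1-k}\xi_k$ gives the fluctuation term of Eq.~\eqref{eq:main_ema_closed_form}, with weights
\[
w_{k,t}=\sum_{j=k+1}^{t}\beta^{t-j}b^{j-1-k}.
\]
As in Theorem~\ref{thm:constant_sgd_stationary}, $y_t$ is a centered Gaussian with variance $\eta^2(1-\beta)^2\sigma^2\sum_k w_{k,t}^2$, and the task is to evaluate the limit of this sum.

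Rather than summing $w_{k,t}^2$ directly, which is doable but messy, I would use the stationary Lyapunov equations for the pair $(x_t,y_t)$. Let $V_x=\operatorname{Var}(x_\infty)=\eta^2\sigma^2/(1-b^2)$, $C=\operatorname{Cov}(x_\infty,y_\infty)$, and $V_y=\operatorname{Var}(y_\infty)$. Existence of these limits follows because the joint linear system has eigenvalues $b$ and $\beta$, both of modulus less than one. I would then take the following steps in order.

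\textbf{Step 1.} Write $y_t=\beta y_{t-1}+(1-\beta)(bx_{t-1}-\eta\xi_{t-1})$ and note that $\xi_{t-1}$ is independent of $(x_{t-1},y_{t-1})$.

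\textbf{Step 2.} Compute the covariance recursion. Using $x_t=bx_{t-1}-\eta\xi_{t-1}$, we get $C=\beta bC+(1-\beta)V_x$, so
\[
C=\frac{(1-\beta)V_x}{1-\beta b}.
\]

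\textbf{Step 3.} Take the variance of the recursion $y_t=\beta y_{t-1}+(1-\beta)x_t$ at stationarity. Since $\operatorname{Cov}(y_{t-1},x_t)=b\,C$,
\[
V_y=\beta^2V_y+(1-\beta)^2V_x+2\beta(1-\beta)bC .
\]
Substituting $C$ from Step 2 and solving gives
\[
V_y(1-\beta^2)=(1-\beta)^2V_x\Bigl(1+\frac{2\beta b}{1-\beta b}\Bigr)=(1-\beta)^2V_x\frac{1+\beta b}{1-\beta b},
\]
which is exactly the ratio claimed in Eq.~\eqref{eq:ema_stationary_variance_ratio}.

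The step I expect to need the most care is justifying Step 3, namely the covariance $\operatorname{Cov}(y_{t-1},x_t)=bC$. This requires that $\xi_{t-1}$ be independent of $y_{t-1}$, which holds because $y_{t-1}$ depends only on $\xi_0,\dots,\xi_{t-2}$. I also need to pass to limits, which is legitimate since all second moments converge geometrically. If the Lyapunov route feels insufficiently rigorous, the fallback is to compute $\sum_k w_{k,t}^2$ directly: use the closed form $w_{k,t}=(\beta^{t-k}-b^{t-k})/(\beta-b)$ for $\beta\ne b$ and sum the three resulting geometric series. The case $\beta=b$ is then handled by continuity.

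Finally, the strict inequality. Write
\[
\frac{1-\beta}{1+\beta}\cdot\frac{1+\beta b}{1-\beta b}<1
\quad\Longleftrightarrow\quad
(1-\beta)(1+\beta b)<(1+\beta)(1-\beta b).
\]
Expanding both sides, this is equivalent to $2\beta b<2\beta$, i.e.\ $b<1$, which holds since $\eta d>0$. Both denominators are positive because $|\beta b|<1$.
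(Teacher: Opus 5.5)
Your proposal is correct and is essentially the paper's proof: it uses the same stationary moment equations $C=\beta bC+(1-\beta)V_x$ and $V_y=\beta^2V_y+(1-\beta)^2V_x+2\beta(1-\beta)bC$, and the same expansion $(1+\beta)(1-\beta b)-(1-\beta)(1+\beta b)=2\beta(1-b)>0$ for the strict inequality. The only difference is how the limits are justified: the paper constructs the stationary EMA directly as the $L^2$-convergent causal series $(1-\beta)\sum_{j\geq0}\beta^j x_{t-j}$ and checks its uniqueness, whereas you run the chain from deterministic initial values and pass to the limit using the spectral radius bound $\max(|b|,\beta)<1$, which is equally valid.
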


\begin{proof}
Online SGD is $x_{t+1}=b x_t-\eta\xi_t$ and has stationary variance
\begin{equation}
v:=\operatorname{Var}(x_t)=\frac{\eta^2\sigma^2}{1-b^2}.
\end{equation}

We first justify that the stationary EMA process and its covariance with
$x_t$ are well defined. Take the online iterate in its stationary regime and
define the causal EMA solution
\begin{equation*}
y_t:=(1-\beta)\sum_{j=0}^{\infty}\beta^j x_{t-j}.
\end{equation*}
Because $\mathbb{E}[x_t^2]=v<\infty$ and $0<\beta<1$, its partial sums are
Cauchy in $L^2$: for $m>n$,
\begin{align*}
\left\|(1-\beta)\sum_{j=n+1}^{m}\beta^j x_{t-j}\right\|_{L^2}
&\leq(1-\beta)\sum_{j=n+1}^{m}\beta^j
\|x_{t-j}\|_{L^2}\\
&\leq \sqrt{v}\,\beta^{n+1}\longrightarrow0.
\end{align*}
Thus the series converges in $L^2$, has finite variance, is jointly
stationary with $x_t$, and satisfies
$y_t=\beta y_{t-1}+(1-\beta)x_t$. It is also the unique stationary
$L^2$ solution. Indeed, if $y_t$ and $\widetilde y_t$ are two such solutions,
then $D_t:=y_t-\widetilde y_t=\beta D_{t-1}$; stationarity implies
$\mathbb{E}[D_t^2]=\beta^2\mathbb{E}[D_t^2]$, and hence $D_t=0$ almost surely.
In particular,
$c:=\operatorname{Cov}(y_t,x_t)$ is finite by Cauchy--Schwarz.

Both stationary means are zero. Using
$x_t=b x_{t-1}-\eta\xi_{t-1}$ and expanding the covariance gives
\begin{align*}
c
&=\operatorname{Cov}\!\left(
  \beta y_{t-1}+(1-\beta)x_t,\,x_t
  \right)\\
&=\beta\operatorname{Cov}(y_{t-1},x_t)
  +(1-\beta)\operatorname{Var}(x_t)\\
&=\beta\operatorname{Cov}\!\left(
  y_{t-1},b x_{t-1}-\eta\xi_{t-1}
  \right)+(1-\beta)v\\
&=\beta b\operatorname{Cov}(y_{t-1},x_{t-1})
  -\beta\eta\operatorname{Cov}(y_{t-1},\xi_{t-1})
  +(1-\beta)v.
\end{align*}
Here $y_{t-1}$ is measurable with respect to the history before
$\xi_{t-1}$ is sampled, so
$\operatorname{Cov}(y_{t-1},\xi_{t-1})=0$. Joint stationarity gives
$\operatorname{Cov}(y_{t-1},x_{t-1})=c$, and hence
\begin{equation}
c=\beta b c+(1-\beta)v=\frac{1-\beta}{1-\beta b}v.
\end{equation}
The division is valid because $|\beta b|<1$.

Writing $w=\operatorname{Var}(y_t)$,
\begin{equation}
w=\beta^2w+(1-\beta)^2v+2\beta(1-\beta)bc,
\end{equation}
so Eq.~\eqref{eq:ema_stationary_variance_ratio} follows from
\begin{equation}
\frac{w}{v}=\frac{1-\beta}{1+\beta}\frac{1+\beta b}{1-\beta b}<1,
\end{equation}
because $(1+\beta)(1-\beta b)-(1-\beta)(1+\beta b)=2\beta(1-b)>0$.
\end{proof}

\subsubsection{A Limitation of EMA: Persistent Optimization Bias}
\label{app:ema_bias}

The preceding quadratic model is symmetric, so its invariant distribution is
centered at the minimizer. This need not hold for a nonquadratic objective.
Consider the one-dimensional strongly convex and smooth example
\begin{equation}
    f(x)=\frac{x^2}{2}+a(\sin x-x),
    \qquad 0<a<1,
    \label{eq:ema_bias_objective}
\end{equation}
whose unique minimizer is $x^\star=0$. Constant-step SGD follows
\begin{equation}
    X_{t+1}=F_h(X_t)-h\xi_t,
    \qquad F_h(x):=x-hf'(x),
    \qquad \xi_t\overset{\mathrm{i.i.d.}}{\sim}\mathcal{N}(0,\sigma^2),
    \label{eq:ema_bias_sgd}
\end{equation}
with $\sigma^2>0$.

\begin{theorem}[EMA preserves the stationary optimization bias]
\label{thm:ema_bias}
For the recursion in Eq.~\eqref{eq:ema_bias_sgd}, suppose
$0<h\leq1/(1+a)$. Then SGD has a unique invariant distribution
$\pi_h$ with finite second moment, and the law of $X_t$ converges to
$\pi_h$ geometrically in the $2$-Wasserstein distance. If
$U\sim\pi_h$ and $m_h:=\mathbb{E}[U]$, then
\begin{equation}
    \mathbb{E}[U^2]
    \leq
    \frac{h\sigma^2}
    {(1-a)\{2-h(1-a)\}},
    \qquad
    m_h=a\bigl(1-\mathbb{E}[\cos U]\bigr)>0=x^\star.
    \label{eq:ema_bias_stationary_mean}
\end{equation}
If $X_t$ is stationary and
$Y_t=\beta Y_{t-1}+(1-\beta)X_t$ for $0<\beta<1$, then
\begin{equation}
    \mathbb{E}[Y_t]
    =\beta^t\mathbb{E}[Y_0]+(1-\beta^t)m_h
    \longrightarrow m_h.
    \label{eq:ema_bias_same_mean}
\end{equation}
Thus EMA can reduce the variance around the stationary mean, but it does not
move that mean to the optimizer.
\end{theorem}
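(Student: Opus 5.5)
The plan is to treat the SGD recursion as an iterated random function that contracts in $L^2$, and then to read off the moment identities from stationarity.

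\textbf{Contraction.} We have $f'(x)=x+a(\cos x-1)$ and $f''(x)=1-a\sin x\in[1-a,\,1+a]$. Hence $F_h'(x)=1-hf''(x)\in[1-h(1+a),\,1-h(1-a)]$. For $h\le 1/(1+a)$ this interval lies in $[0,\,1-h(1-a)]$, so $F_h$ is a contraction with Lipschitz constant $\rho:=1-h(1-a)<1$.

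\textbf{Invariant law.} Couple two chains started from arbitrary laws using the same noise $\xi_t$. Their difference contracts by $\rho$ at every step, which gives $W_2(\mathcal{L}(X_t),\mathcal{L}(X_t'))\le\rho^t W_2(\mathcal{L}(X_0),\mathcal{L}(X_0'))$. A Banach fixed-point argument on $\mathcal{P}_2(\mathbb{R})$ (complete under $W_2$) then gives a unique invariant $\pi_h$ with finite second moment. It also gives geometric convergence of $\mathcal{L}(X_t)$ to $\pi_h$. To see that the map $\mathcal{L}(X)\mapsto\mathcal{L}(F_h(X)-h\xi)$ preserves $\mathcal{P}_2$, note that $|F_h(x)|\le\rho|x|$, using $F_h(0)=0$ since $f'(0)=0$.

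\textbf{Second-moment bound.} Let $U\sim\pi_h$ and let $\xi$ be independent of $U$. Stationarity gives $\mathbb{E}[U^2]=\mathbb{E}[F_h(U)^2]+h^2\sigma^2$, and the cross term vanishes by independence. Using $F_h(U)^2\le\rho^2U^2$ we get
\[
\mathbb{E}[U^2]\le \frac{h^2\sigma^2}{1-\rho^2}=\frac{h^2\sigma^2}{h(1-a)\{2-h(1-a)\}},
\]
which is exactly the claimed bound.

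\textbf{Mean identity.} Take expectations in $U\overset{d}{=}U-hf'(U)-h\xi$. Integrability follows from $|f'(x)|\le(1+a)|x|$ and the finite second moment. This yields $\mathbb{E}[f'(U)]=0$, that is, $m_h+a(\mathbb{E}[\cos U]-1)=0$, so $m_h=a(1-\mathbb{E}[\cos U])$.

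\textbf{Strict positivity.} This is the step I expect to be the main obstacle. It suffices to show that $U$ is not almost surely contained in $2\pi\mathbb{Z}$, since then $1-\cos U\ge0$ is positive with positive probability. Because $U\overset{d}{=}F_h(U')-h\xi$ with $\xi$ Gaussian, $\sigma^2>0$, and $\xi$ independent of $U'$, the law of $U$ is a Gaussian mixture. It therefore has a density and assigns zero mass to the countable set $2\pi\mathbb{Z}$. Hence $\mathbb{E}[1-\cos U]>0$ and $m_h>0$.

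\textbf{EMA mean.} Under stationarity $\mathbb{E}[X_t]=m_h$ for all $t$. Taking expectations in the EMA recursion gives the linear recursion $\mathbb{E}[Y_t]=\beta\,\mathbb{E}[Y_{t-1}]+(1-\beta)m_h$. Unrolling it yields $\mathbb{E}[Y_t]=\beta^t\mathbb{E}[Y_0]+(1-\beta^t)m_h\to m_h$.

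\textbf{Excess loss of the EMA iterate.} For the corollary in Lemma~\ref{lem:ema_bias}, apply Jensen's inequality to the convex $f$. Then use $(1-a)$-strong convexity together with $f'(0)=0$ to get $f(m)-f(0)\ge\frac{1-a}{2}m^2$, and conclude by continuity of $f$.
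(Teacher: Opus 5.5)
Your proposal is correct and follows essentially the same route as the paper. Both arguments use the Lipschitz contraction of $F_h$ from $f''\in[1-a,1+a]$, synchronous coupling with Banach's fixed-point theorem in $W_2$, the stationarity identities for $\mathbb{E}[U^2]$ and $\mathbb{E}[f'(U)]=0$, Gaussian smoothing for strict positivity of $m_h$, and the linear recursion for the EMA mean. You are slightly more explicit than the paper on two points: why the kernel preserves $\mathcal{P}_2$ (via $|F_h(x)|\le\rho|x|$) and why $f'(U)$ is integrable.
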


\begin{proof}
Since $f''(x)=1-a\sin x$, we have
$1-a\leq f''(x)\leq1+a$. For
$0<h\leq1/(1+a)$,
\begin{equation}
    0\leq F_h'(x)=1-hf''(x)\leq q_h:=1-h(1-a)<1.
\end{equation}
The mean-value theorem therefore gives
$|F_h(x)-F_h(z)|\leq q_h|x-z|$. Couple two SGD chains by using the same
noise at every step. Their noise terms cancel, so their one-step laws satisfy
\begin{equation}
    W_2(\nu_1P_h,\nu_2P_h)
    \leq q_h W_2(\nu_1,\nu_2),
    \label{eq:ema_bias_wasserstein_contraction}
\end{equation}
where $P_h$ is the transition kernel of Eq.~\eqref{eq:ema_bias_sgd}.
The kernel preserves finite second moments. Hence it is a contraction on the
complete metric space of probability measures with finite second moment, and
the Banach fixed-point theorem gives a unique invariant distribution
$\pi_h$ and geometric convergence to it.

Because $F_h(0)=0$, invariance, zero-mean noise, and the contraction imply
\begin{align}
    \mathbb{E}[U^2]
    &=\mathbb{E}\bigl[(F_h(U)-h\xi)^2\bigr]\\
    &=\mathbb{E}[F_h(U)^2]+h^2\sigma^2
    \leq q_h^2\mathbb{E}[U^2]+h^2\sigma^2.
\end{align}
Moving the $q_h^2\mathbb{E}[U^2]$ term to the left gives
\begin{equation*}
    (1-q_h^2)\mathbb{E}[U^2]\leq h^2\sigma^2.
\end{equation*}
From $q_h=1-h(1-a)$,
\begin{align*}
1-q_h^2
&=(1-q_h)(1+q_h)\\
&=h(1-a)\bigl\{2-h(1-a)\bigr\}.
\end{align*}
This quantity is positive because $0<q_h<1$. Dividing by it therefore yields
\begin{align*}
\mathbb{E}[U^2]
&\leq \frac{h^2\sigma^2}{1-q_h^2}\\
&=\frac{h\sigma^2}
{(1-a)\{2-h(1-a)\}},
\end{align*}
which is the second-moment bound in
Eq.~\eqref{eq:ema_bias_stationary_mean}.

To derive the stationary mean, let $U\sim\pi_h$ be independent of a fresh
$\xi\sim\mathcal{N}(0,\sigma^2)$. Invariance means that
$U':=F_h(U)-h\xi$ also has law $\pi_h$, so
$\mathbb{E}[U']=\mathbb{E}[U]=m_h$. Taking expectation in the update gives
\begin{align*}
m_h
&=\mathbb{E}[F_h(U)-h\xi]\\
&=\mathbb{E}[U-hf'(U)]-h\mathbb{E}[\xi]\\
&=m_h-h\mathbb{E}[f'(U)].
\end{align*}
Since $h>0$, subtracting $m_h$ from both sides and dividing by $h$ yields
$\mathbb{E}[f'(U)]=0$. Using
$f'(x)=x+a(\cos x-1)$, we obtain
\begin{align*}
0
&=\mathbb{E}[U]+a\bigl(\mathbb{E}[\cos U]-1\bigr)\\
&=m_h+a\bigl(\mathbb{E}[\cos U]-1\bigr),
\end{align*}
and hence
\begin{equation}
    m_h=a\bigl(1-\mathbb{E}[\cos U]\bigr)\geq0.
\end{equation}
The inequality is strict: conditionally on $U$, the next iterate has a
nondegenerate Gaussian density, so invariance implies that $\pi_h$ is not
supported on the countable set $2\pi\mathbb{Z}$ where $\cos U=1$.
Therefore $\mathbb{E}[\cos U]<1$ and $m_h>0=x^\star$.

Finally, under stationarity of $X_t$, let
$\mu_t:=\mathbb{E}[Y_t]$ and note that
$\mathbb{E}[X_t]=m_h$ for every $t$. Taking expectation in the EMA recursion
gives
\begin{equation*}
    \mu_t=\beta\mu_{t-1}+(1-\beta)m_h.
\end{equation*}
Subtracting $m_h$ from both sides and iterating yields
\begin{align*}
\mu_t-m_h
&=\beta(\mu_{t-1}-m_h)
=\cdots
=\beta^t(\mu_0-m_h),\\
\mu_t
&=\beta^t\mu_0+(1-\beta^t)m_h
\longrightarrow m_h,
\end{align*}
because $0<\beta<1$. This is Eq.~\eqref{eq:ema_bias_same_mean}, so the EMA
sequence inherits the same stationary mean bias.
\end{proof}

\subsection{Learning-Rate Decay SGD + EMA}
\label{app:lr_decay}
\subsubsection{Exact Dynamics under Learning-Rate Decay and EMA}
\label{app:decay_with_ema}

Consider the one-dimensional quadratic model from
Section~\ref{sec:lr_decay_theory} under the inverse-time schedule
$\eta_t=\eta/(2+t\eta d)$. For compactness, define
\begin{equation}
    a:=\eta d,
    \qquad D_t:=2+(t-1)a.
    \label{eq:decay-notation}
\end{equation}

\begin{theorem}[Exact bias--variance decomposition for learning-rate decay]
\label{thm:decay-exact}
Let $f(x)=\tfrac12 d x^2$, $d>0$, and
$g_t=d x_t+\xi_t$, where the noises are independent with
$\mathbb{E}[\xi_t]=0$ and $\mathbb{E}[\xi_t^2]=\sigma^2$.
Suppose $x_0$ is deterministic, $0<\eta d<2$, and
$x_{t+1}=x_t-\eta_t g_t$ with
$\eta_t=\eta/(2+t\eta d)$. Then, for every $t\geq1$,
\begin{equation}
    x_t
    =\underbrace{\frac{2-a}{D_t}x_0}_{\text{mean trajectory}}
    -\underbrace{\frac{\eta}{D_t}
      \sum_{k=0}^{t-1}\xi_k}_{\text{stochastic fluctuation}}.
    \label{eq:decay-exact}
\end{equation}
Consequently,
\begin{equation}
    \mathbb{E}[x_t]=\frac{2-a}{D_t}x_0=O(t^{-1}),
    \qquad
    \operatorname{Var}(x_t)
    =\frac{t\eta^2\sigma^2}{D_t^2}=O(t^{-1}),
    \label{eq:decay-exact-moments}
\end{equation}
and $\mathbb{E}[f(x_t)]=O(t^{-1})$.
\end{theorem}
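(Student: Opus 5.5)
The plan is to prove the closed form in Eq.~\eqref{eq:decay-exact} by induction on $t$, and then read off the moments. The key algebraic fact concerns the contraction factor. With $\eta_t=\eta/(2+ta)$, the recursion is
\[
x_{t+1}=(1-\eta_t d)x_t-\eta_t\xi_t,
\qquad
1-\eta_t d=\frac{2+ta-a}{2+ta}=\frac{2+(t-1)a}{2+ta}=\frac{D_t}{D_{t+1}},
\]
and the noise coefficient is
\[
\eta_t=\frac{\eta}{D_{t+1}}.
\]
Hence the product of contraction factors telescopes, and the noise coefficients all share the common denominator $D_{t+1}$. This is what makes the sum of noises unweighted.

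The steps are as follows.

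\textbf{Base case $t=1$.} Here $\eta_0=\eta/2$ and $D_1=2$. This gives $x_1=(1-a/2)x_0-(\eta/2)\xi_0=\frac{2-a}{D_1}x_0-\frac{\eta}{D_1}\xi_0$, which matches Eq.~\eqref{eq:decay-exact}. The case $t=0$ is excluded because $D_0=2-a$ and the formula would not hold.

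\textbf{Inductive step.} Assume the formula at $t$. Multiply it by $D_t/D_{t+1}$ and subtract $(\eta/D_{t+1})\xi_t$. The factor $D_t$ cancels the denominators, giving
\[
x_{t+1}=\frac{2-a}{D_{t+1}}x_0-\frac{\eta}{D_{t+1}}\sum_{k=0}^{t}\xi_k,
\]
which is the claim at $t+1$.

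\textbf{Moments.} Since $x_0$ is deterministic and the $\xi_k$ are centered, the mean is the first term. Independence of the noises removes all cross terms, so
\[
\operatorname{Var}(x_t)=\frac{t\eta^2\sigma^2}{D_t^2}.
\]
Because $a>0$, we have $D_t\geq (t-1)a+2\geq c\,t$ for some $c>0$ depending on $a$. This gives $\mathbb{E}[x_t]=O(t^{-1})$ and $\operatorname{Var}(x_t)=O(t\cdot t^{-2})=O(t^{-1})$.

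\textbf{Loss.} We have
\[
\mathbb{E}[f(x_t)]=\frac d2\bigl(\mathbb{E}[x_t]^2+\operatorname{Var}(x_t)\bigr)=O(t^{-2})+O(t^{-1})=O(t^{-1}).
\]

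The main obstacle is spotting the telescoping identity $1-\eta_t d=D_t/D_{t+1}$ together with $\eta_t=\eta/D_{t+1}$, and getting the off-by-one indexing of $D_t$ right. Once these are in place, everything else is routine. Only second-moment assumptions on the noise are needed; no Gaussianity is used.
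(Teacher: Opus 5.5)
Your proposal is correct and follows the paper's route. The paper uses the same identities $1-\eta_t d=D_t/D_{t+1}$ and $\eta_t=\eta/D_{t+1}$, writes them as $D_{t+1}x_{t+1}-D_tx_t=-\eta\xi_t$, telescopes from $k=0$ to $t-1$ (which is your induction unrolled), and reads off the moments as you do. One small slip: the closed form does hold at $t=0$, since $D_0=2-a$ and the noise sum is empty, and this is exactly where the paper's telescoping starts; it is harmless because you verify the $t=1$ base case directly.
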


\begin{proof}
Since $D_{t+1}=2+ta$, $a=\eta d$, and
$\eta_t=\eta/(2+ta)$, we have
\begin{align*}
1-\eta_t d
&=1-\frac{\eta d}{2+ta}
 =\frac{2+ta-a}{2+ta}
 =\frac{2+(t-1)a}{2+ta}
 =\frac{D_t}{D_{t+1}},\\
\eta_t&=\frac{\eta}{D_{t+1}}.
\end{align*}
Substituting these identities into
$x_{t+1}=(1-\eta_t d)x_t-\eta_t\xi_t$ gives
\begin{equation}
    x_{t+1}=\frac{D_t}{D_{t+1}}x_t
    -\frac{\eta}{D_{t+1}}\xi_t.
    \label{eq:decay-rescaled-update}
\end{equation}
Multiplying Eq.~\eqref{eq:decay-rescaled-update} by $D_{t+1}$ and moving
$D_t x_t$ to the left gives the difference identity
\begin{equation*}
    D_{t+1}x_{t+1}-D_t x_t=-\eta\xi_t.
\end{equation*}
Applying this identity at indices $k=0,\ldots,t-1$ and summing yields
\begin{align*}
\sum_{k=0}^{t-1}
  \bigl(D_{k+1}x_{k+1}-D_kx_k\bigr)
&=-\eta\sum_{k=0}^{t-1}\xi_k,\\
\underbrace{(D_1x_1-D_0x_0)
 +(D_2x_2-D_1x_1)+\cdots
 +(D_tx_t-D_{t-1}x_{t-1})}_{D_tx_t-D_0x_0}
&=-\eta\sum_{k=0}^{t-1}\xi_k.
\end{align*}
All intermediate terms cancel. Since $D_0=2-a$, rearranging gives
\begin{equation*}
D_tx_t=(2-a)x_0-\eta\sum_{k=0}^{t-1}\xi_k,
\qquad
x_t=\frac{2-a}{D_t}x_0
-\frac{\eta}{D_t}\sum_{k=0}^{t-1}\xi_k,
\end{equation*}
which is Eq.~\eqref{eq:decay-exact}.

Because $x_0$ is deterministic and every noise variable is centered,
linearity of expectation gives
\begin{align*}
\mathbb{E}[x_t]
&=\frac{2-a}{D_t}x_0
-\frac{\eta}{D_t}\sum_{k=0}^{t-1}\mathbb{E}[\xi_k]
=\frac{2-a}{D_t}x_0.
\end{align*}
The deterministic term contributes no variance, so
\begin{equation*}
\operatorname{Var}(x_t)
=\frac{\eta^2}{D_t^2}
\operatorname{Var}\!\left(\sum_{k=0}^{t-1}\xi_k\right).
\end{equation*}
To evaluate the remaining variance, expand the centered square:
\begin{align*}
\operatorname{Var}\!\left(\sum_{k=0}^{t-1}\xi_k\right)
&=\mathbb{E}\!\left[\left(\sum_{k=0}^{t-1}\xi_k\right)^2\right]\\
&=\sum_{k=0}^{t-1}\mathbb{E}[\xi_k^2]
  +2\sum_{0\leq k<\ell\leq t-1}\mathbb{E}[\xi_k\xi_\ell]\\
&=t\sigma^2.
\end{align*}
The last equality follows because independence and zero means imply
$\mathbb{E}[\xi_k\xi_\ell]
=\mathbb{E}[\xi_k]\mathbb{E}[\xi_\ell]=0$ for $k\neq\ell$.
Substitution proves Eq.~\eqref{eq:decay-exact-moments}. Finally,
\begin{align*}
\mathbb{E}[f(x_t)]
&=\frac d2\mathbb{E}[x_t^2]
=\frac d2\left\{\mathbb{E}[x_t]^2+\operatorname{Var}(x_t)\right\}\\
&=\frac{d(2-a)^2x_0^2}{2D_t^2}
+\frac{dt\eta^2\sigma^2}{2D_t^2}
=O(t^{-1}),
\end{align*}
because the two displayed terms are respectively $O(t^{-2})$ and
$O(t^{-1})$.
\end{proof}

Now define the EMA iterate
\begin{equation}
    \bar{x}_t=\beta\bar{x}_{t-1}+(1-\beta)x_t,
    \qquad 0<\beta<1,
    \qquad \bar{x}_0=x_0.
    \label{eq:decay-ema-update}
\end{equation}

\begin{theorem}[Exact bias--variance decomposition for decay with EMA]
\label{thm:decay-ema-exact}
Under the one-dimensional quadratic setting and inverse-time schedule of
Section~\ref{sec:lr_decay_theory}, suppose that $x_0$ is deterministic,
$0<\eta d<2$, and $0<\beta<1$. Then
\begin{align}
    \bar{x}_t
    ={}&\underbrace{\beta^t x_0
        +(1-\beta)(2-a)x_0
        \sum_{j=1}^{t}\frac{\beta^{t-j}}{D_j}}
        _{\text{mean trajectory / optimization process}}
    \nonumber\\
    &-\underbrace{\eta(1-\beta)
        \sum_{k=0}^{t-1}
        \left(\sum_{j=k+1}^{t}\frac{\beta^{t-j}}{D_j}\right)\xi_k}
        _{\text{stochastic fluctuation}}.
    \label{eq:decay-ema-closed-form}
\end{align}
Consequently,
\begin{align}
    \bar m_t:=\mathbb{E}[\bar{x}_t]
    & =\beta^t x_0+(1-\beta)(2-a)x_0
        \sum_{j=1}^{t}\frac{\beta^{t-j}}{D_j},
    \label{eq:decay-ema-mean}\\
    \bar v_t:=\operatorname{Var}(\bar{x}_t)
    & =\eta^2(1-\beta)^2\sigma^2
        \sum_{k=0}^{t-1}
        \left(\sum_{j=k+1}^{t}\frac{\beta^{t-j}}{D_j}\right)^2.
    \label{eq:decay-ema-variance}
\end{align}
Therefore,
\begin{align}
    \mathbb{E}[f(\bar{x}_t)]
    ={}&\underbrace{\frac d2\left[\beta^t x_0
        +(1-\beta)(2-a)x_0
        \sum_{j=1}^{t}\frac{\beta^{t-j}}{D_j}\right]^2}
        _{\text{optimization term, including EMA lag}}
    \nonumber\\
    &+\underbrace{\frac{d\eta^2(1-\beta)^2\sigma^2}{2}
        \sum_{k=0}^{t-1}
        \left(\sum_{j=k+1}^{t}\frac{\beta^{t-j}}{D_j}\right)^2}
        _{\text{variance contribution}}.
    \label{eq:decay-ema-loss-decomposition}
\end{align}
\end{theorem}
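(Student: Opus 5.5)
The plan is to unroll the EMA recursion in Eq.~\eqref{eq:decay-ema-update} and substitute the exact closed form of the online iterate from Theorem~\ref{thm:decay-exact}. By induction on $t$, or by direct iteration, the initialization $\bar x_0=x_0$ gives
\[
\bar x_t=\beta^t x_0+(1-\beta)\sum_{j=1}^{t}\beta^{t-j}x_j .
\]
This holds because each step multiplies the previous average by $\beta$ and adds $(1-\beta)x_j$. It is valid for all $t\ge0$, with the empty sum at $t=0$.

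Next, for every $j\ge1$, I will insert Eq.~\eqref{eq:decay-exact}, $x_j=\frac{2-a}{D_j}x_0-\frac{\eta}{D_j}\sum_{k=0}^{j-1}\xi_k$. The deterministic part immediately yields the mean-trajectory term $\beta^t x_0+(1-\beta)(2-a)x_0\sum_{j=1}^t\beta^{t-j}/D_j$.

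For the noise part, I will exchange the order of the double sum $\sum_{j=1}^t\sum_{k=0}^{j-1}$, rewriting it as $\sum_{k=0}^{t-1}\sum_{j=k+1}^{t}$. This collects the coefficient of each $\xi_k$ as $\eta(1-\beta)\sum_{j=k+1}^t\beta^{t-j}/D_j$, which is exactly Eq.~\eqref{eq:decay-ema-closed-form}. This index swap is the only step requiring care, but it is routine. I must also note that $D_j=2+(j-1)a>0$ for all $j\ge1$, since $0<a<2$, so every coefficient is well defined.

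The moments then follow as in the proof of Theorem~\ref{thm:decay-exact}. The mean identity Eq.~\eqref{eq:decay-ema-mean} comes from linearity, using $\mathbb{E}[\xi_k]=0$ and deterministic $x_0$. For the variance Eq.~\eqref{eq:decay-ema-variance}, the deterministic term contributes nothing. The noise term is a fixed linear combination of independent, centered, variance-$\sigma^2$ variables, so its cross terms vanish, and the variance is $\eta^2(1-\beta)^2\sigma^2$ times the sum of squared coefficients.

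Finally, since $f(x)=\tfrac d2x^2$, I will write $\mathbb{E}[f(\bar x_t)]=\tfrac d2\bigl(\bar m_t^2+\bar v_t\bigr)$ using $\mathbb{E}[\bar x_t^2]=\mathbb{E}[\bar x_t]^2+\operatorname{Var}(\bar x_t)$. Substituting the two previous displays gives Eq.~\eqref{eq:decay-ema-loss-decomposition}. No real obstacle is expected; the main point of attention is the bookkeeping in the summation swap.
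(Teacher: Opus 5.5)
Your proposal is correct and follows the paper's proof essentially step for step. Like the paper, you unroll the EMA recursion to $\bar x_t=\beta^t x_0+(1-\beta)\sum_{j=1}^t\beta^{t-j}x_j$, substitute the closed form from Theorem~\ref{thm:decay-exact}, swap the double sum to collect each $\xi_k$'s coefficient, and read off the mean, the variance, and $\mathbb{E}[f(\bar x_t)]=\tfrac d2(\bar m_t^2+\bar v_t)$; your remark that $D_j>0$ for $j\geq1$ is a small check the paper leaves implicit.
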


\begin{proof}
Unrolling the EMA recursion gives
\begin{equation}
    \bar{x}_t=\beta^t x_0
    +(1-\beta)\sum_{j=1}^{t}\beta^{t-j}x_j.
    \label{eq:decay-ema-expansion}
\end{equation}
Substitution of Eq.~\eqref{eq:decay-exact} yields
\begin{align}
    \bar{x}_t
    ={}&\beta^t x_0+(1-\beta)(2-a)x_0
    \sum_{j=1}^{t}\frac{\beta^{t-j}}{D_j}
    \nonumber\\
    &-\eta(1-\beta)
    \sum_{j=1}^{t}\frac{\beta^{t-j}}{D_j}
    \sum_{k=0}^{j-1}\xi_k.
\end{align}
Exchanging the order of summation in the stochastic term gives
\begin{equation}
    \sum_{j=1}^{t}\frac{\beta^{t-j}}{D_j}
    \sum_{k=0}^{j-1}\xi_k
    =\sum_{k=0}^{t-1}
    \left(\sum_{j=k+1}^{t}\frac{\beta^{t-j}}{D_j}\right)\xi_k,
\end{equation}
because the original index set
$\{(j,k):1\leq j\leq t,\ 0\leq k\leq j-1\}$ is equivalently
$\{(k,j):0\leq k\leq t-1,\ k+1\leq j\leq t\}$. This proves
Eq.~\eqref{eq:decay-ema-closed-form}.

For clarity, denote the deterministic part and the coefficient of each noise
variable by
\begin{align*}
A_t
&:=\beta^t x_0+(1-\beta)(2-a)x_0
  \sum_{j=1}^{t}\frac{\beta^{t-j}}{D_j},\\
c_{t,k}
&:=\eta(1-\beta)
  \sum_{j=k+1}^{t}\frac{\beta^{t-j}}{D_j}.
\end{align*}
Then Eq.~\eqref{eq:decay-ema-closed-form} becomes
\begin{equation*}
    \bar{x}_t=A_t-\sum_{k=0}^{t-1}c_{t,k}\xi_k.
\end{equation*}
Since $A_t$ and the coefficients $c_{t,k}$ are deterministic and
$\mathbb{E}[\xi_k]=0$,
\begin{align*}
\mathbb{E}[\bar{x}_t]
&=A_t-\sum_{k=0}^{t-1}c_{t,k}\mathbb{E}[\xi_k]
=A_t,
\end{align*}
which is Eq.~\eqref{eq:decay-ema-mean}. For the variance, the deterministic
term drops out, and expansion of the centered square gives
\begin{align*}
\operatorname{Var}(\bar{x}_t)
&=\mathbb{E}\!\left[
  \left(\sum_{k=0}^{t-1}c_{t,k}\xi_k\right)^2
  \right]\\
&=\sum_{k=0}^{t-1}c_{t,k}^2\mathbb{E}[\xi_k^2]
 +2\sum_{0\leq k<\ell\leq t-1}
 c_{t,k}c_{t,\ell}\mathbb{E}[\xi_k\xi_\ell]\\
&=\sigma^2\sum_{k=0}^{t-1}c_{t,k}^2\\
&=\eta^2(1-\beta)^2\sigma^2
  \sum_{k=0}^{t-1}
  \left(\sum_{j=k+1}^{t}\frac{\beta^{t-j}}{D_j}\right)^2.
\end{align*}
The third line uses independence and centering:
$\mathbb{E}[\xi_k\xi_\ell]
=\mathbb{E}[\xi_k]\mathbb{E}[\xi_\ell]=0$ for $k\neq\ell$, while
$\mathbb{E}[\xi_k^2]=\sigma^2$. This proves
Eq.~\eqref{eq:decay-ema-variance}.

Finally, because $f(x)=d x^2/2$,
\begin{align*}
\mathbb{E}[f(\bar{x}_t)]
&=\frac d2\mathbb{E}[\bar{x}_t^2]\\
&=\frac d2\left\{
  \mathbb{E}[\bar{x}_t]^2+\operatorname{Var}(\bar{x}_t)
  \right\}\\
&=\frac d2 A_t^2
  +\frac{d\sigma^2}{2}\sum_{k=0}^{t-1}c_{t,k}^2.
\end{align*}
Substituting the definitions of $A_t$ and $c_{t,k}$ gives
Eq.~\eqref{eq:decay-ema-loss-decomposition}.
\end{proof}

For every fixed $0<\beta<1$,
Eqs.~\eqref{eq:decay-ema-mean}--\eqref{eq:decay-ema-loss-decomposition}
imply
\begin{equation}
    \mathbb{E}[\bar{x}_t]=O(t^{-1}),
    \qquad \operatorname{Var}(\bar{x}_t)=O(t^{-1}),
    \qquad \mathbb{E}[f(\bar{x}_t)]=O(t^{-1}).
\end{equation}
Thus EMA preserves the contraction obtained under learning-rate decay. The
inverse-time schedule permits this closed-form calculation; the cosine-decay
experiments in Section~\ref{sec:variance_optimization_tradeoff} show a
similar empirical reduction in checkpoint variability.

\subsubsection{Smooth Nonconvex Stationarity with Learning-Rate Decay}
\label{app:decay_nonconvex}

\begin{theorem}[Nonconvex stationarity of SGD with decaying learning rates]
\label{thm:decay_nonconvex}
Let $f:\mathbb{R}^d\to\mathbb{R}$ be $L$-smooth and bounded below by
$f_\star$. Consider
\begin{equation}
    x_{t+1}=x_t-\eta_t g_t,
    \qquad g_t=\nabla f(x_t)+\xi_t,
\end{equation}
where, conditionally on the history $\mathcal{F}_t$ before sampling $g_t$,
\begin{equation}
    \mathbb{E}[\xi_t\mid\mathcal{F}_t]=0,
    \qquad
    \mathbb{E}[\|\xi_t\|^2\mid\mathcal{F}_t]\leq\sigma^2.
\end{equation}
If $0<\eta_t\leq1/L$, then for every $T\geq1$,
\begin{equation}
    \sum_{t=0}^{T-1}\eta_t
    \mathbb{E}\|\nabla f(x_t)\|^2
    \leq
    2\Delta_0+L\sigma^2\sum_{t=0}^{T-1}\eta_t^2,
    \qquad \Delta_0:=f(x_0)-f_\star.
    \label{eq:decay_nonconvex_weighted}
\end{equation}
Equivalently, if $R_T\in\{0,\ldots,T-1\}$ is sampled independently with
$\mathbb{P}(R_T=t)=\eta_t/\sum_{k=0}^{T-1}\eta_k$, then
\begin{equation}
    \mathbb{E}\|\nabla f(x_{R_T})\|^2
    \leq
    \frac{2\Delta_0+L\sigma^2\sum_{t=0}^{T-1}\eta_t^2}
         {\sum_{t=0}^{T-1}\eta_t}.
    \label{eq:decay_nonconvex_random_iterate}
\end{equation}
In particular, any schedule satisfying
$\sum_{t=0}^{\infty}\eta_t=\infty$ and
$\sum_{t=0}^{\infty}\eta_t^2<\infty$ makes the right-hand side of
Eq.~\eqref{eq:decay_nonconvex_random_iterate} converge to zero, and hence
$\mathbb{E}\|\nabla f(x_{R_T})\|^2\to0$. Since the minimum is no larger
than this weighted average, it also follows that
$\min_{0\leq t<T}\mathbb{E}\|\nabla f(x_t)\|^2\to0$.
\end{theorem}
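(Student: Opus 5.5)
The plan is to repeat the one-step descent argument from the proof of Theorem~\ref{thm:nonconvex_sgd}, with the step size now depending on $t$, and then sum with weights that need not be uniform.

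\textbf{One-step descent.} $L$-smoothness gives
\[
f(x_{t+1})\le f(x_t)-\eta_t\langle\nabla f(x_t),g_t\rangle+\tfrac{L\eta_t^2}{2}\|g_t\|^2.
\]
We then take conditional expectation given $\mathcal{F}_t$. Here $\eta_t$ is deterministic and $x_t$ is $\mathcal{F}_t$-measurable. The cross term in $\|\nabla f(x_t)+\xi_t\|^2$ vanishes because $\mathbb{E}[\xi_t\mid\mathcal{F}_t]=0$, so
\[
\mathbb{E}[\|g_t\|^2\mid\mathcal{F}_t]\le\|\nabla f(x_t)\|^2+\sigma^2 .
\]
This yields
\[
\mathbb{E}[f(x_{t+1})\mid\mathcal{F}_t]\le f(x_t)-\bigl(\eta_t-\tfrac{L\eta_t^2}{2}\bigr)\|\nabla f(x_t)\|^2+\tfrac{L\eta_t^2\sigma^2}{2}.
\]
Since $\eta_t\le 1/L$, the coefficient satisfies $\eta_t-L\eta_t^2/2\ge\eta_t/2$.

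\textbf{Summation.} Taking total expectations gives
\[
\tfrac{\eta_t}{2}\mathbb{E}\|\nabla f(x_t)\|^2\le\mathbb{E}f(x_t)-\mathbb{E}f(x_{t+1})+\tfrac{L\sigma^2\eta_t^2}{2}.
\]
Summing over $t=0,\dots,T-1$ telescopes the function values. Bounding $\mathbb{E}f(x_T)\ge f_\star$ and multiplying by $2$ gives \eqref{eq:decay_nonconvex_weighted}. One technical point is integrability of $f(x_t)$ and $\|\nabla f(x_t)\|^2$. I would handle it either by noting that $f$ is bounded below, so the relevant expectations of nonnegative quantities $f-f_\star$ are well defined in $[0,\infty]$, or by an induction showing finiteness from the quadratic upper bound. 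This is routine, and it is the only place needing care.

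\textbf{Random iterate and limit.} Because $R_T$ is independent of the trajectory,
\[
\mathbb{E}\|\nabla f(x_{R_T})\|^2=\sum_t\frac{\eta_t}{\sum_k\eta_k}\,\mathbb{E}\|\nabla f(x_t)\|^2 .
\]
Dividing \eqref{eq:decay_nonconvex_weighted} by $\sum_k\eta_k$ then gives \eqref{eq:decay_nonconvex_random_iterate}. For the limit, the numerator stays bounded by $2\Delta_0+L\sigma^2\sum_{t\ge0}\eta_t^2<\infty$, while the denominator diverges, so the ratio tends to $0$. Finally, a weighted average of nonnegative numbers dominates their minimum, which gives $\min_{t<T}\mathbb{E}\|\nabla f(x_t)\|^2\to0$. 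Lemma~\ref{lem:lr_decay_nonconvex} is exactly \eqref{eq:decay_nonconvex_random_iterate} written as a weighted average.
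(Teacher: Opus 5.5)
Your proposal is correct and follows the paper's proof step for step: the $L$-smoothness descent inequality with step $\eta_t$, the conditional second-moment bound, the coefficient bound $\eta_t - L\eta_t^2/2 \ge \eta_t/2$, telescoping with $\mathbb{E}f(x_T)\ge f_\star$, and the law of total expectation for the independently sampled $R_T$. Your extra remark on integrability, shown by induction that $\mathbb{E}[f(x_t)-f_\star]$ is finite, addresses a point the paper leaves implicit but does not change the argument.
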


\begin{proof}
By $L$-smoothness,
\begin{equation}
    f(x_{t+1})
    \leq f(x_t)-\eta_t\langle\nabla f(x_t),g_t\rangle
    +\frac{L\eta_t^2}{2}\|g_t\|^2.
\end{equation}
Because $x_t$ is $\mathcal{F}_t$-measurable, the conditional assumptions give
$\mathbb{E}[g_t\mid\mathcal{F}_t]=\nabla f(x_t)$. Expanding
$\|\nabla f(x_t)+\xi_t\|^2$ and using
$\mathbb{E}[\xi_t\mid\mathcal{F}_t]=0$ gives the second-moment bound
\begin{equation}
    \mathbb{E}[g_t\mid\mathcal{F}_t]=\nabla f(x_t),
    \qquad
    \mathbb{E}[\|g_t\|^2\mid\mathcal{F}_t]
    \leq\|\nabla f(x_t)\|^2+\sigma^2.
\end{equation}
Taking conditional expectation and using $\eta_t\leq1/L$ therefore gives
\begin{equation}
    \mathbb{E}[f(x_{t+1})\mid\mathcal{F}_t]
    \leq f(x_t)-\frac{\eta_t}{2}\|\nabla f(x_t)\|^2
    +\frac{L\eta_t^2\sigma^2}{2}.
\end{equation}
Taking total expectations and rearranging gives the one-step inequality
\begin{equation*}
\eta_t\mathbb{E}\|\nabla f(x_t)\|^2
\leq
2\bigl(\mathbb{E}f(x_t)-\mathbb{E}f(x_{t+1})\bigr)
+L\eta_t^2\sigma^2.
\end{equation*}
Summing it explicitly from $t=0$ to $T-1$ telescopes the function values:
\begin{align*}
\sum_{t=0}^{T-1}\eta_t
\mathbb{E}\|\nabla f(x_t)\|^2
&\leq
2\sum_{t=0}^{T-1}
\bigl(\mathbb{E}f(x_t)-\mathbb{E}f(x_{t+1})\bigr)
+L\sigma^2\sum_{t=0}^{T-1}\eta_t^2\\
&=2\bigl(f(x_0)-\mathbb{E}f(x_T)\bigr)
+L\sigma^2\sum_{t=0}^{T-1}\eta_t^2\\
&\leq2\bigl(f(x_0)-f_\star\bigr)
+L\sigma^2\sum_{t=0}^{T-1}\eta_t^2\\
&=2\Delta_0+L\sigma^2\sum_{t=0}^{T-1}\eta_t^2,
\end{align*}
which is Eq.~\eqref{eq:decay_nonconvex_weighted}.

Let $S_T:=\sum_{k=0}^{T-1}\eta_k$. Since $R_T$ is sampled independently
with $\mathbb{P}(R_T=t)=\eta_t/S_T$, the law of total expectation gives
\begin{align*}
\mathbb{E}\|\nabla f(x_{R_T})\|^2
&=\sum_{t=0}^{T-1}\mathbb{P}(R_T=t)
  \mathbb{E}\|\nabla f(x_t)\|^2\\
&=\frac{1}{S_T}\sum_{t=0}^{T-1}\eta_t
  \mathbb{E}\|\nabla f(x_t)\|^2\\
&\leq
\frac{2\Delta_0+L\sigma^2\sum_{t=0}^{T-1}\eta_t^2}{S_T},
\end{align*}
which proves Eq.~\eqref{eq:decay_nonconvex_random_iterate}. Under the stated
summability conditions, the numerator is bounded while $S_T\to\infty$, so
the bound converges to zero.
\end{proof}

\subsection{Schedule-Free}
\label{app:schedule_free}
\subsubsection{Quadratic Variance Reduction}
% MANUAL CHECK: This theorem formalizes the one-dimensional theorem/callout
% and its accompanying derivation in the source DOCX.
\begin{theorem}[Variance decay of the Schedule-Free averaged sequence]
\label{thm:schedule_free_one_dimensional}
Consider the one-dimensional quadratic objective $g(x)=\tfrac12 d x^2$, $d>0$, with stochastic gradient $g'(x)+\xi_t=dx+\xi_t$, where $\xi_t\overset{\mathrm{i.i.d.}}{\sim}\mathcal{N}(0,\sigma^2)$. Let $(y_t,z_t,x_t)$ follow the Schedule-Free recursion in Eq.~\eqref{eq:schedule_free} from deterministic initial values. If
\begin{equation}
0\leq\beta<1,\qquad a:=1-\gamma d(1-\beta)\in(0,1),
\end{equation}
then the averaged sequence $\{x_t\}$ satisfies
\begin{equation}
\mathbb{E}[x_t]=O(t^{-1}),\qquad
\operatorname{Var}(x_t)=O(t^{-1}).
\label{eq:schedule_free_one_dimensional_rates}
\end{equation}
Consequently, $\mathbb{E}[x_t^2]=O(t^{-1})$, $x_t\overset{L^2}{\to}0$, $x_t\overset{P}{\to}0$, and $\mathcal{L}(x_t)\Rightarrow\delta_0$.
\end{theorem}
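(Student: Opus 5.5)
By linearity of the recursion in Eq.~\eqref{eq:schedule_free} in one dimension, I would write $x_t=A_t-\sum_{k=0}^{t-1}c_{t,k}\xi_k$. Here $A_t$ is the trajectory of the noiseless system started from the given initial values. The coefficient $c_{t,k}$ is the response at time $t$ of the same deterministic time-varying linear system to a unit kick $z_{k+1}\mapsto z_{k+1}-\gamma$ injected at step $k$. Then $\mathbb{E}[x_t]=A_t$ and $\operatorname{Var}(x_t)=\sigma^2\sum_k c_{t,k}^2$, exactly as in the proof of Theorem~\ref{thm:decay-ema-exact}. The claimed rates therefore reduce to two deterministic estimates:
\[
|A_t|\le C/t \qquad\text{and}\qquad \sup_{k<t}|c_{t,k}|\le C/t,
\]
with $C$ independent of $t$ and $k$. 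These give $\operatorname{Var}(x_t)\le\sigma^2 t\,C^2/t^2=O(t^{-1})$ and hence $\mathbb{E}[x_t^2]=O(t^{-1})$. The statements about $L^2$ convergence, convergence in probability (via Chebyshev's inequality) and weak convergence to $\delta_0$ then follow immediately.

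To analyse the deterministic system I would first use the averaging identity $(t+1)x_{t+1}=t x_t+z_{t+1}$. This gives $t x_t=\sum_{s=1}^t z_s$, up to the fixed initial term. Next I would change to "slow--fast" coordinates. Substituting $y_t$, the fast variable obeys
\[
z_{t+1}=a z_t-\gamma d\beta x_t-\gamma\xi_t,
\]
so for frozen $x$ it contracts at rate $a\in(0,1)$ toward $z^\ast(x)=-\beta x/(1-\beta)$. I therefore set $e_t:=z_t+\tfrac{\beta}{1-\beta}x_t$. Using $x_{t+1}-x_t=(z_{t+1}-x_t)/(t+1)$, the pair satisfies
\[
e_{t+1}=a\,e_t+O\!\left(\tfrac{|e_t|+|x_t|}{t}\right)+(\text{kick}),
\qquad
x_{t+1}=\Bigl(1-\tfrac{1}{(1-\beta)(t+1)}\Bigr)x_t+\tfrac{a e_t+\cdots}{t+1}.
\]
The slow part decays like $\prod_s\bigl(1-\tfrac{1}{(1-\beta)s}\bigr)\asymp t^{-1/(1-\beta)}$, which is $\le t^{-1}$ because $1/(1-\beta)\ge1$.

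For an impulse at time $k$, I would argue in two stages. First, $e$ absorbs the kick and decays like $a^{t-k}$, up to the $O(1/t)$ coupling. Second, its cumulative contribution to $x$ is at most $\sum_{s>k}a^{s-k}/s\lesssim 1/(k+1)$. After that the slow dynamics propagate it by the factor $(k/t)^{1/(1-\beta)}\le k/t$. Multiplying the two stages gives $|c_{t,k}|\lesssim \frac{1}{k+1}\cdot\frac{k+1}{t}=O(1/t)$. The same argument applied to the initial conditions gives $|A_t|=O(1/t)$.

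The main obstacle is making this two-timescale heuristic rigorous, because the $O(1/t)$ cross terms couple $e$ and $x$. My first attempt would be to treat $(|e_t|,|x_t|)$ as a vector and bound the $2\times2$ transition matrices entrywise by a nonnegative comparison system. I would then prove by induction that $|e_t|\le K/t$ and $|x_t|\le K/t$ hold once $t\ge t_0$, choosing $t_0$ large enough that $\tfrac{C}{t_0}<1-a$ so that the coupling is dominated by the contraction. The finitely many steps before $t_0$ contribute only a bounded constant. Since the contraction for $x$ has exponent at least $1$ and the bound $\sum_{s>k}a^{s-k}/s\lesssim 1/(k+1)$ holds uniformly in $k$, this should close with a uniform $C$.
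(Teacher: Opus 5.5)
Your reduction is sound. With $x_t=A_t-\sum_{k<t}c_{t,k}\xi_k$, the statement follows from $|A_t|\le C/t$ and $\sup_{k<t}|c_{t,k}|\le C/t$. Both estimates are true, and your slow--fast picture with $e_t=z_t+\tfrac{\beta}{1-\beta}x_t$ is the right heuristic. The gap is in the step you flag as the main obstacle: the induction you propose does not close.

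Put $p=1/(1-\beta)$, so that without noise $x_{t+1}=(1-\tfrac{p}{t+1})x_t+\tfrac{a e_t}{t+1}$. From $|e_t|\le K/t$ and $|x_t|\le K/t$ you only get $|x_{t+1}|\le K(t+1-p+a)/(t(t+1))$. This is at most $K/(t+1)$ iff $a\le p-1=\beta/(1-\beta)$. With separate constants you would need $K_e/K_x\le\beta/((1-\beta)a)$. At $\beta=0$, which the statement allows, no choice works. There $p=1$ exactly, so ``exponent at least $1$'' is the borderline with zero slack. A forcing of size $K/t^2$ into $x$ then lets the induction certify only $t|x_t|=O(\log t)$. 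Taking $t_0$ large helps the $e$-step, not this one.

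Independently, $|e_t|\le K/t$ is incompatible with uniformity in $k$. A kick at a late time $k\ge t_0$ gives $|e_{k+1}|\ge\gamma$, which exceeds $K/(k+1)$ as soon as $k+1>K/\gamma$. Your remark about the finitely many steps before $t_0$ covers the initial data, i.e.\ $A_t$, but not $c_{t,k}$. The variance bound needs exactly that uniformity.

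The missing idea is that $e$ is slaved to $x/t$. The usable hypothesis has the form $|e_t|\le C_1\rho^{\,t-k-1}+C_2X_t/t^2$, with $\rho\in(a,1)$ and $X_t=\sup_{s\le t}s|x_s|$. Then $(t+1)|x_{t+1}|\le t|x_t|+a|e_t|$ has summable forcing. A discrete Gronwall bound with $\prod_s(1+C/s^2)<\infty$ then gives $X_t\le C$ uniformly in $k$.

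The paper sidesteps all of this by tracking $t x_t$ rather than $x_t$, through $u_t=(1-a)t x_t+a z_t$. This gives $u_{t+1}=u_t-(1-a)\tfrac{\beta}{1-\beta}x_t-\gamma\xi_t$. For the $(u,z)$ transition matrix $M_t$, the $O(1/t)$ cross terms in $\|M_tv\|^2$ cancel, so $M_t$ is non-expansive for $t\ge T$. Hence the mean of $(u_t,z_t)$ stays bounded and $\mathbb{E}[\hat u_t^2+\hat z_t^2]=O(t)$. The identity $x_t=(u_t-az_t)/((1-a)t)$ then yields both rates. In your language, the $(u,z)$-response to any kick is uniformly bounded, which is precisely $|c_{t,k}|\le C/t$.
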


\begin{proof}
Set
\begin{equation}
r=\frac{\beta}{1-\beta},\qquad a=1-\gamma d(1-\beta),\qquad
u_t=(1-a)t x_t+a z_t.
\end{equation}
Since $\gamma d\beta=(1-a)r$,
\begin{equation}
z_{t+1}=az_t-(1-a)r x_t-\gamma\xi_t,
\end{equation}
and $(t+1)x_{t+1}=t x_t+z_{t+1}$ gives
\begin{equation}
u_{t+1}=u_t-(1-a)r x_t-\gamma\xi_t.
\end{equation}
Using $x_t=(u_t-a z_t)/((1-a)t)$,
\begin{equation}
\begin{pmatrix}u_{t+1}\\z_{t+1}\end{pmatrix}
=M_t\begin{pmatrix}u_t\\z_t\end{pmatrix}-\gamma\xi_t\begin{pmatrix}1\\1\end{pmatrix},
\qquad
M_t=\begin{pmatrix}1-r/t & ar/t\\-r/t & a(1+r/t)\end{pmatrix}.
\end{equation}
For $q=r/t$ and $v=(u,z)^\top$,
\begin{align}
\|M_tv\|_2^2
&=((1-q)u+aqz)^2+(-qu+a(1+q)z)^2\\
&=(1-2q)u^2+a^2(1+2q)z^2+2q^2(u-az)^2\\
&\leq(1-2q+4q^2)u^2+a^2(1+2q+4q^2)z^2.
\end{align}
Because $q\to0$ and $a^2<1$, there is a fixed $T$ such that $\|M_tv\|_2\leq\|v\|_2$ for all $t\geq T$. Applying this to the mean recurrence bounds $\mathbb{E}[u_t]^2+\mathbb{E}[z_t]^2$ and, by Cauchy--Schwarz, gives $\mathbb{E}[x_t]=O(t^{-1})$.

For centered variables $\hat u_t$ and $\hat z_t$, define $Q_t=\mathbb{E}[\hat u_t^2+\hat z_t^2]$. Independence and zero-mean noise give
\begin{equation}
Q_{t+1}\leq Q_t+2\gamma^2\sigma^2,
\end{equation}
so $Q_t=O(t)$. Again by Cauchy--Schwarz,
\begin{equation}
\operatorname{Var}(x_t)
\leq\frac{1+a^2}{(1-a)^2t^2}Q_t=O(t^{-1}).
\end{equation}
Hence $\mathbb{E}[x_t^2]=O(t^{-1})\to0$ and $x_t\overset{L^2}{\to}0$, $x_t\overset{P}{\to}0$, and $\mathcal{L}(x_t)\Rightarrow\delta_0$.
\end{proof}

\subsubsection{Smooth Nonconvex Stationarity}
\label{app:sf_nonconvex}

The following theorem records the corresponding guarantee beyond the
quadratic setting.

\begin{theorem}[Nonconvex stationarity of Schedule-Free SGD]
\label{thm:sf_nonconvex_main}
Let $f:\mathbb{R}^d\to\mathbb{R}$ be $L$-smooth and bounded below by
$f_\star$. Suppose that, conditionally on the history $\mathcal F_t$,
the stochastic gradient evaluated at $y_t$ satisfies
\begin{equation}
\mathbb{E}[g_t\mid\mathcal F_t]=\nabla f(y_t),
\qquad
\mathbb{E}\!\left[\|g_t-\nabla f(y_t)\|^2\mid\mathcal F_t\right]
\leq\sigma^2.
\end{equation}
Starting from $x_0=z_0$, run
\begin{equation}
\begin{aligned}
y_t&=(1-\beta)z_t+\beta x_t,
& z_{t+1}&=z_t-\gamma_t g_t,
& x_{t+1}&=(1-c_{t+1})x_t+c_{t+1}z_{t+1},
\end{aligned}
\end{equation}
where $0\leq\beta<1$, $0<\gamma\leq1/L$, and, for a warmup length
$1\leq T_{\mathrm w}\leq T$ and a constant $C>0$,
\begin{equation}
\gamma_t=\gamma\min\!\left\{1,\frac{t+1}{T_{\mathrm w}}\right\},
\qquad
c_{t+1}=\min\!\left\{
\frac{(1-\beta)C\gamma_t^2}{\sum_{i=0}^{t}\gamma_i^2},1
\right\}.
\end{equation}
Define $p_{\mathrm w}:=(T_{\mathrm w}-1)/T$. Then
\begin{equation}
\begin{aligned}
\min_{0\leq t<T}\mathbb{E}\|\nabla f(y_t)\|^2
\leq{}&
\frac{2\bigl(f(x_0)-f_\star\bigr)}
{\gamma(1-\beta)(1-p_{\mathrm w})T}
+\frac{\gamma L(1-\beta)\sigma^2}{1-p_{\mathrm w}}\\
&+\frac{3\beta(2-\beta)C\sigma^2(1+\log T)}
{(1-p_{\mathrm w})T}.
\end{aligned}
\label{eq:sf_nonconvex_chae}
\end{equation}
\end{theorem}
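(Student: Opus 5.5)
The plan is a one-step descent argument on the gradient-evaluation sequence $y_t$, telescoped over $t$. It mirrors the proofs of Theorems~\ref{thm:nonconvex_sgd} and~\ref{thm:decay_nonconvex}, except that $y_t$ does not move by a pure SGD step: there is an extra drift coming from the averaging. Since $x_0=z_0$ gives $y_0=x_0$, telescoping $f(y_t)$ will produce the $f(x_0)-f_\star$ term directly. The first computation is the exact increment:
\begin{equation*}
y_{t+1}-y_t=(1-\beta)(z_{t+1}-z_t)+\beta(x_{t+1}-x_t)
=-(1-\beta)\gamma_t g_t+\beta c_{t+1}\,(z_{t+1}-x_t).
\end{equation*}
The first piece is an SGD step on $y_t$ with effective step size $(1-\beta)\gamma_t$. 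The second piece is a bias term whose size is controlled by the gap $\delta_t:=z_t-x_t$.

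I would then apply $L$-smoothness at $y_t$ and take conditional expectations. The SGD piece gives, exactly as in Theorem~\ref{thm:decay_nonconvex}, a decrease $-(1-\beta)\gamma_t\|\nabla f(y_t)\|^2$ plus a noise term $\tfrac{L}{2}(1-\beta)^2\gamma_t^2\sigma^2$. Here $\gamma_t\le 1/L$ absorbs the gradient part of the second moment, leaving a decrease of $\tfrac{1}{2}(1-\beta)\gamma_t\|\nabla f(y_t)\|^2$.

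The drift piece contributes two things: a cross term $\beta c_{t+1}\langle\nabla f(y_t),z_{t+1}-x_t\rangle$ and a quadratic term of order $L\beta^2c_{t+1}^2\|z_{t+1}-x_t\|^2$. I would handle the cross term with Young's inequality, placing a fraction of $(1-\beta)\gamma_t\|\nabla f(y_t)\|^2$ on one side. The other side is then of the form $\beta^2c_{t+1}^2\|z_{t+1}-x_t\|^2/((1-\beta)\gamma_t)$.

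Next I would control $\delta_t$ by a recursion. From the updates, $z_{t+1}-x_t=\delta_t-\gamma_t g_t$ and $\delta_{t+1}=(1-c_{t+1})(z_{t+1}-x_t)$. Hence
\[
\mathbb{E}\|\delta_{t+1}\|^2\le(1-c_{t+1})^2\bigl(\mathbb{E}\|\delta_t\|^2+\gamma_t^2\mathbb{E}\|g_t\|^2-2\gamma_t\mathbb{E}\langle\delta_t,\nabla f(y_t)\rangle\bigr).
\]
The aim is to bound $c_{t+1}\,\mathbb{E}\|z_{t+1}-x_t\|^2$ by a multiple of $\sigma^2 C\gamma_t^2\cdot\gamma_t/\sum_{i\le t}\gamma_i^2$, plus gradient terms small enough to be absorbed into the descent. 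The specific choice $c_{t+1}\propto\gamma_t^2/\sum_{i\le t}\gamma_i^2$ is what makes this balance: the accumulated noise in $\delta_t$ scales like $\sum_i\gamma_i^2\sigma^2$, and $c_{t+1}$ cancels exactly that. After summing over $t$, the residual is $\sum_t\gamma_t^2/\sum_{i\le t}\gamma_i^2\le 1+\log T$. That is the source of the $\beta(2-\beta)C\sigma^2(1+\log T)$ term, where the factor $\beta(2-\beta)=1-(1-\beta)^2$ collects the $\beta$-weighted contributions.

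Finally I would sum over $t<T$, telescope $f(y_t)$ down to $f(x_0)-f_\star$, and lower-bound the weights. Since $\gamma_t=\gamma$ for $t\ge T_{\mathrm w}-1$, we have $\sum_{t<T}\gamma_t\ge\gamma(T-T_{\mathrm w}+1)=\gamma(1-p_{\mathrm w})T$, and $\gamma_t^2\le\gamma\gamma_t$ bounds the noise sum by $\gamma\sum_t\gamma_t$. Dividing by $\tfrac12(1-\beta)\sum_t\gamma_t$ and using $\min\le$ weighted average yields the three terms of Eq.~\eqref{eq:sf_nonconvex_chae}.

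The main obstacle is the $\delta_t$ recursion. The cross term $\langle\delta_t,\nabla f(y_t)\rangle$ and the gradient part of $\mathbb{E}\|g_t\|^2$ must be absorbed into the available descent without losing a factor that depends on $T$, and the Young's-inequality constants have to be tuned to land exactly on the factor $3$. The first thing I would try is a combined Lyapunov function $f(y_t)+\kappa_t\|\delta_t\|^2$ with $\kappa_t$ proportional to $\beta/(c_{t+1}\gamma_t)$, chosen so that the cross terms cancel. Failing that, I would follow the bookkeeping of \citet[Theorem~4.4]{chae2026understanding} directly.
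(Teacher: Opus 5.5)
The paper gives no argument of its own for this statement; it defers to \citet[Theorem~4.4]{chae2026understanding}, so your fallback is what the paper does. The argument you sketch in its place, however, has a genuine gap at the step you flag as the obstacle.

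\textbf{Where the sketch breaks.} You treat $\delta_t=z_t-x_t$ as essentially accumulated noise that $c_{t+1}$ normalizes away. Unrolling $\delta_{t+1}=(1-c_{t+1})(\delta_t-\gamma_t g_t)$ from $\delta_0=0$ instead gives
\[
\delta_t=-\sum_{s<t}\gamma_s\Bigl(\prod_{r=s+1}^{t}(1-c_r)\Bigr)g_s,
\]
which, even when $\sigma=0$, is a discounted sum of past true gradients. For example, take $T_{\mathrm w}=1$ and $C=1/(1-\beta)$, so that $c_{t+1}=1/(t+1)$. Then $c_{t+1}\delta_t=-\frac{\gamma}{t(t+1)}\sum_{s<t}s\,\nabla f(y_s)$. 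This is about $\gamma/2$ times a weighted average $\bar h_t$ of all past gradients, i.e.\ a momentum term comparable to the SGD step itself. Young's inequality on $\beta c_{t+1}\langle\nabla f(y_t),\delta_t\rangle$ therefore leaves a term of order $\frac{\beta^2}{1-\beta}\gamma\|\bar h_t\|^2$. When past gradients are aligned, this is as large as the available descent, so it can be absorbed only for small $\beta$. The theorem, by contrast, allows every $\beta\in[0,1)$ and $C>0$. Moreover, for $\sigma=0$ it gives exactly the gradient-descent rate with step $(1-\beta)\gamma$, a sign that the cross term must be cancelled rather than paid for.

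\textbf{The repair.} Your Lyapunov idea is the right fix, but the weight is inverted. Set $\alpha_t:=1-\beta+\beta c_{t+1}$, so that $y_{t+1}-y_t=\beta c_{t+1}\delta_t-\alpha_t\gamma_t g_t$. The conditional smoothness expansion of $f(y_{t+1})$ then contains $\beta c_{t+1}(1-L\alpha_t\gamma_t)\langle\nabla f(y_t),\delta_t\rangle$. The potential term $\kappa_{t+1}\mathbb{E}\|\delta_{t+1}\|^2=\kappa_{t+1}(1-c_{t+1})^2\mathbb{E}\|\delta_t-\gamma_t g_t\|^2$ contributes $-2\kappa_{t+1}(1-c_{t+1})^2\gamma_t\langle\delta_t,\nabla f(y_t)\rangle$. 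Exact cancellation forces
\[
\kappa_{t+1}=\frac{\beta c_{t+1}(1-L\alpha_t\gamma_t)}{2\gamma_t(1-c_{t+1})^2},
\]
which is proportional to $\beta c_{t+1}/\gamma_t$, not $\beta/(c_{t+1}\gamma_t)$. With your scaling the cross terms do not match, and the injected noise $\kappa_{t+1}\gamma_t^2\sigma^2$ sums to order $T^2$.

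Now take $\Phi_t=f(y_t)+\kappa_t\|\delta_t\|^2$, which equals $f(x_0)$ at $t=0$. Using $L\gamma_t\le1$, one checks three things:
\begin{itemize}
\item The coefficient of $\|\nabla f(y_t)\|^2$ is at most $-\frac{1-\beta^2}{2}\gamma_t$.
\item The coefficient $\frac{L\beta^2c_{t+1}^2}{2}+\kappa_{t+1}(1-c_{t+1})^2-\kappa_t$ of $\|\delta_t\|^2$ is nonpositive. This holds because $\gamma_t$ is nondecreasing and $c_t$ is nonincreasing, which is exactly where the form $c_{t+1}\propto\gamma_t^2/\sum_{i\le t}\gamma_i^2$ with linear warmup enters. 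Indices with $c_{t+1}=1$ form an initial segment on which $\delta\equiv0$, so dividing by $(1-c_{t+1})^2$ is harmless.
\item The noise is at most $\frac{L(1-\beta)^2}{2}\gamma_t^2\sigma^2+\frac{\beta}{2}\bigl(1+L(1-\beta)\gamma_t\bigr)c_{t+1}\gamma_t\sigma^2$. Its second part is the source of the factor $\beta(2-\beta)$.
\end{itemize}

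\textbf{The factor $3$.} This is not a Young's-inequality constant. Since $\gamma_0=\gamma/T_{\mathrm w}$, one only has $\sum_{t<T}\gamma_t^2/\sum_{i\le t}\gamma_i^2\le1+\log(TT_{\mathrm w}^2)\le3(1+\log T)$. Your bound $1+\log T$ fails under warmup, where the ratio behaves like $3/(t+1)$.
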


This result controls the gradient-evaluation points $y_t$ and does not claim
last-iterate convergence of the evaluation sequence $x_T$. Its proof is given
by \citet[Theorem~4.4]{chae2026understanding}.

\section{Additional Experimental Results}
\raggedbottom

\subsection{Additional Late-Stage Comparisons}
We jointly report FID-5K and Inception Score at late-stage checkpoints, followed by the fixed-evaluation loss components for the same optimization strategies. \textsc{Schedule-Free AdamW} has lower FID-5K and higher Inception Score than \textsc{Constant LR + EMA} at every checkpoint from 320k to 400k, while the independently branched cosine-decay endpoints vary more with decay timing.

\begin{figure}[t]
\centering
\includegraphics[width=\linewidth]{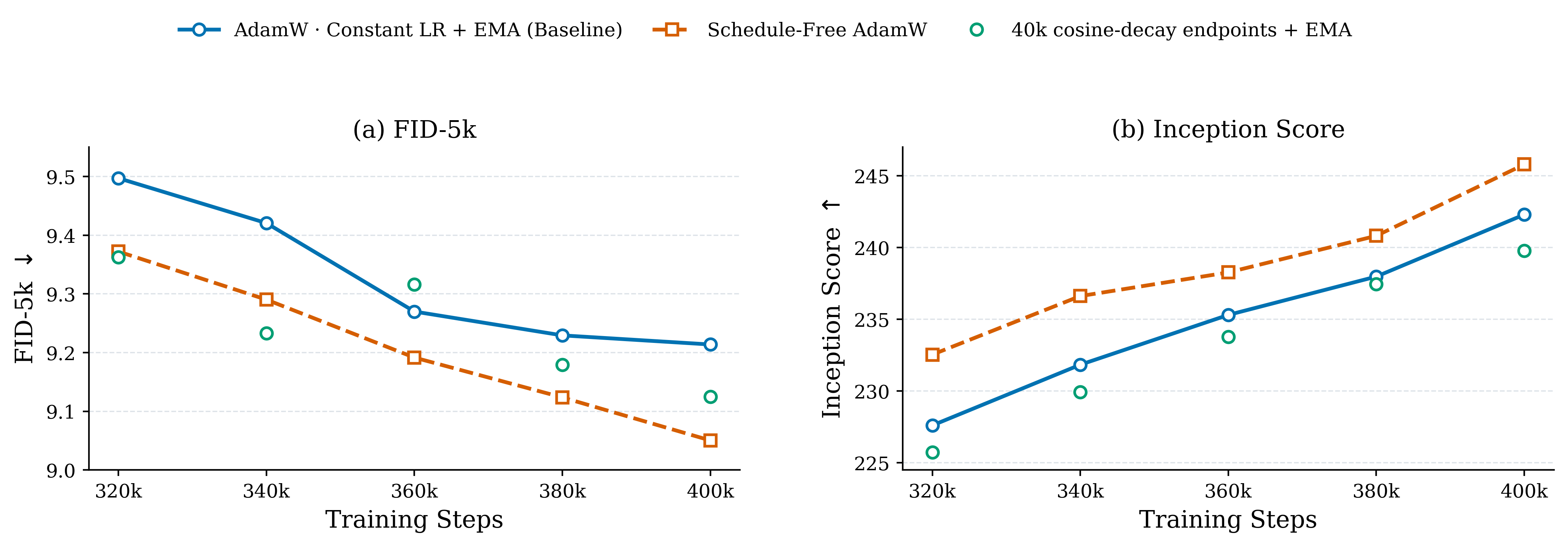}
\caption{Late-stage generation quality for \textsc{Constant LR + EMA}, \textsc{Schedule-Free AdamW}, and 40k \textsc{Decay + EMA}. (a) FID-5K, where lower is better. (b) Inception Score, where higher is better. The \textsc{Constant LR + EMA} and \textsc{Schedule-Free AdamW} curves follow continuous training trajectories. Green markers denote endpoints of independent 40k-step cosine-decay branches and should not be interpreted as a continuous trajectory. \textsc{Schedule-Free AdamW} outperforms \textsc{Constant LR + EMA} on both metrics at every checkpoint from 320k to 400k.}
\label{fig:fid_is_comparison}
\end{figure}

\begin{figure}[t]
\centering
\includegraphics[width=\linewidth]{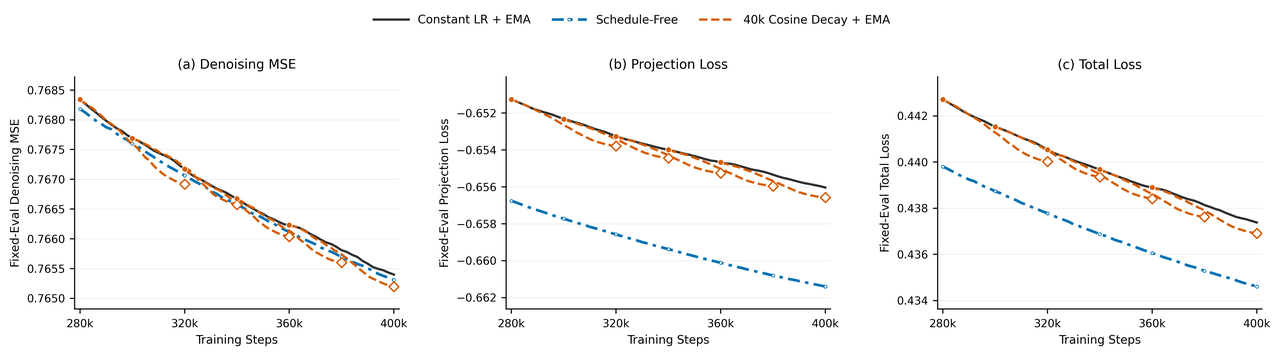}
\caption{Fixed-evaluation objectives for \textsc{Constant LR + EMA}, \textsc{Schedule-Free AdamW}, and 40k \textsc{Decay + EMA}. (a) Denoising MSE is similar across the three strategies. (b) \textsc{Schedule-Free AdamW} maintains a distinctly lower projection loss. (c) Consequently, \textsc{Schedule-Free AdamW} also achieves the lowest total fixed-evaluation loss throughout the evaluated interval. The cosine-decay values are endpoints of independent 40k-step branches; dashed segments connect them only as a visual guide.}
\label{fig:fixed_eval_three_way}
\end{figure}

\subsection{Main FID-5K Results}
\label{app:main_fid_results}
Table~\ref{tab:main_fid_results} reports the exact generation-quality
measurements underlying the comparisons in Sections~3.2--3.4, together with
the corresponding Constant LR and cosine-decay AdamW results without EMA. The
first three blocks are checkpoints from continuous training trajectories. Each
entry in either cosine-decay block is the endpoint
of a separate 40k-step branch initialized from the checkpoint shown on the
left of the arrow. The decay rows therefore compare with the
Constant LR + EMA checkpoint at the same ending step, rather than forming a
single trajectory.

\begin{table}[H]
\caption{Complete generation-quality results for the main experiment, including Constant LR AdamW without EMA and 40k-step cosine-decay AdamW evaluated with and without EMA. The first three methods follow continuous trajectories; every cosine-decay result is the endpoint of an independent branch from a different late-stage checkpoint. FID-5K entries are means over three sampling seeds. Lower FID-5K and sFID and higher Inception Score are better.}
\label{tab:main_fid_results}
\centering
\small
\begin{tabular}{llrrr}
\toprule
Method & Checkpoint / branch & FID-5K & IS & sFID\\
\midrule
\multirow{5}{*}{\textsc{Schedule-Free AdamW}}
& 320k & 9.3722 & 232.5443 & 35.1116\\
& 340k & 9.2903 & 236.6247 & 35.1530\\
& 360k & 9.1916 & 238.2731 & 35.1622\\
& 380k & 9.1237 & 240.8229 & 35.1441\\
& 400k & 9.0504 & 245.7993 & 35.1334\\
\midrule
\multirow{5}{*}{\textsc{Constant LR + EMA}}
& 320k & 9.4972 & 227.6105 & 35.0881\\
& 340k & 9.4207 & 231.8407 & 35.2050\\
& 360k & 9.2700 & 235.3101 & 35.1960\\
& 380k & 9.2294 & 237.9821 & 35.2019\\
& 400k & 9.2139 & 242.3034 & 35.1140\\
\midrule
\multirow{5}{*}{\textsc{Constant LR}}
& 320k & 13.7341 & 163.1902 & 38.1238\\
& 340k & 13.8222 & 168.1524 & 40.5599\\
& 360k & 11.5768 & 182.6766 & 35.3735\\
& 380k & 11.0776 & 190.8148 & 35.0794\\
& 400k & 11.1509 & 192.8839 & 36.3858\\
\midrule
\multirow{5}{*}{\textsc{LR Decay}}
& 280k$\to$320k & 9.7146 & 219.9526 & 35.7952\\
& 300k$\to$340k & 9.5220 & 225.3971 & 35.6992\\
& 320k$\to$360k & 9.6087 & 229.0856 & 35.7077\\
& 340k$\to$380k & 9.5660 & 231.1641 & 35.8078\\
& 360k$\to$400k & 9.4895 & 231.0582 & 36.0005\\
\midrule
\multirow{5}{*}{\textsc{Decay + EMA}}
& 280k$\to$320k & 9.3629 & 225.7234 & 35.5324\\
& 300k$\to$340k & 9.2330 & 229.9364 & 35.5310\\
& 320k$\to$360k & 9.3163 & 233.7810 & 35.4139\\
& 340k$\to$380k & 9.1795 & 237.4666 & 35.4731\\
& 360k$\to$400k & 9.1248 & 239.7905 & 35.6127\\
\bottomrule
\end{tabular}
\end{table}

Schedule-Free improves FID-5K over Constant LR + EMA at all five checkpoints,
with reductions ranging from $0.0784$ to $0.1635$, and also yields a higher
Inception Score throughout. Its sFID changes are small and not uniformly
favorable, so the improvement is not identical across every metric. Without
EMA, cosine decay substantially improves FID-5K over the corresponding
Constant LR checkpoints at all five endpoints. Cosine Decay + EMA is more
timing-sensitive relative to the Constant LR + EMA baseline: four of the five branches improve endpoint
FID-5K, whereas the branch starting at 320k is worse at 360k
($9.3163$ versus $9.2700$). The complete measurements thus support the main
text's distinction between the consistent gains of Schedule-Free and the
start-time-dependent gains of late-stage decay.

\subsection{Timestep-Wise Denoising Error}
\label{app:timestep}
To test whether the aggregate denoising improvement hides a regression at any particular diffusion noise level, we evaluate the two methods under the fixed-evaluation protocol of Appendix~\ref{app:main_config}. We reuse the same evaluation data and noise for both methods and vary only the timestep, thereby removing sampling variation from the pairwise comparison. We test
$t\in\{0.02,0.10,0.20,0.40,0.60,0.80,0.90,0.96\}$ and additionally report the result under randomly sampled timesteps. The relative improvement in the final column is
$100(\mathcal{L}_{\mathrm{Constant+EMA}}-\mathcal{L}_{\mathrm{SF}})/\mathcal{L}_{\mathrm{Constant+EMA}}$, so a positive value favors Schedule-Free.
\begin{table}[H]
\caption{Timestep-wise denoising MSE. Relative improvement is for Schedule-Free versus Constant + EMA.}
\centering
\begin{tabular}{lrrr}
\toprule
Timestep & Constant + EMA & Schedule-Free & Relative improvement\\
\midrule
random $t$ & 0.766211 & \textbf{0.765308} & 0.12\%\\
$t=0.02$ & 1.016053 & \textbf{1.013539} & 0.25\%\\
$t=0.10$ & 0.949475 & \textbf{0.947750} & 0.18\%\\
$t=0.20$ & 0.868994 & \textbf{0.867809} & 0.14\%\\
$t=0.40$ & 0.796613 & \textbf{0.795586} & 0.13\%\\
$t=0.60$ & 0.709609 & \textbf{0.708319} & 0.18\%\\
$t=0.80$ & 0.615276 & \textbf{0.614681} & 0.10\%\\
$t=0.90$ & 0.601724 & \textbf{0.601448} & 0.05\%\\
$t=0.96$ & 0.616707 & \textbf{0.616376} & 0.05\%\\
\bottomrule
\end{tabular}
\end{table}
Schedule-Free has lower denoising MSE in all eight fixed-timestep evaluations and under random timestep sampling. The relative reduction ranges from $0.05\%$ to $0.25\%$: it is largest at $t=0.02$, remains positive across the intermediate noise levels, and is still positive at $t=0.90$ and $t=0.96$. The randomly sampled evaluation improves by $0.12\%$. Thus, although the denoising differences are modest, their direction is consistent across the diffusion trajectory rather than being driven by a single timestep.

\section{Discussion}
\label{app:discussion}

Scaling has long been an important axis in diffusion-model development.
Diffusion Transformers improve systematically as model capacity and compute
increase: increasing transformer depth, width, or the number of input tokens can
lead to consistent gains in generation quality \citep{peebles2023scalable}.
More recent work has further studied scaling laws for diffusion models across
model size, data, compute, and pretraining performance
\citep{liang2024scaling,li2024scalability}. These results motivate a related
question: whether the relative effectiveness of optimization strategies may also
change across model and training scales.

This question is particularly relevant when comparing our results with the recent
controlled benchmark of diffusion optimizers in \citet{bolatov2026optimizers}.
At the end of the benchmark budget, \textsc{Schedule-Free} ranks last on the primary
metric for text8, QM9, and CelebA-64. On LM1B, Lion diverges on two of three
seeds, while \textsc{Schedule-Free} finishes last among the stable runs. This aggregate
result, however, hides several important differences. On QM9, Schedule-Free is
last in perplexity but first in Fr\'{e}chet ChemNet Distance. The experimental
regimes also differ substantially from ours. Three of the four tasks in the
benchmark use discrete diffusion---masked diffusion on text8, uniform diffusion
on QM9, and uniform diffusion trained through Gaussian duality on LM1B---while
the only Gaussian image-diffusion experiment uses a $78.7$M-parameter U-Net on
CelebA-64. The other models contain $16.6$M, $90.2$M,
and $137.1$M parameters for text8, QM9, and LM1B, respectively. In addition,
the text8 and LM1B runs use 30k training steps rather than the 1M-step budgets
used in the corresponding published studies, making the reported LM1B ordering
provisional. By contrast, our main experiments train a $675$M-parameter SiT-XL/2
Diffusion Transformer on ImageNet-1K at $256\times256$ resolution using flow
matching and REPA \citep{ma2024sit,yu2025repa}.

The optimizer dynamics also differ between the two studies. In the benchmark,
    all competing non-\textsc{Schedule-Free} optimizers are paired with a late-stage
learning-rate decay, whereas \textsc{Schedule-Free} retains a constant base learning rate
after warmup. On text8, \textsc{Schedule-Free} performs strongly during intermediate
training but is overtaken during the final decay phase. A similar late-stage
crossover appears on QM9. In our SiT+REPA setting, the ordering is reversed:
explicit decay reduces parameter variability more aggressively, whereas
\textsc{Schedule-Free} preserves more late-stage objective progress and achieves better
final generation quality. This contrast suggests that the relative ranking of
optimization strategies can depend strongly on the surrounding training regime.

At the same time, the available evidence does not identify model size or dataset
size as the cause of this reversal. The two studies differ not only in scale, but
also in diffusion formulation, architecture, training objective, learning-rate
protocol, training budget, and data domain. The current results therefore do not
support attributing the difference in optimizer ordering to any single factor.

More broadly, these observations raise the possibility that optimization
strategies themselves may exhibit scale-dependent empirical behavior. EMA
timescales, decay schedules, and Schedule-Free averaging may interact differently
with training as model size, dataset size, and optimization budget change. A
controlled study that varies model and data scale while independently retuning
each optimization strategy would be needed to determine whether optimizer
rankings follow a systematic scaling pattern. We leave this question for future
work.

\section{Additional Qualitative Results}
\label{app:qualitative}
This section complements the quantitative evaluations with samples from the
three training regimes studied in the paper. Section~\ref{app:sf_samples} shows
the main Schedule-Free pretraining result, Section~\ref{app:refl_additional_qualitative}
compares ReFL optimizers under fixed prompts, and
Section~\ref{app:dmd_sf_samples} shows samples from the generator-side
Schedule-Free DMD experiment.

\subsection{Main Pretraining: \textsc{Schedule-Free AdamW}}
\label{app:sf_samples}
Figure~\ref{fig:sf_800k_samples} provides representative class-conditional
samples from the 800k-step Schedule-Free checkpoint evaluated quantitatively in
Section~3.3. The grid spans animals, people, vehicles, objects, and natural or
built scenes rather than selecting a single semantic category. These samples
serve as a qualitative complement to the FID-50K, sFID, and Inception Score
reported in the main text.

\clearpage
\begin{figure}[t]
\centering
\includegraphics[width=\linewidth]{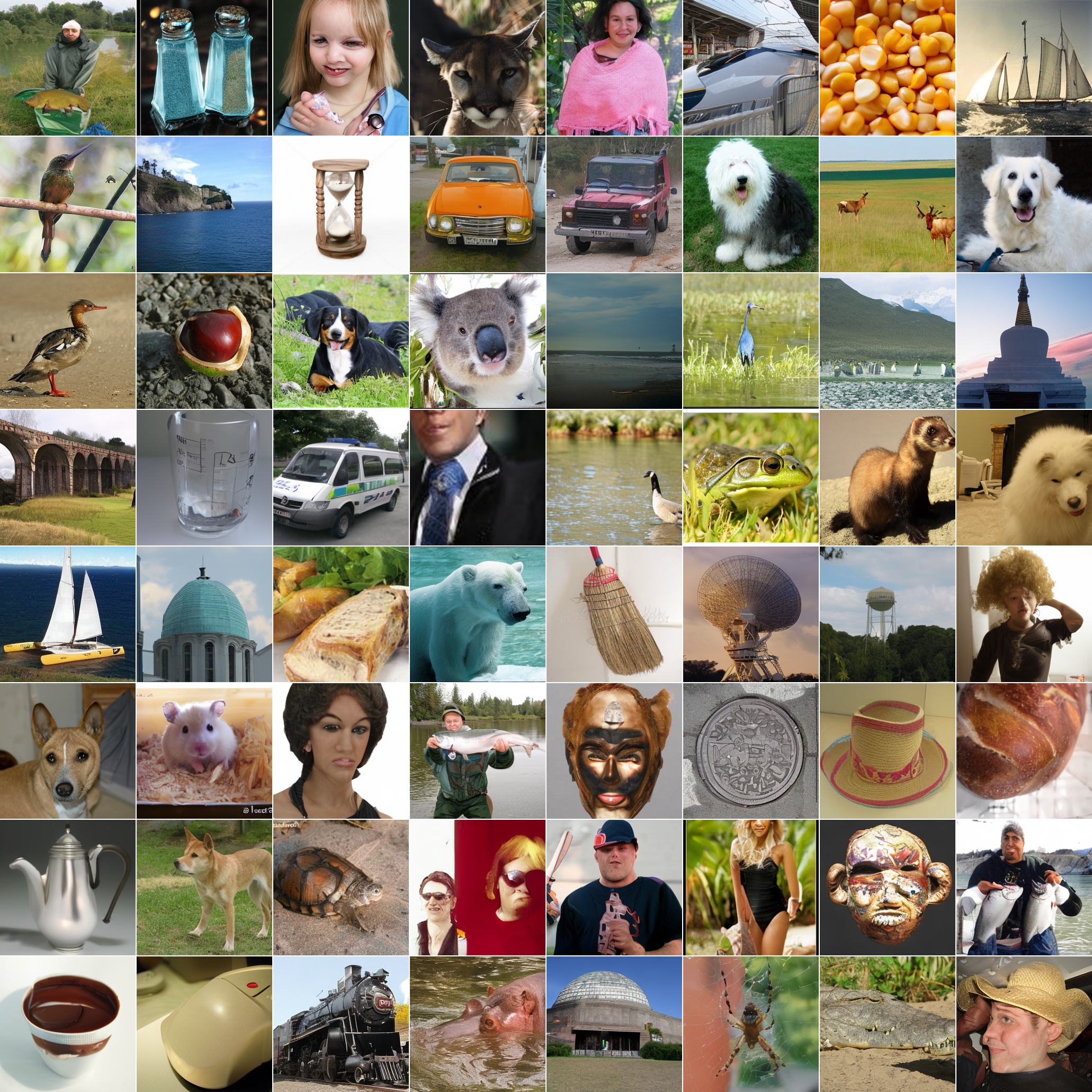}
\caption{Selected class-conditional ImageNet-1K samples from the REPA-trained SiT-XL/2 model optimized with \textsc{Schedule-Free AdamW} at $4\times$ the baseline learning rate after 800k training steps. Images are generated at $256\times256$ resolution.}
\label{fig:sf_800k_samples}
\end{figure}

\clearpage
\subsection{ReFL Qualitative Comparisons}
\label{app:refl_additional_qualitative}

We provide additional fixed-prompt comparisons for ReFL. In every figure, the rows show, from top to bottom, the Z-Image base model without reinforcement learning, AdamW at $1\times$ the reference learning rate with HPSv3, Schedule-Free AdamW at $4\times$, and Schedule-Free AdamW at $20\times$. Each column uses the prompt printed above it.

\begin{figure}[t]
\centering
\includegraphics[width=\linewidth]{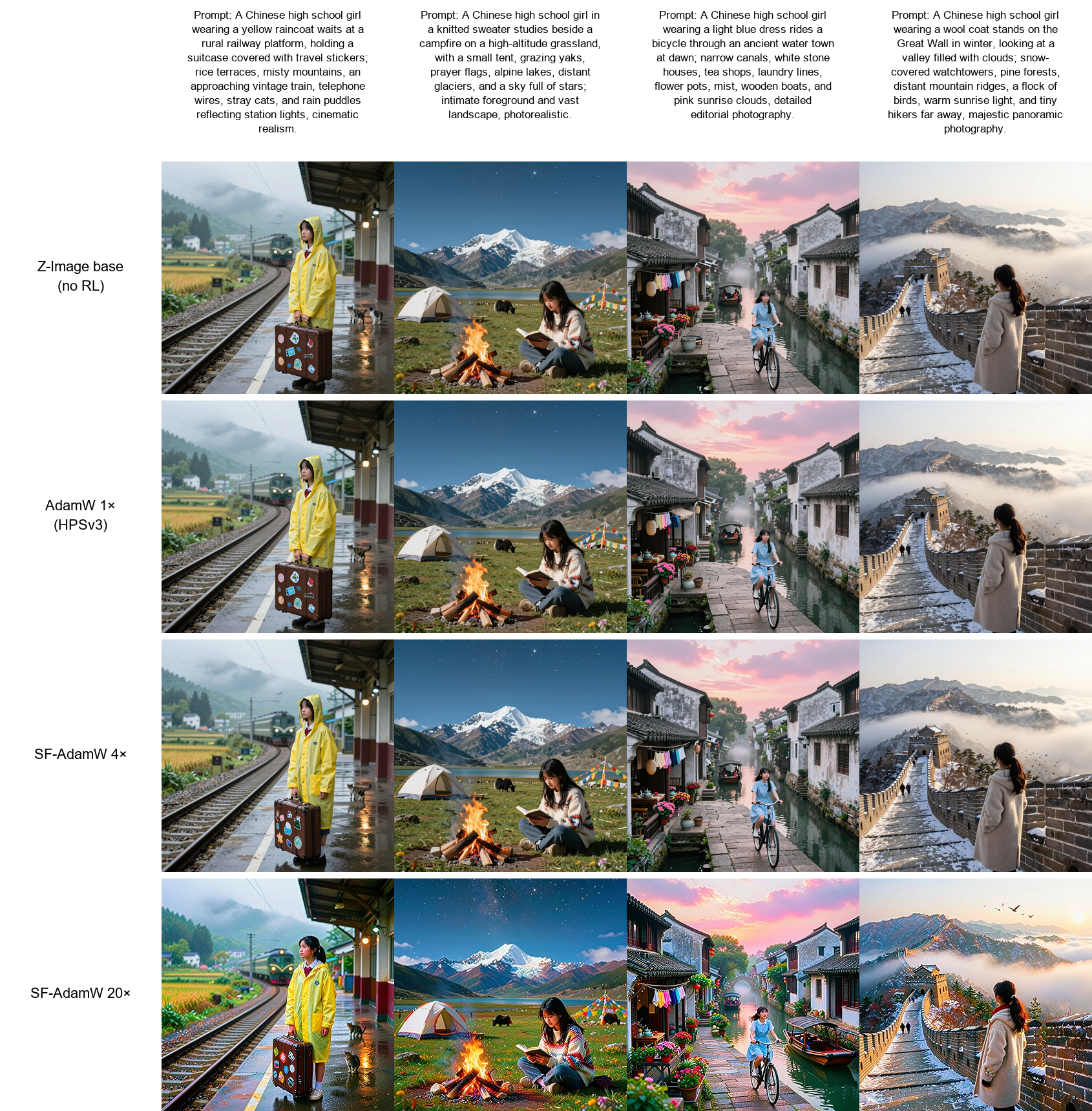}
\caption{Additional fixed-prompt ReFL qualitative comparison (prompt set 1). The rows compare the base model, AdamW, and Schedule-Free AdamW at two learning-rate scales; the complete prompts are printed above the columns.}
\label{fig:refl_additional_1}
\end{figure}

\clearpage
\begin{figure}[t]
\centering
\includegraphics[width=\linewidth]{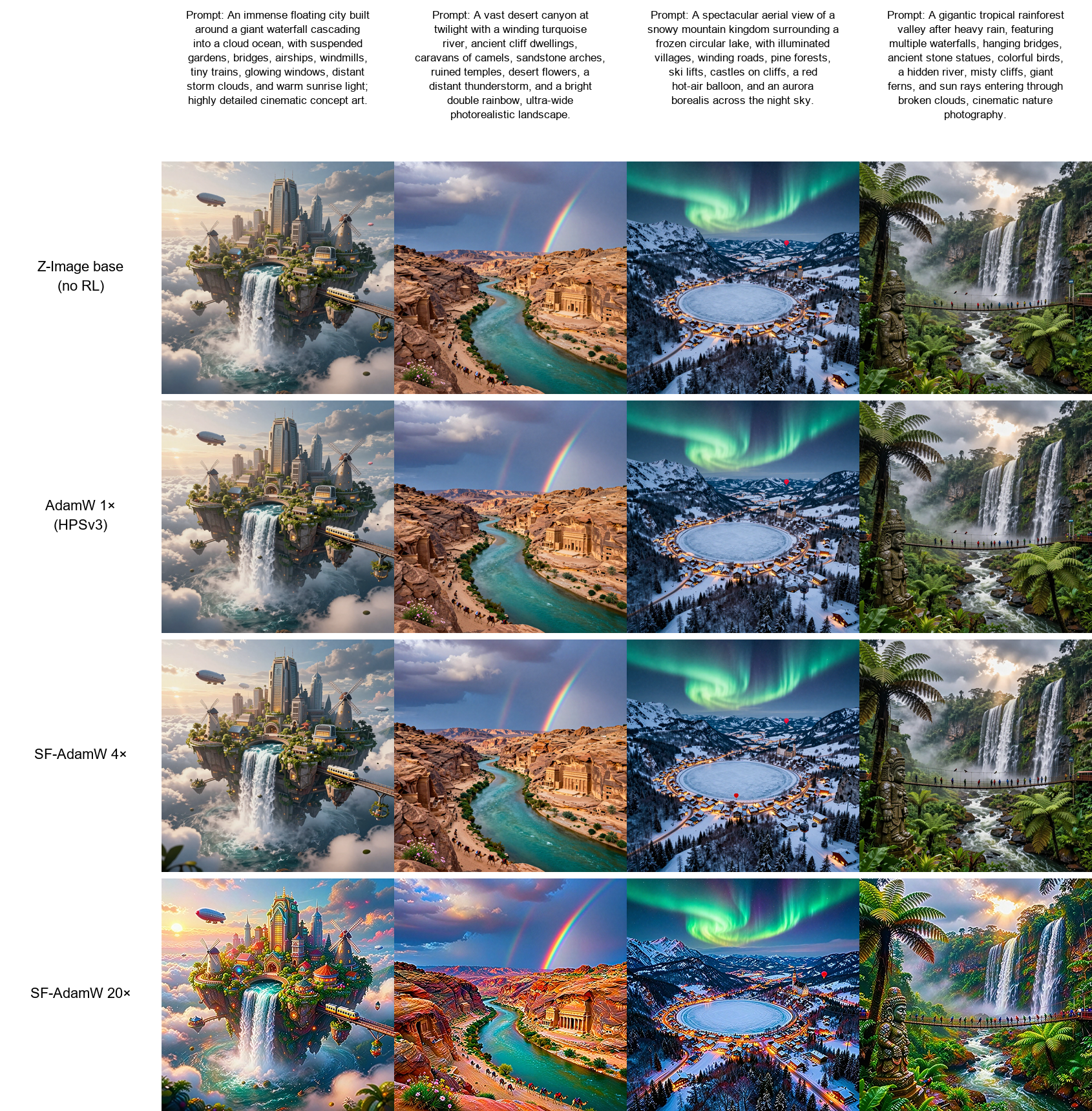}
\caption{Additional fixed-prompt ReFL qualitative comparison (prompt set 2), using the same row ordering as Figure~\ref{fig:refl_additional_1}.}
\label{fig:refl_additional_2}
\end{figure}

\clearpage
\begin{figure}[t]
\centering
\includegraphics[width=\linewidth]{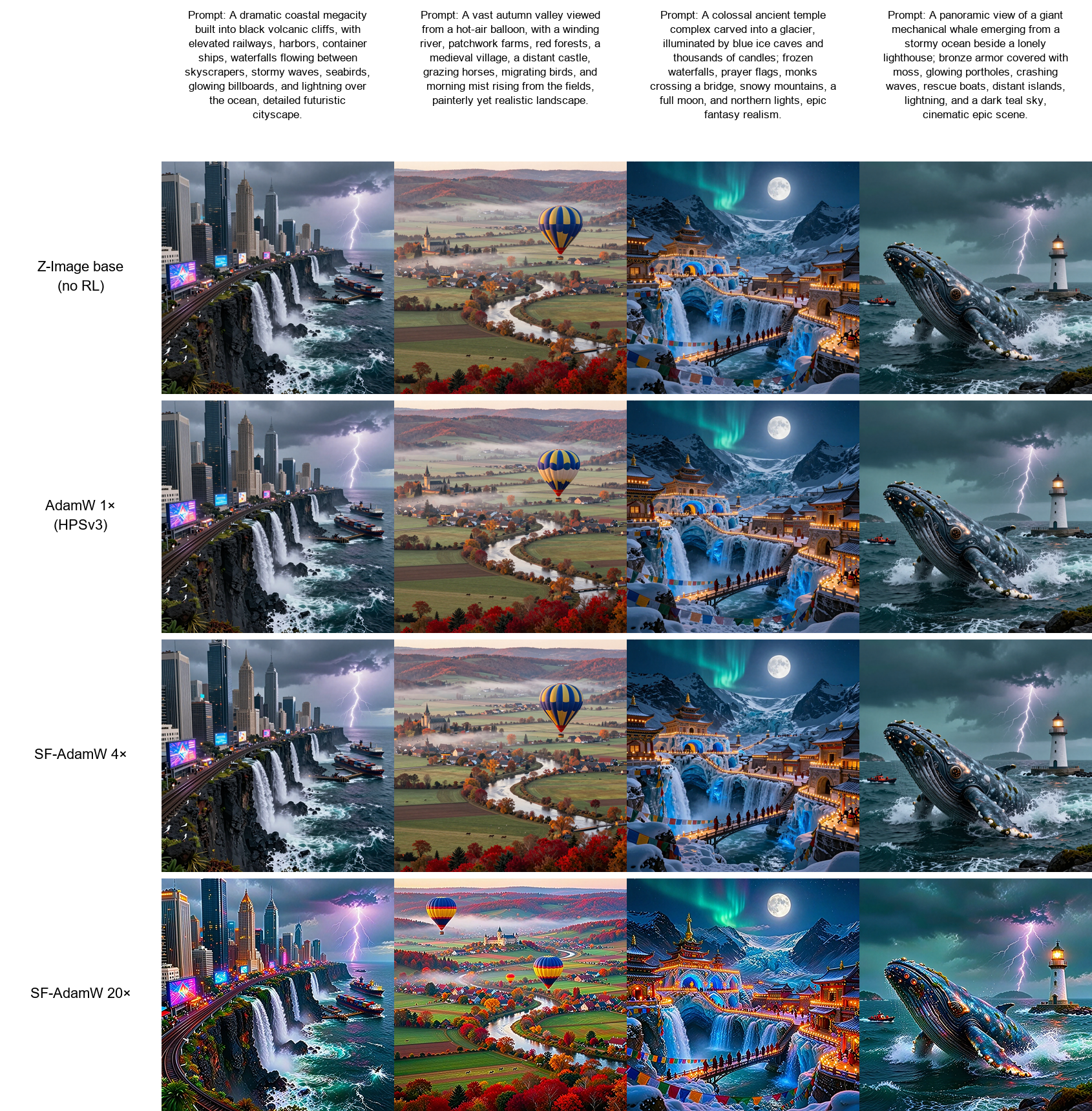}
\caption{Additional fixed-prompt ReFL qualitative comparison (prompt set 3), using the same row ordering as Figure~\ref{fig:refl_additional_1}.}
\label{fig:refl_additional_3}
\end{figure}

\clearpage
\begin{figure}[t]
\centering
\includegraphics[width=\linewidth]{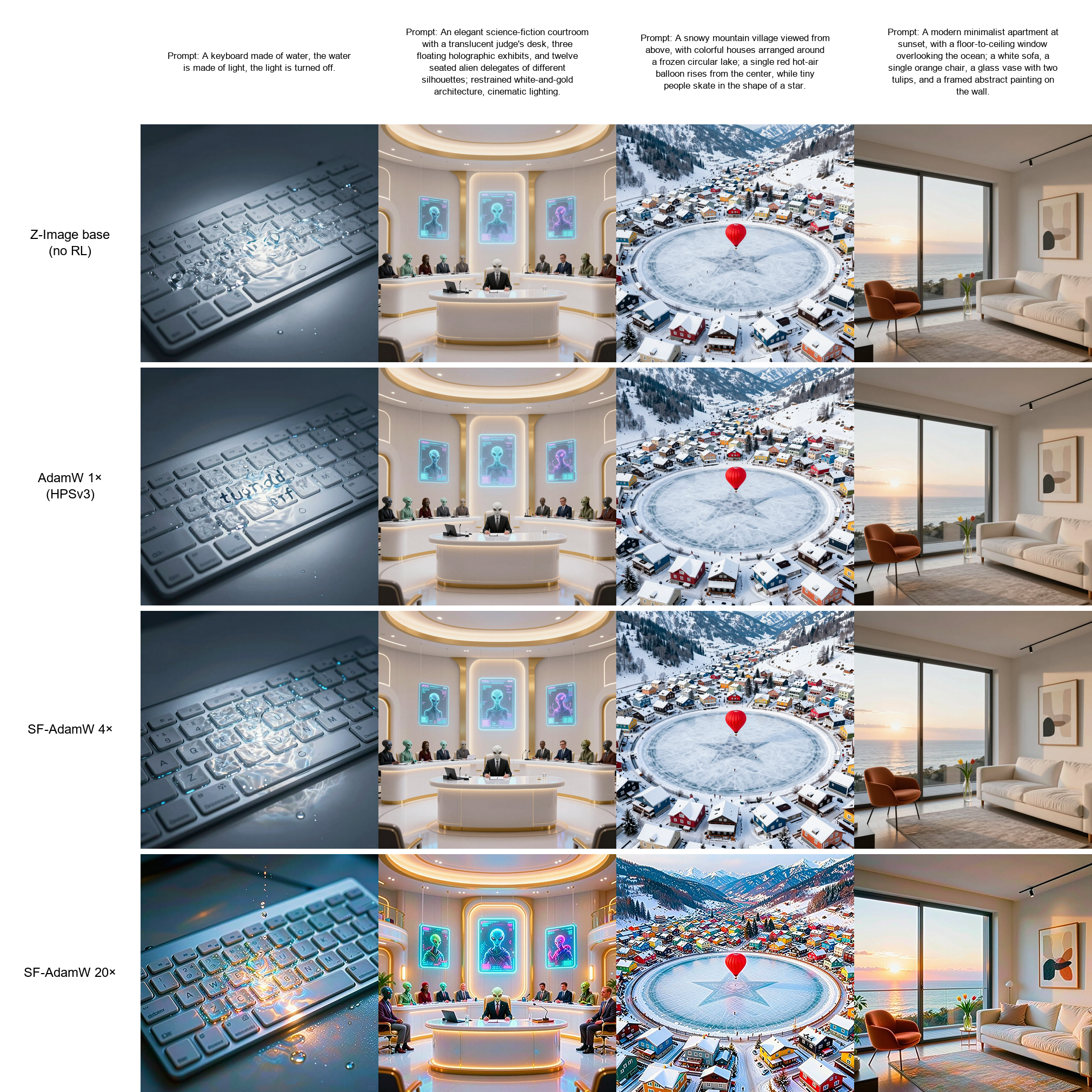}
\caption{Additional fixed-prompt ReFL qualitative comparison (prompt set 4), using the same row ordering as Figure~\ref{fig:refl_additional_1}.}
\label{fig:refl_additional_4}
\end{figure}

\clearpage
\subsection{DMD: \textsc{Schedule-Free} Generator Samples}
\label{app:dmd_sf_samples}

Figure~\ref{fig:dmd_sf_selected_grid} visualizes the best-performing variant
in the DMD optimizer ablation: Schedule-Free AdamW is applied to the generator,
while the guidance model retains the baseline AdamW optimizer. At 3k generator
steps, this configuration achieves the lowest FID-5K among the three variants
in Figure~\ref{fig:dmd_combined} and Table~\ref{tab:dmd}. The samples therefore
provide a qualitative view of the generator-side Schedule-Free improvement
discussed in Section~3.5; the complete optimizer settings are given in
Appendix~\ref{app:dmd}.

\begin{figure}[t]
\centering
\includegraphics[width=\linewidth]{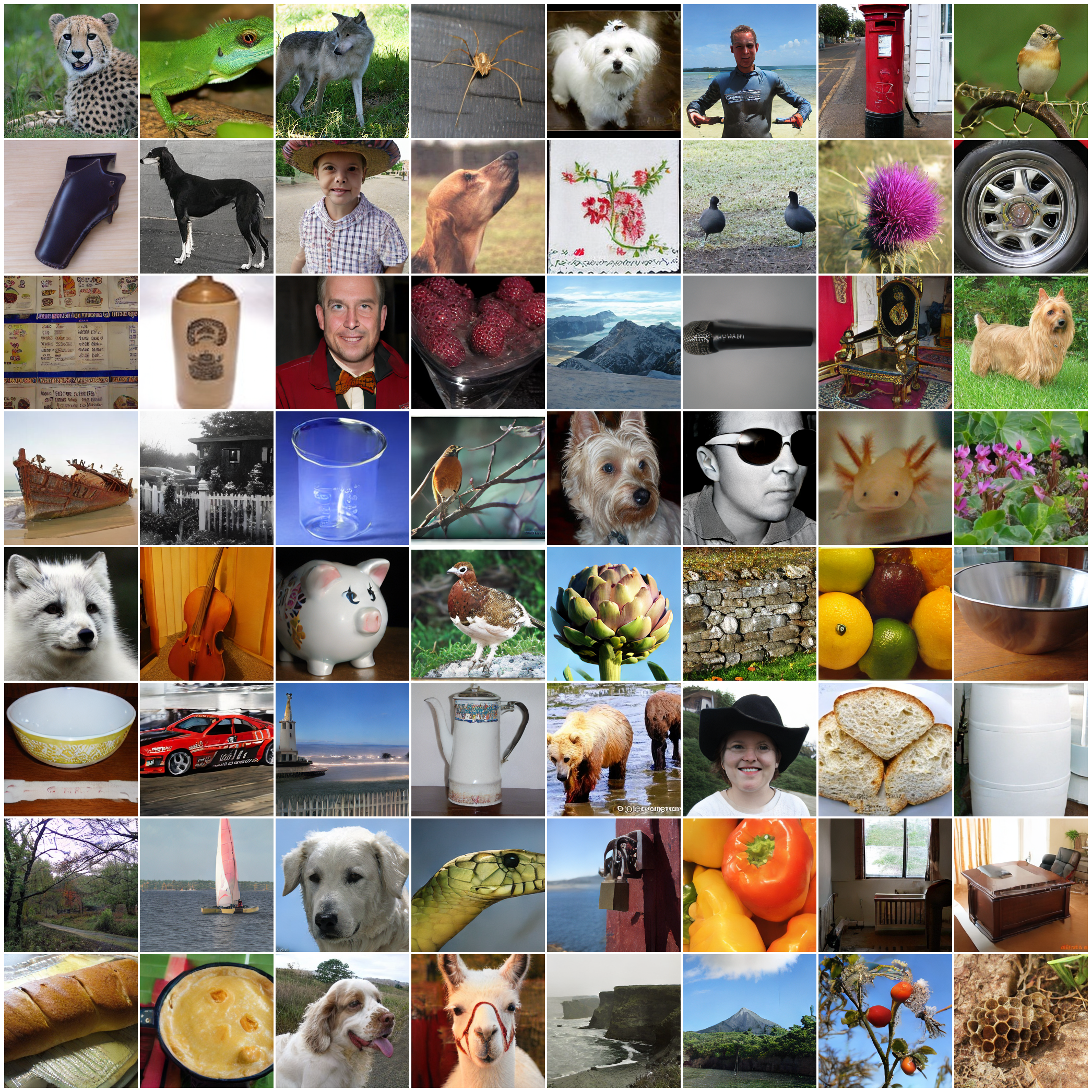}
\caption{Selected class-conditional ImageNet samples from the best-performing DMD optimizer variant after 3k generator steps. \textsc{Schedule-Free AdamW} is applied to the generator, while the guidance model retains AdamW. The figure shows an $8\times8$ grid of samples at $256\times256$ resolution.}
\label{fig:dmd_sf_selected_grid}
\end{figure}

\end{document}